\newtheorem{theorem}{Theorem}[subsection]
\newtheorem{definition}{Definition}[subsection]
\newtheorem{assumption}{Assumption}
\newtheorem{corollary}{Corollary}[subsection]
\newtheorem{lemma}{Lemma}[subsection]
\newtheorem{example}{Example}
\newcommand{\loc}{\ensuremath{\mathcal{L}}}
\newcommand{\val}{\ensuremath{\varphi}} 
\newcommand{\vshapley}{\ensuremath{\varphi^{\textnormal{Shapley}}}}
\newcommand{\vbanzhaf}{\ensuremath{\varphi^{\textnormal{Banzhaf}}}}
\newcommand{\vloo}{\ensuremath{\varphi^{\textnormal{LOO}}}}
\newcommand{\validation}{\ensuremath{\textnormal{val}}} 
\newcommand{\mc}{\ensuremath{\mathit{MC}}}
\newcommand{\weight}[2]{\ensuremath{w_{#1,#2}}}
\newcommand{\cv}{\ensuremath{v}}
\newcommand{\coalition}{\ensuremath{\mathcal{C}}}  
\renewcommand{\S}{\ensuremath{\mathcal{C}}}
\newcommand{\model}{\ensuremath{\mathcal{M}}}
\begin{document}

\title{Replication Robust Payoff Allocation in Submodular Cooperative Games}

\author{Dongge Han, Michael Wooldridge, Alex Rogers, Olga Ohrimenko, Sebastian Tschiatschek
	\thanks{Manuscript received Jan 15, 2022.}
	\thanks{Dongge Han, Michael Wooldridge and Alex Rogers are with the University of Oxford, OX1 3QD, Oxford, U.K.  (e-mail: dongge.han@cs.ox.ac.uk, michael.wooldridge@cs.ox.ac.uk, alex.rogers@cs.ox.ac.uk).}
	\thanks{Olga Ohrimenko is with the University of Melbourne, Victoria 3010, Australia (e-mail: oohrimenko@unimelb.edu.au).}
	\thanks{Sebastian
Tschiatschek is with University of Vienna, Währinger Straße 29 1090 Wien, Austria (e-mail: sebastian.tschiatschek@univie.ac.at).}}

\markboth{Journal of IEEE Transactions on Artificial Intelligence, Vol. 00, No. 0, Month 2022}
{Dongge Han \MakeLowercase{\textit{et al.}}: Replication Robust Payoff Allocation in Submodular Cooperative Games}

\maketitle

\begin{abstract}
Submodular functions have been a powerful mathematical model for a wide range of real-world applications. Recently, submodular functions are becoming increasingly important in machine learning (ML) for modelling notions such as information and redundancy among entities such as data and features. Among these applications, a key question is payoff allocation, i.e., how to evaluate the importance of each entity towards the collective objective? To this end, classic solution concepts from cooperative game theory offer principled approaches to payoff allocation. However, despite the extensive body of game-theoretic literature, payoff allocation in submodular games are relatively under-researched. In particular, an important notion that arises in the emerging submodular applications is redundancy, which may occur from various sources such as abundant data or malicious manipulations where a player replicates its resource and act under multiple identities. Though many game-theoretic solution concepts can be directly used in submodular games, naively applying them for payoff allocation in these settings may incur robustness issues against replication.
In this paper, we systematically study the replication manipulation in submodular games and investigate \emph{replication robustness}, a metric that quantitatively measures the robustness of solution concepts against replication. Using this metric, we present conditions which theoretically characterise the robustness of semivalues, a wide family of solution concepts including the Shapley and Banzhaf value. Moreover, we empirically validate our theoretical results on an emerging submodular ML application, i.e., the ML data market.
\end{abstract}

\begin{IEEEImpStatement}
With the increasing take-up of ML techniques in real-world settings, payoff allocation has significant impacts towards fairness, trustworthiness, safety, and knowledge discovery in ML applications, e.g., performing analysis or debugging of ML systems by finding the key contributors or bottleneck entities. Many emerging ML applications exhibit submodular characteristics, while properties of classic game-theoretic payoff allocation on submodular games are under-researched. This paper investigated an important issue of redundancy arising from replication in the submodular ML applications. Using the replication robustness metric, we provide theoretical guarantees for the robustness of common game-theoretic payoff allocation methods against replication. Our findings can guide the use of game-theoretic payoff allocation in submodular ML applications, and impact real-world applications and future research on payoff allocation in ML systems in general, such as fair compensation in multi-party ML systems and feature importance interpretation in the medical domains.
\end{IEEEImpStatement}

\begin{IEEEkeywords}
	Cooperative Game Theory, Submodularity, Semivalue, Shapley value, Banzhaf value
\end{IEEEkeywords}

\section{Introduction}\label{market:intro}

Submodularity has long been an important topic in mathematics, operations research, economics and optimisation. Submodular functions~\cite{schrijver2003combinatorial} exhibit the natural property of \emph{diminishing returns}. Informally, given a ground set of elements (e.g., physical entities such as sensors, goods, or digital entities such as data, features), the marginal contribution of a single element when added to a set of elements diminishes with the increasing size of the set. This property frequently occurs in real-world settings, making submodular functions a powerful mathematical model for a wide range of applications, such as cooperative cost allocations~\cite{goemans2004cooperative}, sensor placement~\cite{krause2008near} and facility location problems (FLP)~\cite{cornuejols1977uncapacitated}. Recently in the field of machine learning (ML), submodular functions are becoming increasingly important as they naturally model notions of \emph{information, diversity and redundancy}~\cite{bilmes2017deep}. In these classic and emerging applications, a key question is \emph{how to evaluate the importance of each entity towards the collective objective, i.e., payoff allocation?} On the one hand, in cooperative settings, importance evaluations can enable fair allocation of the collective reward towards each member. On the other hand, evaluating the importance of each entity helps to identify crucial insights into the system such as the key contributors or redundant entities. An example use case is ML model interpretation~\cite{ribeiro2016should, lundberg2017unified} -- typically a trained blackbox ML model cannot be interpreted by humans. To interpret the model and ensure it is trustworthy, we can evaluate the importance it gives to each input feature when making a prediction. 

A principled approach to payoff allocation is provided by cooperative game theory~\cite{chalkiadakis2011computational}, which models the entities as players and their interactions (typically) in the form of a characteristic function game $G=(N, \cv)$, where a characteristic function $\cv$ evaluates each possible set of players. Under this formulation, the most popular game-theoretic solution concept is the Shapley value~\cite{Shapley+2016+307+318}, which allocates the payoff to each player as a weighted average of its' marginal contributions towards all possible sets of other players, 
and has been widely applied in network centrality~\cite{aadithya2010efficient}, ML interpretation~\cite{lundberg2017unified}, data valuation~\cite{jia2019towards,agarwal2019marketplace}, etc. 
Despite the extensive body of game-theoretic literature, 
submodular games (i.e., games with submodular characteristic functions) are relatively under-explored, a setting where players may not be incentivised to cooperate and form a grand coalition. Nevertheless, with the ever-grown interest in ML applications, the above setting becomes increasingly common than ever and lead to an urgent need to study payoff allocation in submodular games. In fact, many problems in ML are submodular by nature, and players form a grand coalition inherently (e.g., among passive entities such as data and features) or according to rules which require the cooperation among players. For example, consider multiple hospitals collaboratively training an ML model by pooling their medical images, the hospitals will agree to cooperate and form the grand coalition in order to train a better prediction model, even though a player may be less useful in terms of marginal contributions with increasing data.

Closely related to the submodular games is the notion of \emph{redundancy}~\cite{bilmes2017deep}. On the one hand, redundancy may come from a benign source, e.g., abundant data typically carry partially redundant information and yields diminishing returns. This motivates important problems such as data selection~\cite{wei2015submodularity, kirchhoff2014submodularity}, feature selection~\cite{das2012selecting} and data summarisation~\cite{lin2011class}. On the other hand, redundancy may arise as a result of malicious manipulations, e.g., replication manipulation, where a malicious player may replicate its resource and act under multiple false identities. In both the malicious and benign cases, redundancy often does not bring significant additional value to the collective objective, but may have substantial impact on the payoff allocation. 
Though many common game-theoretic solution concepts can be directly applied to the emerging submodular ML applications, there is no theoretical guarantees for these solution concepts in terms of redundancy. Consequently, naively applying them for payoff allocation in these settings may incur robustness issues such as incentivizing the aforementioned replication manipulation. 

In this paper, we systematically study the replication manipulation in submodular games and investigate \emph{replication robustness}, a metric which quantitatively measures the robustness of solution concepts against replication manipulations. Using this metric, we present conditions which theoretically characterise the robustness of semivalues~\cite{dubey1981value}, a wide family of Shapley-like solution concepts including the Shapley value and the Banzhaf value~\cite{lehrer1988axiomatization}. Though we model the redundancy from the perspective of malicious manipulations, the theoretical framework can also be extended to study redundancy that occur under the benign cases, for example, for promoting diversity among features in ML feature subset selections, or encourage diverse behaviours among robotic agents in multiagent reinforcement learning.

The outline of our paper is as follows:
In Section~\ref{market:theory} we first define submodular games, the replication manipulation and replication robustness. To illustrate the effect of redundancy, we look at a classic submodular problem -- the facility location problem. In Section~\ref{sec:k=1}, we compare the replication robustness of the Shapley value and the Banzhaf value when a malicious player replicates its resource and acts as two identities. In Section~\ref{sec:k>1}, we extend our theoretical results to general semivalues and an arbitrary number of replications, and we present a necessary and sufficient condition which characterises the replication robustness of general semivalues. 
Finally in Section~\ref{market:application}, we apply our theoretical results to an emerging ML application -- the ML data market~\cite{ohrimenko2019collaborative, agarwal2019marketplace}, and empirically validate our theoretical results of replication robustness across various solution concepts.

\section{Background}
In this section, we introduce our notation and concepts from cooperative game theory~\cite{chalkiadakis2011computational}.

\textbf{Cooperative Games.}
Formally, a cooperative game with transferable utility (hereafter simply a \emph{cooperative game}) is given by a tuple $G = (N, v)$, where $N = \{1,\ldots,n\}$ is the set of players of the game
and $v\colon 2^N\rightarrow{\mathbb{R}}$ is a \emph{characteristic function}, which assigns a real value 
$v(\S)$ to every subset of players $\S \subseteq N$, referred to as \emph{coalitions}. The \emph{grand coalition} is the set N of all players.  
For clarity, we will introduce the general definition of semivalues~\cite{dubey1981value} in Section~\ref{subsec:zc}. Before this, we introduce here the concept of marginal contribution and some common semivalues. Intuitively, the \emph{marginal contribution} of a player to coalition $\S$ is
the difference that this player makes towards $\S$ before and after
joining it, i.e., $\mc_i(\S) \coloneqq v(\S\cup\{i\}) - v(\S)\nonumber$.

\textbf{Solution Concepts.}
A solution concept~\cite{chalkiadakis2011computational} describes the outcome of a cooperative game, i.e., the partition of players into coalitions, and a payoff function which assigns a payoff $\val_i(N,v) \in \mathbb{R}$ to each player $i$. 
As discussed in the introduction, we focus on payoff allocations in the emerging ML settings where the players form the grand coalition inherently. Therefore, we will refer to the solution concepts as the payoff allocation with respect to the grand coalition. 

The following is a collection of properties which are commonly used to axiomatize solution concepts~\cite{chalkiadakis2011computational}.
\begin{enumerate}[label=\textbf{(A\arabic*)}]
    \itemsep 0mm
	\item \label{ax:symmetry} \emph{Symmetry}: Two players $i$ and $j$ who have the same marginal contribution in any coalition have the same payoff, i.e.,
	 $(\forall \S \subseteq N\setminus{\{i,j\}}\colon v(\S\cup\{i\}) = v(\S\cup\{j\})) \rightarrow \val_i(N,v) = \val_j(N,v)$.
	\item\label{ax:efficiency} \emph{Efficiency}: The payoff values of all players sum to $v(N)$, i.e., $v(N) = \sum_{i \in N} \val_i(N,v)$.
	\item \label{ax:nullplayer} \emph{Null-player}: a player whose marginal contribution is zero in any coalition has zero payoff, i.e., 
	$(\forall \S \subseteq N\colon v(\S\cup\{i\}) = v(\S)) \rightarrow \val_i(N,v) = 0$.
	\item \label{ax:linearity} \emph{Linearity}: Given two cooperative games $G^1= (N, v^1)$ and $G^2 = (N, v^2)$, then for any player $i\in N$, $\varphi_i(N,v^1+v^2) = \varphi_i(N,v^1) + \varphi_i(N,v^2)$.
	\item \label{ax:twoeff} \emph{2-Efficiency}~\cite{lehrer1988axiomatization}:
    $\val_i(N,v) + \val_j(N,v) = \val_{p_{ij}}(N', v')$ characterises neutrality of collusion,
    where $\val_{p_{ij}}(N',v')$ is player $p_{ij}$'s payoff in a game in which players $i$ and $j$ merged as a single player $p_{ij}$, i.e., $N' = N\setminus\{i,j\}\cup\{p_{ij}\}$. 
\end{enumerate}

Next, we review three common semivalues.
\begin{itemize}[leftmargin=*]
  \item The {\it Shapley Value}~\cite{Shapley+2016+307+318} is the most common solution concept, defined as the weighted average marginal contributions of a player towards coalitions of other players, and the unique value that satisfies \ref{ax:symmetry}-\ref{ax:linearity}:
\begin{equation*}
\vshapley_i = \sum_{S\subseteq N \setminus \{i\}} \frac{|\S|!(|N|-|\S| -1)!}{|N|!} \mc_i(\S) 
\end{equation*}

  \item The {\it Banzhaf Value}~\cite{lehrer1988axiomatization} is commonly used as a measure for voting power, which is defined by the average marginal contribution of a player towards all coalitions of other players, uniquely characterized by axioms \ref{ax:symmetry}, \ref{ax:nullplayer}-\ref{ax:twoeff}:
\begin{equation*}
  \vbanzhaf_i = \frac{1}{2^{|N|-1}}\sum_{\S\subseteq N \setminus \{i\}} \mc_i(\S)
\end{equation*}

\item{\it Leave-one-out (LOO)}
assigns to each player its marginal contribution towards the coalition of all other players:\begin{equation*}
  \vloo_i = \mc_i(N\setminus\{i\}).
 \end{equation*}
\end{itemize}

\section{Submodular Games and Replication}\label{market:theory}
We now introduce submodular functions and how they can be used as the characteristic functions in cooperative games. To illustrate, we show an example class of submodular games defined by a classic submodular function -- the facility location function. We will also use this example to validate our theoretical findings in Section~\ref{subsec:facility_location_robustness}. Following the definition of submodular games, we will show how replication manipulations can be performed, and define the criteria which evaluates the robustness of solution concepts against replication. 

\subsection{Submodular Games}
The following property lists three equivalent definitions of submodular set functions (aka submodular functions)~\cite{schrijver2003combinatorial}:
\begin{definition}[Submodular Set Functions]\label{def:submodularity}
  Let $N$ be a finite set, a submodular function is a set function $f:2^{N}\rightarrow \mathbb{R}$, where $2^N$ denotes the power set of $N$ , which satisfies one of the following equivalent conditions:
  \begin{itemize}
      \item $\forall X, Y \subseteq N$ with $X \subseteq Y$ and $\forall x\in N \setminus Y$, we have $ f(X\cup \{x\})-f(X)\geq f(Y\cup \{x\})-f(Y)$.
      \item $\forall S, T \subseteq N$, we have $f(S)+f(T)\geq f(S\cup T)+f(S\cap T)$.
      \item $\forall X\subseteq N$ and $x_{1},x_{2}\in N\backslash X$ such that $x_{1}\neq x_{2}$, we have $f(X\cup \{x_{1}\})+f(X\cup \{x_{2}\})\geq f(X\cup \{x_{1},x_{2}\})+f(X)$.
  \end{itemize}
\end{definition}

The first one of the equivalent conditions demonstrates diminishing returns, i.e., the marginal value of an entity towards a set decreases as the set grows. Due to its natural relation to the marginal contributions of cooperative games, we next define submodular games using the first condition. 

\begin{definition}[Submodular Game]
	A characteristic function game $G = (N,v)$ with a finite non-empty set of players $N = \{1, \ldots, n\}$, is a \emph{submodular game} if the characteristic function $\cv$ is submodular, i.e, $\forall \S \subseteq \S' \subseteq N \setminus \{i\}\colon \cv(\S \cup \{i\}) - \cv(\S) \geq \cv(\S' \cup \{i\}) - \cv(\S'). $
\end{definition}

Recall that the difference in value made by a player $i$ by joining a coalition $\coalition$ is denoted as the \emph{marginal contribution} of player $i$ towards coalition $\coalition$, i.e., $\mc_i(\coalition) \coloneqq \cv(\coalition\cup\{i\}) - \cv(\coalition)$. Therefore in a submodular game, the marginal contribution of a player towards a coalition $\coalition$ is no less than its contribution towards a superset $\coalition'$, as summarised in the next assumption.

\begin{assumption}
  \label{assumption:submodularity}
  In a submodular game $G = (N,v)$, the marginal contributions of each player $i\in N$ satisfy
  \begin{equation}
    \forall \S \subseteq \S' \subseteq N \setminus \{i\}\colon \mc_i(\S) \geq \mc_i(\S').\label{eq:submodular_mc}
  \end{equation}
\end{assumption}


\subsection{Motivating Example: Facility Location Problem}\label{app_sub:facility_location}

A classic example in submodular optimisation is the facility location problem (FLP). As an important topic in operations research, an FLP considers the question of how to select a cost-effective subset from a ground set of potential locations for placing new facilities~\cite{bilmes2017deep, Salhi1991DiscreteLT, cornuejols1977uncapacitated, fisher1978analysis}. Here, the facilities can refer to hospitals, plants, docking stations, etc. There exist several different formulations of the FLP~\cite{nemhauser1978analysis, cornuejols1977uncapacitated}, and we will adopt the formulation following~\citet{nemhauser1978analysis}, which consists a set of potential facility sites $\loc$ where new facilities can be opened, a set of customers $D$ to be serviced, and a matrix $U$ which represents utilities of each customer from each facility location (e.g., proximity). The FLP is un-capacitated, i.e, it is always optimal to satisfy the demand of a customer from the open facility which provides them with the highest utility. 
By modelling the FLP as a submodular game, we can evaluate the importance of each facility location by computing their payoff allocations using the common solution concepts. To do this, we can consider the players as the set of facility locations $\loc$, and the characteristic function as the facility location function $Fac(\S)=\sum_{d\in D} \max_{i \in \S} u_{id}$, i.e., the value of each coalition $\S\subseteq \loc$ is the sum of utilities of all customers from the open facilities $i\in\S$.

 \begin{example}[Facility Location Game]\label{example:facility_location_game}
     Let $D$ be a set of customers and $\loc$ a set of facility locations. Define a utility function $u: \loc\times D \rightarrow \mathbb{R}_+$, represented by a matrix $U \in \mathbb{R}_+^{|\loc| \times d}$, where each entry $u_{id} \in U$ is the utility of customer $d$ for facility location $i$.
     A facility location game is defined as $G = (\loc,\cv)$, where the players $\loc$ are facility locations and the characteristic function is the facility location function, i.e., $\forall \S\subseteq\loc, \cv(\S) = Fac(\S)=\sum_{d\in D} \max_{i \in \S} u_{id}$.
\end{example}

\begin{figure}[t]
	\centering
	\includegraphics[width=0.8\linewidth]{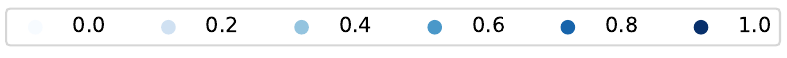}
	\vfill
	\subfloat[The Shapley Value]{\includegraphics[width=0.5\linewidth]{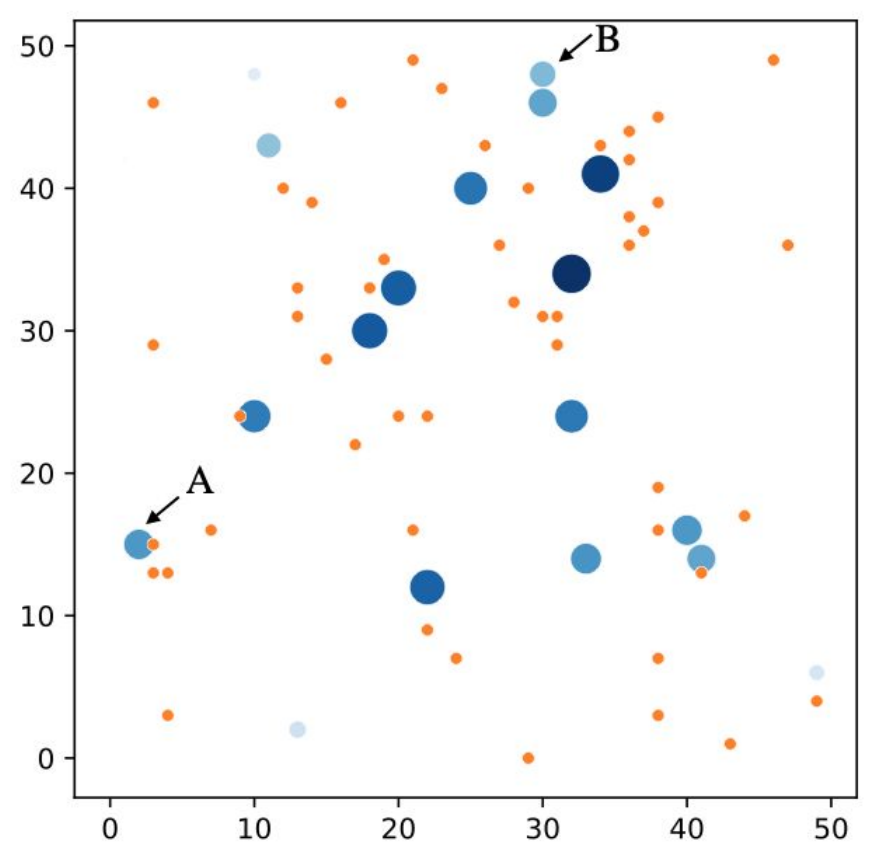}\label{subfig:facility_shapley}}%
	\hfill
	\subfloat[The Banzhaf value]{\includegraphics[width=0.5\linewidth]{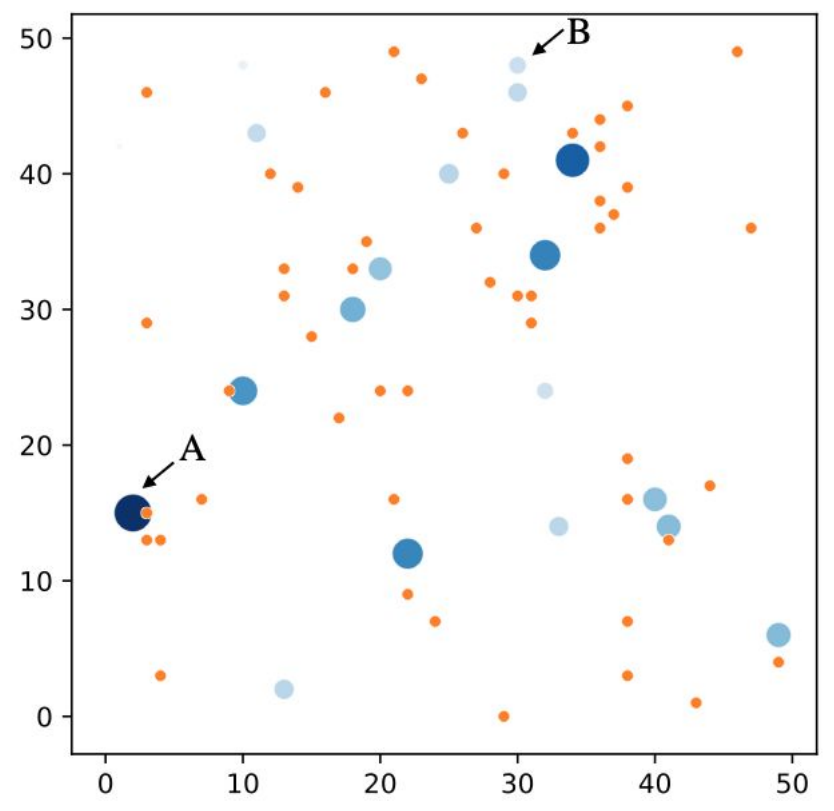}\label{subfig:facility_banzhaf}}%
	\caption{The Shapley value and Banzhaf value for the Facility Location Game. The figures show a 50x50 map, where orange dots are 50 customers. Blue dots refer to 20 facility locations, with larger and darker dots as larger Shapley/Banzhaf values, which are normalised between [0,1] for a clear comparison. }
	\label{fig:facility_location_results}
\end{figure}

Fig.~\ref{fig:facility_location_results} illustrates the Shapley and Banzhaf value on an example facility location game with $|\loc|=20$ facility locations (blue), and $|D| =50$ customers (orange) randomly placed in a $50\times 50 $ map. The utility of a customer $u_{id} = 100 - (|x_i - x_d| + |y_i-y_d|)$ decreases with the Manhattan distance to the facility.
In comparison, the locations with higher Shapley value typically has a larger number of nearby customers, while a location with a higher Banzhaf value often has a larger number of nearby customers and fewer nearby facility locations, e.g., A is distant from nearby facilities, and ranks higher in terms of the Banzhaf value than the Shapley value, and conversely for B, which has multiple nearby facilities. This example provides an intuitive comparison between the Shapley value and the Banzhaf value against redundancy. We will further investigate the cause of their distinct behaviours in the rest of the paper.


\subsection{Replication Manipulation}\label{sec:solution_concepts_replicate}

As illustrated in the facility game in Fig.~\ref{fig:facility_location_results}, an important notion that commonly arises in submodular settings is \emph{redundancy}, which may occur naturally from abundant resources or from malicious manipulations such as replication. For example, a standard submodular ML problem is data summarisation~\cite{bilmes2017deep, lin2011class}, which aims to find a concise subset to represent the ground set of data, which reduces the redundancy among the data while maintaining the level of diversity. 

In the following definition, we introduce the replication manipulation, where a malicious player replicates its resource (e.g., digital entities such as online identities, data, features) and acts under multiple false identities. Here we model redundancy from the point of view of malicious manipulations, nevertheless, the theoretical framework can also be extended to study redundancy that occur under the benign cases.

\begin{definition}[Replication Manipulation]\label{def:manipulation}
	In a submodular game $G = (N,v)$, a (malicious) player $i$ executes a replication action $k$ times on its resources $D_i$ and acts as $k+1$ players $\S^R =\{i_0, i_1, \ldots, i_{k}\}$ each holding one replica of $D_i$. Denote the induced game as $G^R = (N^R, v^R)$, where the induced set of players are $N^R = N\setminus\{i\} \cup \S^R$, and the induced characteristic function $v^R$ satisfies $\forall \S \subseteq N\setminus \{i\}, \forall i_k\in \S^R\colon v^R(i_k\cup \S)=v(i\cup \S)$ and $v^R(\S)=v(\S)$. By replicating, player $i$ receives a \emph{total payoff} which is the sum of the payoff of all its $k+1$ replicas, i.e.,
	$
	\val_i^{\textnormal{tot}}(k) = \sum_{\kappa=0}^k \val_{i_\kappa}(N^R, v^R)
	$.
\end{definition}

The next assumption captures the fact that adding redundant resources to a coalition typically do not change the value of the coalition (e.g., redundant feature or replicated data). We refer to this property as \emph{replication redundancy} and formalize it in the following assumption: 
\begin{assumption}[Replication Redundancy]
	\label{assumption:replication}
	A replica does not contribute additional value to coalitions which already contain another replica or the original resource:
	\begin{equation*}
	\forall i, j \in \S^R\colon (i \in \S) \rightarrow \mc_j(\S) = 0.
	\end{equation*}
\end{assumption}

Despite the fact that redundant resources do not bring significant additional value to the collective objective, it may have substantial impact on the payoff allocation. For example, a malicious player may be able to gain a higher total payoff by performing the replication manipulation described in Definition~\ref{def:manipulation}.  
The next definition formalizes the notion of replication robustness of solution concepts, i.e., a property that ensures that a player through replication gains a total payoff no more than its original payoff.

\begin{definition}[Replication Robustness]\label{def:replication_robustness} 
	A solution concept $\varphi$ 
	is \emph{replication robust} if the payoff of the replicating player $i$ in the original game $G$ is no less than the total payoff of the player's replicas $\S^R$ in the induced game $G^R$ after replication, i.e.,
	\begin{equation*}
	\val_i(N,v) \geq \sum_{i_\kappa\in \S^R} \val_{i_\kappa}(N^R,v^R).
	\end{equation*}
\end{definition}

To illustrate the condition, consider a malicious player who aims to increase its payoff by performing replication manipulation, a solution concept that is replication robust can then be used to counteract such malicious behaviours. To see an example in the benign case such as feature importance interpretation, adding to a set of features $\coalition$ a feature $f'$ that is redundant to feature $f\in \coalition$ in the set can be considered as a replication manipulation, and a replication robust solution concept will allocate the two redundant features a total value no greater than the value of the feature $f$ on its own.

Having defined the replication manipulation and robustness criteria, we next study the behaviours of the semivalues under replication and their robustness properties.

\section{Replication Robustness of Common Semivalues with $k=1$ Replications}\label{sec:k=1}
To start with, we first take a look at the two most common semivalues, the Shapley value and the Banzhaf value, and study how the total payoff the malicious player changes if the player replicates its resource and splits into two identities.

\subsection{Robustness of the Shapley Value}\label{subsec:k=1_shapley}
The following theorem shows that the Shapley value is not replication robust in submodular games. Specifically, under payoff allocation according to the Shapley value, the malicious player can always obtain a non-negative gain in total payoff by replicating its resource and splitting into two identities. 
\begin{restatable}{theorem}{shapleysinglerobustness}\label{thm:shapleysinglerobustness}
	Let G= (N, \cv) be a submodular game with replication redundant characteristic function $\cv$, a player $i\in N$ replicates and obtains the total payoff as two identities $\S^R = \{i_1, i_2\}$ in the new game $G^R= (N^R, \cv^R)$.  
	By replicating, the changes in total payoff of player $i$ is:
	\begin{equation*}
	\delta \vshapley_i = \sum_{\S\subseteq N \setminus \{i\}} \frac{|\S|!(|N|-|\S| -1)!}{(|N|+1)!} (|N|-2|\S|-1)\mc_i(\S).
	\end{equation*}
	Moreover, the total payoff of player $i$ after replication is no less than its payoff in the original game, i.e, $\delta \vshapley_i  \geq 0.$
\end{restatable}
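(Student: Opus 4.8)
The plan is to split the argument into two parts: deriving the closed form for $\delta\vshapley_i$, and then establishing its nonnegativity.

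First I would compute the total payoff of the two replicas. Since $i_1$ and $i_2$ have identical marginal contributions in any coalition containing neither of them (immediate from the definition of $\cv^R$), they are symmetric players, so by the symmetry of the Shapley value $\vshapley_{i_1}(N^R,\cv^R) = \vshapley_{i_2}(N^R,\cv^R)$ and the total payoff is $2\,\vshapley_{i_1}(N^R,\cv^R)$. I would then expand $\vshapley_{i_1}(N^R,\cv^R)$ as a sum over $\S \subseteq N^R\setminus\{i_1\} = (N\setminus\{i\})\cup\{i_2\}$ and split on whether $i_2\in\S$. Coalitions containing $i_2$ contribute zero by replication redundancy (Assumption~\ref{assumption:replication}), while for $i_2\notin\S$ the definition of $\cv^R$ gives $\mc_{i_1}(\S) = \mc_i(\S)$. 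Using $|N^R| = |N|+1$, this leaves a sum over $\S\subseteq N\setminus\{i\}$ with weights $\frac{|\S|!(|N|-|\S|)!}{(|N|+1)!}$. Subtracting the original Shapley value and factoring $\frac{|\S|!(|N|-|\S|-1)!}{(|N|+1)!}$ out of both terms is then routine factorial algebra yielding the stated coefficient $(|N|-2|\S|-1)$.

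For nonnegativity, write $n = |N|$ and group the sum by coalition size $s = |\S|$, setting $M_s = \sum_{|\S|=s,\,\S\subseteq N\setminus\{i\}}\mc_i(\S)$, so that $\delta\vshapley_i = \sum_{s=0}^{n-1} w_s M_s$ with $w_s = \frac{s!(n-s-1)!}{(n+1)!}(n-2s-1)$. The key observation is that the factorial part of $w_s$ is invariant under $s\mapsto n-1-s$ while the linear factor $n-2s-1$ flips sign, so $w_{n-1-s} = -w_s$; moreover $w_s > 0$ exactly when $s < (n-1)/2$. Pairing each size $s < (n-1)/2$ with $n-1-s$ (the middle size, present only for odd $n$, has $w_s = 0$) collapses the sum into $\sum_{s<(n-1)/2} w_s\,(M_s - M_{n-1-s})$.

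It then remains to show $M_s \geq M_{n-1-s}$ whenever $s < n-1-s$, which I expect to be the main obstacle. Since both $M_s$ and $M_{n-1-s}$ sum over exactly $\binom{n-1}{s}$ coalitions, it suffices to prove the average marginal contribution $\bar m_s = M_s/\binom{n-1}{s}$ is non-increasing in $s$. I would establish this by a double-counting inequality: for every coalition $\S'$ of size $s+1$ and every $x\in\S'$, submodularity (Assumption~\ref{assumption:submodularity}) gives $\mc_i(\S'\setminus\{x\}) \geq \mc_i(\S')$; summing over all pairs $(\S',x)$ counts each size-$s$ coalition $(n-1-s)$ times on the left and each size-$(s+1)$ coalition $(s+1)$ times on the right, so $(n-1-s)M_s \geq (s+1)M_{s+1}$, which rearranges (using $\binom{n-1}{s+1} = \binom{n-1}{s}\frac{n-1-s}{s+1}$) to $\bar m_{s+1}\leq\bar m_s$. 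Monotonicity of $\bar m_s$ then yields $M_s\geq M_{n-1-s}$ for $s<n-1-s$, and since $w_s>0$ on this range every paired term is nonnegative, giving $\delta\vshapley_i\geq 0$.
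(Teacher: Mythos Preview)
Your derivation of the closed form for $\delta\vshapley_i$ is essentially identical to the paper's (Appendix~\ref{appendix:shapleysinglepayoff}): symmetry of the replicas, split on membership of $i_2$, replication redundancy to kill one half, and the factorial algebra.

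For the nonnegativity, your argument is correct but takes a different route from the paper. The paper pairs \emph{individual coalitions}: it builds, for each $c<(|N|-1)/2$, a bipartite graph between size-$c$ and size-$(|N|-1-c)$ coalitions with edges given by set inclusion, observes it is regular, invokes Hall's Marriage Theorem to extract a perfect matching $f$, and then uses submodularity on each matched pair $\S_1\subseteq f(\S_1)$ to conclude $\mc_i(\S_1)-\mc_i(f(\S_1))\ge 0$. You instead aggregate by size and reduce to the monotonicity of the \emph{average} marginal contribution $\bar m_s$, which you establish by a clean double-counting over pairs $(\S',x)$. Your approach is more elementary (no Hall's theorem) and is in fact exactly the argument the paper later develops as Lemma~\ref{lemma:zc_monotone} for the general $k\ge 1$ analysis; so you have effectively short-circuited the $k=1$ proof through the paper's later machinery. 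The paper's matching argument, by contrast, gives a pairing at the level of individual coalitions rather than averages, which is conceptually a bit sharper but not needed here since the binomial symmetry $\binom{|N|-1}{s}=\binom{|N|-1}{|N|-1-s}$ already equalises the counts.
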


\begin{proof}
The derivations for the changes in total payoff is included in Appendix~\ref{appendix:shapleysinglepayoff}. Here we focus on showing that the value is non-negative, i.e., $\delta \vshapley_i\geq 0$. To prove this, we make use of the submodularity property, which compares the marginal contributions of player $i$ towards pairs of coalitions of other players $\S_1 \subseteq \S_2 \subseteq N\setminus\{i\}$. To pair the coalitions, we make two observations on $\delta \vshapley_i$: Given two coalitions  $\S_1,\S_2\subseteq N\setminus\{i\}$ with complementary sizes, i.e., $|\S_1| + |\S_2| =|N\setminus\{i\}|= |N|-1$, 
(1) their weights in $\delta \vshapley_i$ are opposite and adds up to zero,
(2) There are equal number of size $c$ and $|N|-1-c$ coalitions, i.e., $\tbinom{|N|-1}{c} = \tbinom{|N|-1}{|N|-1-c}$.
These suggest that we may find a bijective mapping between the size $c$ coalitions and their size $|N|-1-c$ supersets. 
Formally,  for any coalition size $c < (|N|-1)/2$, we look for a bijective mapping $f$ between coalitions with inclusion relations and of complementary sizes, that is, $f:\{\S_1\subseteq N\setminus{\{i\}}\mid |\S_1|=c\} \mapsto \{\S_2\subseteq N\setminus{\{i\}}\mid |\S_2|=|N|-1-c\}$ such that $\S_1\subseteq f(\S_1)$. The corner case where $c = (|N|-1)/2$ can be omitted as they have zero weight in $\delta\vshapley_i$, i.e., $\frac{c!(|N|-c -1)!}{(|N|+1)!} (|N|-2c-1) = 0.$
To show the existence of the bijective mapping, we model the coalitions and their inclusion relations ($\subseteq \textnormal{and} \supseteq$) by a bipartite graph (An example is shown in Figure~\ref{fig:halls}). For any coalition size $c < (|N|-1)/2$,
define bipartite graph $B_c = (L, R, E)$ where each vertex corresponds to a coalition, i.e., vertices $L = \{\S_1\subseteq N\setminus\{i\} \mid |\S_1| = c\}$ are the size $c$ coalitions, and vertices $R = \{\S_2\subseteq N\setminus\{i\} \mid |\S_2| = |N| -1 - c\}$ are the size $|N|-1-c$ coalitions, and $|L| = |R|$ from observation (2). Denote edges $E$ as the set inclusion relations, that is, $E = \{\{\S_1, \S_2\}\mid \S_1\in L, \S_2\in R, \S_1\subseteq \S_2\}$. 
\begin{figure}
    \centering
    \includegraphics[width=0.9\linewidth]{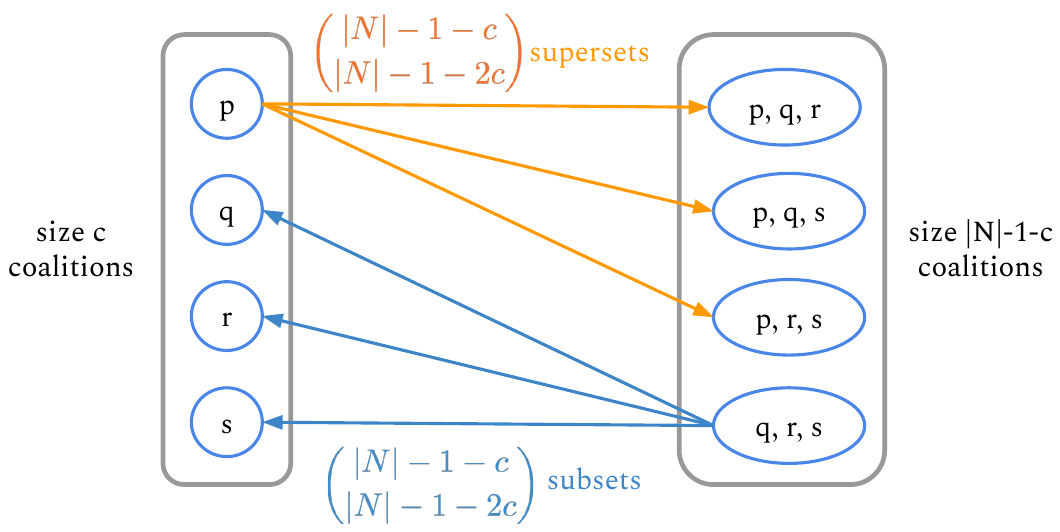}
    \caption{Illustration of the proof for Theorem~\ref{thm:shapleysinglerobustness}. Given an example game with 5 players $N=\{i,p,q,r,s\}$, we match the coalitions (excluding the target player $i$) of size-$c$ and size-$(|N|-1-c)$.  (here $c=1$ and $|N|-1-c=3$). Specifically, each size-$c$ coalition in $L$ (left) has $\binom{|N|-c-1}{|N|-2c-1}$ supersets in $R$ (right),  each size-$|N|-c-1$ coalition in (R) has $\binom{|N|-c-1}{|N|-2c-1}$ subsets of size-$c$. Arrows indicate the set inclusion relations.}
    \label{fig:halls}
\end{figure}
The graph is $k$-regular where every vertex has the same degree $k=\binom{|N|-1-c}{|N|-1-2c}$. To see this, we first show that each coalition $\S_1\in L$ has $\binom{|N|-1-c}{|N|-1-2c}$ supersets in $R$. To find a size $|N|-1-c$ coalition $\S_2\in R$ that is a superset of $\S_1$, we can add $|N|-1-2c$ players by choosing from the remaining $|N|-1-c$ players, i.e., $N\setminus\{i\}\setminus\{\S_1\}$. Therefore, there are $\binom{|N|-1-c}{|N|-1-2c}$ choices and hence the same number of supersets. Similarly, we can show that each coalition $\S_2\in R$ has $\binom{|N|-1-c}{|N|-1-2c}$ subsets in $L$, by removing $|N|-1-2c$ members. Having shown that the $B_c$ is $k$-regular, by Hall’s Marriage Theorem for regular graphs, there exists a perfect matching on $B_c$ and hence a bijective mapping $f$.
Finally, we pair the terms according to $f$: 

Let $\S^c=\{\S\subseteq N\setminus\{i\} \mid |\S|=c\}$ denote all size $c$ coalitions excluding player $i$, \\\\
\resizebox{1.\hsize}{!}{
\begin{minipage}{\linewidth}
\begin{align*}
&\delta\vshapley_i = \sum_{\S\subseteq N \setminus \{i\}} \frac{|\S|!(|N|-|\S| -1)!}{(|N|+1)!} (|N|-2|\S|-1)\mc_i(\S) \\
&= \sum_{0\leq c< \frac{|N|-1}{2}} \frac{c!(|N|-c -1)!(|N|-2c-1)}{(|N|+1)!} \left(\sum_{\S_1\in \S^c}\mc_i(\S_1) - \!\!\!\!\sum_{\S_2\in \S^{|N|-1-c}}\!\!\!\!\mc_i(\S_2) \right)\\
&=\sum_{0\leq c< \frac{|N|-1}{2}} \frac{c!(|N|-c -1)!(|N|-2c-1)}{(|N|+1)!} \sum_{\S_1\in \S^c} \underbrace{\Big(\mc_i(\S_1) - \mc_i(f(\S_1))\Big)}_{\geq 0 \textnormal{ due to submodularity }} \geq 0.
\end{align*}
\end{minipage}}
And this concludes our proof that under the Shapley value, the players can gain a higher total payoff by replication.
\end{proof}

\subsection{Robustness of the Banzhaf Value}
Theorem~\ref{thm:shapleysinglerobustness} shows that the Shapley value is not robust against replication if the player replicates its resource and acts as two players. In what follows we will show that under the same replication manipulation, the Banzhaf value is neutral.

\begin{restatable}{theorem}{banzhafsinglepayoff}\label{thm:banzhafsinglepayoff}
	Let G= (N, \cv) be a submodular game with replication redundant characteristic function $\cv$, a player $i\in N$ replicates and obtains the total payoff as two identities $\S^R = \{i_1, i_2\}$ in the new game $G^R= (N^R, \cv^R)$. Under payoff allocation using the Banzhaf value, the changes in total payoff of player $i$ by replicating is zero, i.e.,
	$\delta \vbanzhaf_i = 0$
\end{restatable}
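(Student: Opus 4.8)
The plan is to compute the total Banzhaf payoff of the two replicas directly and show it equals the original Banzhaf value of player $i$, so that the difference vanishes. First I would observe that the two replicas $i_1, i_2$ are symmetric in $G^R$: for any $\S \subseteq N^R\setminus\{i_1,i_2\}$ (equivalently $\S\subseteq N\setminus\{i\}$), the definition of $v^R$ in Definition~\ref{def:manipulation} gives $v^R(\S\cup\{i_1\}) = v(\S\cup\{i\}) = v^R(\S\cup\{i_2\})$, so the two identities have identical marginal contributions everywhere. Since the Banzhaf value satisfies \ref{ax:symmetry}, they receive equal payoff, and hence $\sum_{i_\kappa\in\S^R}\vbanzhaf_{i_\kappa}(N^R,v^R) = 2\,\vbanzhaf_{i_1}(N^R,v^R)$. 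It therefore suffices to show $\vbanzhaf_{i_1}(N^R,v^R) = \tfrac12\vbanzhaf_i(N,v)$.

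Next I would expand the Banzhaf value of $i_1$ in $G^R$, writing $\mc^R_{i_1}(\S) = v^R(\S\cup\{i_1\}) - v^R(\S)$ for the marginal contribution in the replicated game. Because $|N^R| = |N|+1$, the normalising constant becomes $\tfrac{1}{2^{|N|}}$ and the sum ranges over all $\S \subseteq N^R\setminus\{i_1\} = (N\setminus\{i\})\cup\{i_2\}$. The key step is to split this sum according to whether $\S$ contains the other replica $i_2$. For the coalitions with $i_2\in\S$, Assumption~\ref{assumption:replication} (replication redundancy) gives $\mc^R_{i_1}(\S)=0$, so all such terms vanish. For the coalitions with $i_2\notin\S$, we have $\S\subseteq N\setminus\{i\}$, and the definition of $v^R$ yields $\mc^R_{i_1}(\S) = v(\S\cup\{i\}) - v(\S) = \mc_i(\S)$. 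Thus the surviving sum is exactly $\sum_{\S\subseteq N\setminus\{i\}}\mc_i(\S)$, giving $\vbanzhaf_{i_1}(N^R,v^R) = \tfrac{1}{2^{|N|}}\sum_{\S\subseteq N\setminus\{i\}}\mc_i(\S) = \tfrac12\vbanzhaf_i$.

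Combining the two steps, the total payoff is $2\cdot\tfrac12\vbanzhaf_i = \vbanzhaf_i$, so $\delta\vbanzhaf_i = 0$. Unlike the Shapley case, no pairing argument or submodularity is needed here; the only real subtlety — and the point where the two structural assumptions carry the argument — is the exact arithmetic cancellation between the doubled normalisation ($\tfrac{1}{2^{|N|}}$ rather than $\tfrac{1}{2^{|N|-1}}$) and the doubling from summing over both replicas. The factor $\tfrac12$ introduced by enlarging the player set precisely offsets the two identities, while replication redundancy guarantees that coalitions already containing $i_2$ introduce no spurious terms. I expect the cleanest presentation to isolate the claim $\vbanzhaf_{i_1}(N^R,v^R) = \tfrac12\vbanzhaf_i$ as the crux, after which the conclusion is immediate.
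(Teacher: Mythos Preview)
Your proposal is correct and follows essentially the same approach as the paper: compute $\vbanzhaf_{i_1}(N^R,v^R)$ by splitting the sum over coalitions according to whether they contain $i_2$, kill the $i_2$-containing terms via replication redundancy, identify the surviving sum with $\tfrac12\vbanzhaf_i$, and then double by symmetry. Your normalisation $\tfrac{1}{2^{|N|}}=\tfrac{1}{2^{|N^R|-1}}$ is the correct one, and your remark that submodularity plays no role here is exactly right and matches the paper's observation that the result is driven purely by the Banzhaf weights (equivalently, the 2-efficiency axiom).
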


\begin{proof}
The neutrality of the Banzhaf value under the replication is a natural consequence of the weights defined on the coalitions. It is also closely related to the 2-efficiency axiom, where the Banzhaf value is neutral to the merging or splitting of two players. 
The complete proof is in Appendix~\ref{appendix:banzhafsinglepayoff}.
\end{proof}


In comparison, when the player replicates and acts as two identities, the Shapley value is not replication robust, while the Banzhaf is neutral. This raises a few interesting questions: (1) What governs the robustness of the solution concepts which lead to the different behaviours between the Shapley and Banzhaf values? (2) Can we draw the same conclusion for more than one replications, for example, is the Banzhaf value neutral to an arbitrary number of replications? 
To answer these questions, we next examine the wider class of solution concepts, i.e., semivalues~\cite{dubey1981value}, which include both the Shapley value and Banzhaf value. More importantly, we extend our results to the more general case where the player performs an arbitrary number ($k\geq 1$) of replications.


\section{Replication-robustness of General Semivalues with $k\geq1$ Replications}\label{sec:k>1}

In many real-world applications, the details of replication are only private to the malicious player due to anonymity. Take the online social networks for an example, the digital identities of a player is typically private and accessible to the player itself, and a single player can create multiple false identities. Therefore, it is important to account for the case of an arbitrary number of replications where $k$ is unknown. However, with an arbitrary number of replications, the changes in total payoff no longer exhibit the structured form which allows for coalition pairing. Therefore, to analyse the robustness of the semivalues under $k\geq 1$ replications, we take the following steps (e.g., \emph{\thesection-A} refers to Section~\ref{subsec:zc}): 
\begin{enumerate}[label=\emph{\thesection-\Alph*.}]
    \item represent semivalues as an importance weighted sum of average marginal contributions across coalition sizes,
    \item transform the submodularity into an inequality on the average marginal contributions across coalition sizes,
    \item express the total payoff of the malicious player after replication as (new) importance weighted sum on the (original) average marginal contributions,
    \item we present the conditions on the importance weights which lead to replication robustness,
    \item use the above robustness conditions to evaluate a given semivalue such as the Shapley value. 
\end{enumerate}

\subsection{Semivalues as Weighted Average Marginal Contributions}\label{subsec:zc}
As the first step, we introduce the semivalues~\cite{dubey1981value}, a wide class of Shapley-like solution concepts including both the Shapley and Banzhaf value. The semivalue of a player can be defined as a weighted sum over its marginal contributions towards coalitions of other players. The weights of player $i$'s marginal contribution towards coalition $\S$ is denoted by $\weight{\S}{N}$. In particular, $\weight{\S}{N}$ only depends on the size of the coalition $\S$ but not on the players' identities inside the coalition, i.e.,
\begin{equation}\label{eq:ungrouped_semivalue}
\varphi_i(N, v) = \sum_{\S\subseteq N\setminus\{i\}} \weight{|\S|}{N} \mc_i(\S).
\end{equation}
Therefore, by grouping together equal-sized coalitions, a semivalue assigns to each player $i\in N$ a real-valued payoff, expressed as a weighted sum of player $i$'s \emph{average marginal contributions towards size-$c$ coalitions $z_i(c)$}:
\begin{gather}
\varphi_i(N, v) = \sum_{c=0}^{N-1}\alpha_c z_i(c), \quad \textnormal{where} \label{eq:zc}\\
\begin{aligned}
   z_i(c) &= {\tbinom{|N|-1}{c}}^{-1}\sum_{\S\subseteq{N\setminus\{i\}}, |\S| = c} \mc_i(\S)  \nonumber\\
  \alpha_{c} &= \tbinom{|N|-1}{c}\weight{c}{N} \nonumber \quad \textnormal{(Importance Weights)}
\end{aligned}
\end{gather}

\begin{proof}[Proof Sketch]
The derivation from Equation~\eqref{eq:ungrouped_semivalue} to \eqref{eq:zc} is straightforward and can be obtained by grouping the marginal contributions of player $i$ towards equal-sized coalitions. The normalisation factor $\tbinom{|N|-1}{c}$ is the number of size-$c$ coalitions of players excluding $i$. The proof is in Appendix~\ref{appendix:zc}.
\end{proof}

We will refer to $\alpha_c$ as \emph{importance weights}, as they quantify the importance of a player's marginal contributions towards different coalition sizes. In addition, the importance weights in a semivalue form a probability distribution, that is, $\sum_{c=0}^{|N|-1}\alpha_c = 1$.
The next corollary presents the importance weights of some common semivalues, namely, the Shapley value, Banzhaf value, and Leave-one-out value.

\begin{corollary}[Importance Weights for Common Semivalues]
The Shapley value is defined by the weights $\weight{c}{N} = \frac{c!(|N|-1-c)!}{|N|!} = \frac{1}{|N|}\tbinom{|N|-1}{c}^{-1}$, hence the importance weights are uniform across all coalition sizes, i.e., $\alpha_c^\textnormal{Shapley} = \tbinom{|N|-1}{c}\weight{c}{N} = \frac{1}{|N|}$. 
In contrast, the Banzhaf value is defined by the weights $\weight{c}{|N|} = \frac{1}{2^{|N|-1}}$, hence the importance weights form a bell shape $\alpha_c^\textnormal{Banzhaf}  = \frac{1}{2^{|N|-1}}\tbinom{|N|-1}{c}$ which favours mid-sized coalitions. Finally, for the Leave-one-out value, $\alpha_c^\textnormal{LOO}  = \mathbbm{1}_{c=|N|-1}$.
\end{corollary}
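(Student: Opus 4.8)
The plan is to treat each of the three semivalues as a special case of the general weighted form in Equation~\eqref{eq:ungrouped_semivalue}, read off the per-coalition weight $\weight{c}{N}$ directly from its closed-form definition, and then obtain the importance weight by the single substitution $\alpha_c = \tbinom{|N|-1}{c}\weight{c}{N}$ from Equation~\eqref{eq:zc}. Because $\weight{c}{N}$ depends only on the coalition size $c$, this reduces the whole corollary to three short algebraic identities, one per semivalue.

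For the \emph{Shapley value}, comparing its definition with Equation~\eqref{eq:ungrouped_semivalue} gives $\weight{c}{N} = \frac{c!(|N|-1-c)!}{|N|!}$. First I would establish the identity $\frac{c!(|N|-1-c)!}{|N|!} = \frac{1}{|N|}\tbinom{|N|-1}{c}^{-1}$ by expanding $\tbinom{|N|-1}{c} = \frac{(|N|-1)!}{c!(|N|-1-c)!}$ and using $|N|! = |N|\,(|N|-1)!$. Substituting into $\alpha_c = \tbinom{|N|-1}{c}\weight{c}{N}$, the binomial factor cancels against its inverse and leaves $\alpha_c^{\textnormal{Shapley}} = \frac{1}{|N|}$, independent of $c$, i.e. uniform.

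For the \emph{Banzhaf value}, its definition already carries the constant prefactor $\frac{1}{2^{|N|-1}}$ multiplying every marginal contribution, so $\weight{c}{N} = \frac{1}{2^{|N|-1}}$ is immediate and the substitution yields $\alpha_c^{\textnormal{Banzhaf}} = \frac{1}{2^{|N|-1}}\tbinom{|N|-1}{c}$, a bell shape peaking near $c \approx (|N|-1)/2$. The only step requiring a little care is the \emph{Leave-one-out value}, which is not presented as a sum over all coalitions: it assigns player $i$ only the single marginal contribution $\mc_i(N\setminus\{i\})$ towards the unique coalition of size $|N|-1$. I would rewrite it in the form of Equation~\eqref{eq:ungrouped_semivalue} by setting $\weight{c}{N} = \mathbbm{1}_{c = |N|-1}$ (all other coalitions carrying weight zero), so that the sum collapses to that single term. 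Since $\tbinom{|N|-1}{|N|-1} = 1$, the substitution gives $\alpha_c^{\textnormal{LOO}} = \mathbbm{1}_{c=|N|-1}$. Recognising LOO as a (degenerate) semivalue in this way is the main conceptual hurdle; the remaining verification — that each family of importance weights sums to $1$, via $\sum_c \frac{1}{|N|} = 1$, the binomial theorem $\sum_c \tbinom{|N|-1}{c} = 2^{|N|-1}$, and triviality for LOO — is routine.
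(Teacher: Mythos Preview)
Your proposal is correct and follows exactly the approach the paper takes: the corollary is stated without a separate proof because it is an immediate consequence of substituting each semivalue's defining weight $\weight{c}{N}$ into the formula $\alpha_c = \tbinom{|N|-1}{c}\weight{c}{N}$ from Equation~\eqref{eq:zc}, which is precisely what you do. Your additional remarks on recasting LOO in the form of Equation~\eqref{eq:ungrouped_semivalue} and verifying that each set of importance weights sums to~$1$ are sound and go slightly beyond what the paper spells out.
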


Intuitively, by adjusting the importance weights $\alpha_c$, a semivalue balances a player's \emph{individual value} and \emph{complementary value}. In particular, putting higher importance on smaller coalitions (larger $\alpha_c$ for smaller $c$) favours the individual value and vice-versa. 
So far the representation of semivalues via importance weights has provided some insights for differentiating the common solution concepts. In the following sections, we will show that this representation has significant implications for understanding the difference in robustness of solution concepts against replication in submodular games.


\subsection{Average Marginal Contributions vs. Coalition Sizes}\label{subsec:zc_decrease}
Intuitively, in a submodular game with diminishing returns, a player tends to be less useful in terms of marginal contribution when contributing towards a larger coalition. Can we formally show this intuition? Unfortunately, this does not always hold true for arbitrary pairs of coalitions: given coalitions $\coalition_1$ and $\coalition_2$ where $|\coalition_1| \leq |\coalition_2|$, there is no direct comparison between a player's marginal contributions towards these two coalitions, only except for when $\coalition_1$ is a subset of $\coalition_2$.
Nevertheless, we can formalise this intuition under \emph{average marginal contributions}. 
We now present in the following a useful property of submodular games that the average marginal contributions $z_i(c)$ decrease with coalition size under the submodularity assumption. 
\begin{restatable}{lemma}{zcmonotone}\label{lemma:zc_monotone}
Given a submodular game, the average marginal contribution $z_i(c)$ of a player $i$ monotonic decreases with coalition size $c$, i.e.,
\begin{equation}
\forall {0\leq c < |N|-1},\quad z_i(c) \geq z_i(c+1).
\end{equation}
\end{restatable}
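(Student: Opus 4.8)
The plan is to prove the averaged inequality by a double-counting argument over nested coalitions, turning the pointwise submodularity inequality of Assumption~\ref{assumption:submodularity} into a statement about averages. First I would strip off the normalisation and work with the unnormalised sums $\Sigma_i(c) \coloneqq \sum_{\S\subseteq N\setminus\{i\},\,|\S|=c}\mc_i(\S)$, so that $z_i(c) = \binom{|N|-1}{c}^{-1}\Sigma_i(c)$. It then suffices to compare $\Sigma_i(c)$ and $\Sigma_i(c+1)$ with the right weights and recover $z_i(c)\geq z_i(c+1)$ at the end.

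The core step is to consider the collection $P$ of all inclusion pairs $(\S,\S')$ with $\S\subseteq\S'\subseteq N\setminus\{i\}$, $|\S|=c$, and $|\S'|=c+1$. For every such pair submodularity gives $\mc_i(\S)\geq\mc_i(\S')$, so summing over $P$ yields $\sum_{(\S,\S')\in P}\mc_i(\S)\geq\sum_{(\S,\S')\in P}\mc_i(\S')$. I would then evaluate both sides by counting multiplicities: a fixed size-$c$ set $\S$ extends to a size-$(c+1)$ superset in exactly $|N|-1-c$ ways (choosing the added element from $N\setminus\{i\}\setminus\S$), whereas a fixed size-$(c+1)$ set $\S'$ has exactly $c+1$ size-$c$ subsets. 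Hence the inequality collapses to $(|N|-1-c)\,\Sigma_i(c)\geq(c+1)\,\Sigma_i(c+1)$.

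Finally I would translate this back to the $z_i$ via the binomial identity $(c+1)\binom{|N|-1}{c+1}=(|N|-1-c)\binom{|N|-1}{c}$. Substituting $\Sigma_i(c)=\binom{|N|-1}{c}z_i(c)$ and $\Sigma_i(c+1)=\binom{|N|-1}{c+1}z_i(c+1)$, both sides acquire the common factor $(|N|-1-c)\binom{|N|-1}{c}$, which is strictly positive because $c<|N|-1$, so it may be cancelled to leave $z_i(c)\geq z_i(c+1)$.

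The argument is essentially all bookkeeping, so I do not anticipate a genuine obstacle; the one place to be careful is that the two multiplicities ($|N|-1-c$ and $c+1$) match the two binomial normalisers in exactly the way needed for the common factor to cancel. A cosmetic alternative would be to phrase the same computation probabilistically: $z_i(c)$ is the expected marginal contribution towards a uniformly random size-$c$ coalition, and a uniformly random size-$(c+1)$ coalition can be coupled with a uniformly random size-$c$ subset of itself, after which pointwise submodularity and linearity of expectation give the claim directly.
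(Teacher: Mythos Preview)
Your proposal is correct and uses essentially the same approach as the paper: both arguments rest on the same double-counting over inclusion pairs $(\S,\S')$ with $|\S|=c$, $|\S'|=c+1$, noting that each size-$c$ coalition has $|N|-1-c$ size-$(c+1)$ supersets and each size-$(c+1)$ coalition has $c+1$ size-$c$ subsets, and then applying submodularity pairwise. The only cosmetic difference is that the paper manipulates $z_i(c+1)-z_i(c)$ directly with the normalisers in place, whereas you strip off the binomial factors first and reinstate them at the end via the identity $(c+1)\binom{|N|-1}{c+1}=(|N|-1-c)\binom{|N|-1}{c}$.
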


\begin{figure}[t]
    \centering
    \includegraphics[width=0.8\linewidth]{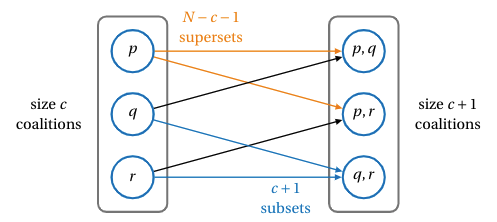}
    \caption{Illustration of the Proof for Lemma~\ref{lemma:zc_monotone}: Given an example game with 4 players $N=\{i,p,q,r\}$, we compare the average marginal contribution of player $i$ towards size-$c$ and size-$(c+1)$ coalitions by matching the coalitions. Specifically, each size-$c$ (here $c=1$) coalition $\S_1$ (Left) has $|N|-c-1$ supersets of size-$(c+1)$. This can be shown by adding any one of the remaining $|N|-c-1$ players ($-c$ refers to the $c$ players already in the coalition and $-1$ refers to the player $i$). Conversely, each size-$(c+1)$ coalition $\S_2$ (Right) has $c+1$ subsets of size-$c$. This can be shown by removing any one of its $c+1$ members. Arrows indicate the "$\subseteq$" relation.}
    \label{fig:lemma_a1}
\end{figure}

\begin{proof} Given player $i\in N$, we show for any coalition size $c$, $z_i(c)\geq z_i(c+1)$, by taking the following steps:


(1) Map the size-$c$ coalitions (excluding $i$) to their size-$(c+1)$ supersets (excluding $i$), and vice-versa: each size-$c$ coalition $\S_1$ can be mapped to $(|N|-1-c)$ number of size-$(c+1)$ supersets $\S_2$ where $\S_1\subseteq{\S_2}\subseteq N\setminus\{i\}$. This can be achieved by adding one of the remaining $(|N|-1-c)$ elements $j\in N\setminus (\{i\}\cup \S_1)$. Conversely, each $\S_2$ can be mapped to $(c+1)$ subsets $\S_1$ of size-$c$. This can be achieved by removing any one of the member elements $j\in \S_2$. An example is shown in Figure~\ref{fig:lemma_a1} for an illustration. 

(2) With the mappings between size $c$ and $c+1$ coalitions, we show that $z_i(c) \geq z_i(c+1)$ by the submodularity property: $\forall \S_1\subseteq \S_2\subseteq N\setminus\{i\} \implies\mc_i(\S_1)\geq \mc_i(\S_2)$. 
The detailed derivations are as follows:
$\forall c\in [0, 1, \ldots,  |N|-2]$, denote $\S^c \coloneqq \{\S\subseteq N\setminus\{i\} \mid |\S| = c\}$ as all possible coalitions of size $c$ excluding player $i$, then\\

\resizebox{1.\linewidth}{!}{
  \begin{minipage}{\linewidth}
\begin{align*}
    &z_i(c+1) - z_i(c) 
        = \sum_{\S_2\in \S^{c+1}}{\tbinom{|N|-1}{c+1}}^{-1}{\mc_i(\S_2)} - \sum_{\S_1\in \S^{c}}{\tbinom{|N|-1}{c}}^{-1}{\mc_i(\S_1)}\\
        &= \sum_{\S_2\in \S^{c+1}}\Big({\tbinom{|N|-1}{c+1}}^{-1}{\mc_i(\S_2)} - \sum_{\S_1\in \S^{c}, \S_1\subseteq{\S_2}}\underbrace{\tfrac{1}{|N|-1-c}}_{(1)}{\tbinom{|N|-1}{c}}^{-1}\underbrace{\mc_i(\S_1)}_{\geq \mc_i(\S_2)}\Big)\\
        &\leq \sum_{\S_2\in \S^{c+1}}\Big({\tbinom{|N|-1}{c+1}}^{-1}{\mc_i(\S_2)} - \sum_{\S_1\in \S^{c}, \S_1\subseteq{\S_2}}\tfrac{1}{|N|-1-c}{\tbinom{|N|-1}{c}}^{-1}{\mc_i(\S_2)} \Big)\\
        &= \sum_{\S_2\in \S^{c+1}}\Big({\tbinom{|N|-1}{c+1}}^{-1}{\mc_i(\S_2)} - \underbrace{\tfrac{c+1}{|N|-1-c}}_{(2)}{\tbinom{|N|-1}{c}}^{-1}{\mc_i(\S_2)}\Big)\\
        &= \sum_{\S_2\in \S^{c+1}}\Big({\tbinom{|N|-1}{c+1}}^{-1}{\mc_i(\S_2)} - {\tbinom{|N|-1}{c+1}}^{-1}{\mc_i(\S_2)} \Big)\\
        &= 0
\end{align*}
\end{minipage}}
$(1)$ $\S_1$ is counted once in each of its $(|N|-1-c)$ supersets $\S_2$ of size-$(c+1)$, and $(2)$ is because each $\S_2$ has $c+1$ subsets $\S_1$ of size-$c$. And this concludes our proof for $z_i(c) \geq z_i(c+1)$.
\end{proof}

We have shown that in a submodular game, a player is more useful \emph{on average} when contributing towards a smaller coalition, i.e., the player's average marginal contribution towards a smaller coalition $z_i(c)$ is no less than its average marginal contribution to a bigger coalition $z_i(c+1)$. With this property, we are ready to extend the replication robustness results to the general class of semivalues and an arbitrary number of replications $k\geq1$.

\subsection{Payoff Changes under Replication with $k\geq 1$}\label{subsec:alpha_ck}
To study the replication robustness of the semivalues, we first derive the total payoff of the replicating player according to the solution concepts after replication. Interestingly, we observe that under the replication redundancy assumption, the replicating player's total payoff can be expressed as a weighted sum of the player's average marginal contributions $z_i(c)$ \emph{from the original game}, as detailed in the following lemma.

\begin{restatable}{lemma}{lemmaack}\label{lemma:ack}
Let $G = (N, v)$ be a submodular game with replication redundant characteristic function $\cv$. By replicating $k$ times and acting as $k+1$ players $\S^R = \{i_0, \ldots, i_k\}$ in the induced game $G^R=(N^R,v^R)$, the malicious player $i$ receives a total payoff of 
\begin{gather}
\varphi_i^\textnormal{tot}(k) = \sum_{c=0}^{|N|-1} \alpha_c^k z_i(c), \label{eq:alpha_ck}
\textnormal{ where }\\
\begin{aligned}
z_i(c) &= {\tbinom{|N|-1}{c}}^{-1}\sum_{\S\subseteq{N\setminus\{i\}}, |\S| = c} \mc_i(\S), \nonumber\\
\alpha_c^k &= (k+1)\tbinom{|N|-1}{c}\weight{c}{N^R}  \quad \textnormal{(new importance weights).}
\nonumber
\end{aligned}
\end{gather}
\end{restatable}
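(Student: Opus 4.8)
The plan is to compute the total payoff $\varphi_i^\textnormal{tot}(k) = \sum_{\kappa=0}^{k} \varphi_{i_\kappa}(N^R, v^R)$ directly from the semivalue definition \eqref{eq:ungrouped_semivalue}, using replication redundancy to collapse each replica's sum onto coalitions drawn entirely from the original player set. First I would expand each replica's semivalue as $\varphi_{i_\kappa}(N^R, v^R) = \sum_{\S \subseteq N^R \setminus \{i_\kappa\}} \weight{|\S|}{N^R}\, \mc_{i_\kappa}^R(\S)$, where $\mc_{i_\kappa}^R(\S) = v^R(\S \cup \{i_\kappa\}) - v^R(\S)$ and the weight carries the enlarged player set $N^R$ with $|N^R| = |N| + k$.

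The step that requires the most care is discarding every coalition $\S$ that already contains one of the other $k$ replicas. By Assumption~\ref{assumption:replication}, whenever some $i_\lambda \in \S^R$ with $\lambda \neq \kappa$ lies in $\S$, the marginal contribution $\mc_{i_\kappa}^R(\S)$ vanishes, so only coalitions $\S \subseteq N \setminus \{i\}$ (those containing no replica) survive in the sum. For any such $\S$, Definition~\ref{def:manipulation} gives $v^R(\S \cup \{i_\kappa\}) = v(\S \cup \{i\})$ and $v^R(\S) = v(\S)$, hence $\mc_{i_\kappa}^R(\S) = \mc_i(\S)$, the marginal contribution of the original player in the original game.

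The payoff then reads $\varphi_{i_\kappa}(N^R, v^R) = \sum_{\S \subseteq N \setminus \{i\}} \weight{|\S|}{N^R}\, \mc_i(\S)$, and crucially this expression no longer depends on $\kappa$ — each of the $k+1$ replicas contributes the same sum, since both the restricted domain $N\setminus\{i\}$ and the summand are replica-independent. Summing over $\kappa = 0, \ldots, k$ therefore pulls out a uniform factor $(k+1)$, giving $\varphi_i^\textnormal{tot}(k) = (k+1) \sum_{\S \subseteq N \setminus \{i\}} \weight{|\S|}{N^R}\, \mc_i(\S)$. Finally I would group the coalitions by size $c = |\S|$: since $\sum_{\S \subseteq N\setminus\{i\},\, |\S| = c} \mc_i(\S) = \binom{|N|-1}{c} z_i(c)$ by the definition of $z_i(c)$, the total payoff becomes $\sum_{c=0}^{|N|-1} (k+1)\binom{|N|-1}{c}\weight{c}{N^R}\, z_i(c)$, which is exactly \eqref{eq:alpha_ck} with $\alpha_c^k = (k+1)\binom{|N|-1}{c}\weight{c}{N^R}$.

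I expect the only genuine subtlety to be keeping track of which game each quantity belongs to: the marginal contributions $\mc_i(\S)$ and the averages $z_i(c)$ are computed in the \emph{original} game $G$, whereas the weight $\weight{c}{N^R}$ is that of a semivalue on the \emph{enlarged} player set of size $|N| + k$. It is exactly this displacement of the weight from $\weight{c}{N}$ to $\weight{c}{N^R}$ that reshapes the importance weights from $\alpha_c$ into $\alpha_c^k$, while the factor $(k+1)$ and the binomial $\binom{|N|-1}{c}$ merely record how many replicas share the payoff and how many size-$c$ coalitions are aggregated into $z_i(c)$. As an alternative route to the $(k+1)$ factor, the common value $\varphi_{i_\kappa}(N^R,v^R)$ across replicas could also be obtained from the symmetry axiom \ref{ax:symmetry}, since all replicas are mutually symmetric players in $G^R$.
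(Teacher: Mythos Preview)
Your proposal is correct and follows essentially the same approach as the paper: use replication redundancy to restrict each replica's semivalue sum to coalitions $\S \subseteq N\setminus\{i\}$, identify those marginal contributions with $\mc_i(\S)$ from the original game, collect the factor $(k+1)$, and group by coalition size. The only cosmetic difference is that the paper invokes symmetry \ref{ax:symmetry} up front to write $\varphi_i^\textnormal{tot}(k) = (k+1)\varphi_{i_k}(N^R,v^R)$ before expanding, whereas you expand each $\varphi_{i_\kappa}$ and then observe they coincide---a route you yourself flag as equivalent in your final paragraph.
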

\begin{proof}[Proof Sketch]
By symmetry the replicas yield equal payoff, i.e., $\varphi_i^\textnormal{tot}(k) = (k+1)\varphi_{i_k}(N^R, v^R)$. 
Due to replication redundancy (Assumption~\ref{assumption:replication}), a replica player makes a nonzero marginal contribution only towards coalitions with no other replicas $\S \subseteq N^R\setminus \S^R$, which correspond to the same set of coalitions of the other players in the original game $\S \subseteq N\setminus\{i\}$ because $N^R\setminus \S^R=N\setminus\{i\}$. Following this insight, we can compute the new importance weights $\alpha_c^k$ over the player's original average marginal contributions $z_i(c)$. The complete proof is included in Appendix~\ref{appendix:payoff_changes}.
\end{proof}

Note that Equation~\eqref{eq:alpha_ck} reduces to Equation~\eqref{eq:zc} for no replications, i.e., $\alpha_c^{k} = \alpha_c$ when $k=0$.
Importantly, $z_i(c)$ are the average marginal contributions defined on the original game $G=(N,v)$ as in Equation~\eqref{eq:zc}, instead of on the induced game, thus they \emph{are invariant under replication}. As stated in Equation~\eqref{eq:alpha_ck}, the total payoff of the replicating player is a weighted sum over $z_i(c)$ with the new importance weights $\alpha_c^k$. Since the average marginal contributions $z_i(c)$ in the original game stay invariant after replication, the change in the total payoff of the replicating player $\varphi_i^\textnormal{tot}$ is reflected in the change in $\alpha_c^k$ across different number of replications $k$. This makes $\alpha_c^k$ a key factor for characterising replication robustness. 
The next corollary demonstrates the importance weights after replication for the common semivalues. 

\begin{corollary}[New Importance Weights for Common Semivalues after Replication]
\label{lemma:example_alpha_ck}
After $k$ replications, the new importance weights for the total payoff of the malicious player are: for the Shapley value
$\alpha_c^k = \tfrac{(k+1)\binom{|N|-1}{c}}{(|N|+k)\binom{|N|+k-1}{c}}$, for the Banzhaf value $\alpha_c^k = \frac{(k+1)}{2^{|N|+k-1}}\tbinom{|N|-1}{c}$, and for the Leave-one-out value $\alpha_c^k = \mathbbm{1}_{c = |N| - 1, k = 0}$.
\end{corollary}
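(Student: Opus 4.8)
The plan is to specialise the general formula $\alpha_c^k = (k+1)\tbinom{|N|-1}{c}\weight{c}{N^R}$ supplied by Lemma~\ref{lemma:ack} to each of the three semivalues in turn. The only additional input is the closed form of the semivalue weight $\weight{c}{m}$ evaluated on a game of size $m = |N^R| = |N|+k$ (the induced game replaces one player by $k+1$ replicas, so it has $|N|+k$ players), together with routine binomial/factorial bookkeeping. No new machinery is needed beyond substitution and simplification.

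First, for the Shapley value I would substitute $\weight{c}{N^R} = \frac{c!(|N|+k-1-c)!}{(|N|+k)!}$. The only nontrivial step is the algebraic identity $\frac{c!(|N|+k-1-c)!}{(|N|+k)!} = \frac{1}{(|N|+k)\binom{|N|+k-1}{c}}$, obtained by expanding $\binom{|N|+k-1}{c} = \frac{(|N|+k-1)!}{c!(|N|+k-1-c)!}$ and writing $(|N|+k)! = (|N|+k)(|N|+k-1)!$. Plugging this back yields $\alpha_c^k = \frac{(k+1)\binom{|N|-1}{c}}{(|N|+k)\binom{|N|+k-1}{c}}$. For the Banzhaf value the weight is constant, $\weight{c}{N^R} = \frac{1}{2^{|N|+k-1}}$, so no simplification is required and one reads off $\alpha_c^k = \frac{k+1}{2^{|N|+k-1}}\binom{|N|-1}{c}$ directly.

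The one step needing care, and the place where the statement could go wrong if handled carelessly, is the Leave-one-out value. Here $\weight{c}{N^R} = \mathbbm{1}_{c = |N|+k-1}$, since LOO places all its weight on the single coalition size $|N^R|-1 = |N|+k-1$. The key observation is that in the defining sum of $\varphi_i^\textnormal{tot}(k)$ the index $c$ ranges only over $0 \le c \le |N|-1$, because $z_i(c)$ is an \emph{original}-game quantity invariant under replication (Lemma~\ref{lemma:ack}). Hence the indicator can fire only when $|N|+k-1 \le |N|-1$, i.e.\ only when $k=0$, in which case $c = |N|-1$ and $(k+1)\binom{|N|-1}{|N|-1} = 1$; for every $k \ge 1$ the required size $|N|+k-1$ lies strictly outside the range of $c$, forcing $\alpha_c^k = 0$ everywhere. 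This matches the intuition that after replication each replica's unique nonzero LOO coalition (all other players) necessarily contains another replica, and so contributes zero by replication redundancy (Assumption~\ref{assumption:replication}). Collecting the cases gives $\alpha_c^k = \mathbbm{1}_{c = |N|-1,\, k = 0}$, which completes the corollary.

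I expect essentially no obstacle for the Shapley and Banzhaf cases beyond the single factorial-to-binomial identity; the main subtlety is purely the range argument in the LOO case, namely recognising that $c$ is indexed by original-game coalition sizes while the LOO weight lives at induced-game size $|N|+k-1$, so the indicator collapses to the single point $k=0$.
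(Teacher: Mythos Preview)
Your proposal is correct and follows precisely the paper's approach: the paper's proof is a single sentence instructing the reader to plug the solution-concept weights in the induced game into Equation~\eqref{eq:alpha_ck}, and you carry out exactly that substitution with explicit detail for each of the three cases. Your careful handling of the LOO range argument is sound (one could equivalently note that $\tbinom{|N|-1}{c}$ vanishes for $c>|N|-1$), and nothing further is needed.
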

\begin{proof}
The new importance weights can be obtained by plugging in the weights of the solution concepts in the induced game to Equation~\eqref{eq:alpha_ck}. 
\end{proof}

\begin{figure}[t]
    \centering
	\includegraphics[width=0.9\linewidth, height=0.4\linewidth]{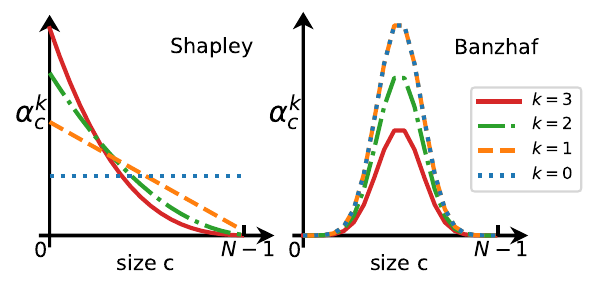}
	\caption{Changes of $\alpha_c^k$ under different number of replications $k$, plot using Equation~\eqref{eq:alpha_ck} with $|N|=20$. The x-axis represents the sizes $c$ of coalitions of the other players $N\setminus\{i\}$, and the y-axis shows the new importance weights $\alpha_c^k$ assigned to each coalition size. Each curve represents a different number of replications $k$.
	(Left) Across the different curves, the importance weights $\alpha_c^k$ of the Shapley value shift towards smaller coalitions as $k$ increases (Lemma~\ref{lemma:sum_alpha_properties}). (Right) In contrast, the Banzhaf importance weights $\alpha_c^k$ are unchanged with the first replication, afterwards, $\alpha_c^k$ decreases across all coalition sizes as $k$ increases.
	Since $z_i(c)$ decreases over coalition size $c$ due to the submodular characteristic function (Lemma~\ref{lemma:zc_monotone}), the weight shift of Shapley value causes $\varphi^\textnormal{tot}_i$ to be increasing, and non-increasing for the Banzhaf value.}
	\label{fig:weights}
\end{figure}

In Example~\ref{example:alpha_ck} and Fig.~\ref{fig:weights}, we compare the Shapley value and the Banzhaf value using Equation~\ref{eq:alpha_ck}, and illustrate the difference between these two solution concepts in terms of their new importance weights after replication.  

\begin{example}[Payoff Changes of the Malicious Player]\label{example:alpha_ck}
Let $G = (N, v)$ be a submodular game with 3 players $N=\{i, p, q\}$. The marginal contributions of player $i$ towards coalitions of other players are $\mc_i(\emptyset) = 3$, $\mc_i(\{p\})= \mc_i(\{q\}) = 2$, $\mc_i(\{p,q\}) = 1$. Player $i$ replicates once and acts under two identities $\S^R = \{i_1, i_2\}$. The induced game is then $G^R = (N^R, v^R)$ where $N^R = \{i_1, i_2, p, q\}$.
To see the changes in $i$'s total payoff, we first compute the average marginal contributions of $i$ in the original game:
 $z_i(0) = \mc_i(\emptyset) = 3;\quad z_i(1) = \frac{1}{2} (\mc_i(p)+ \mc_i(q))= 2;\quad  z_i(2) =\mc_i(p,q)= 1.$
Then we compute the total payoffs $\varphi_i^\textnormal{tot}(k)$ of player $i$ according to the Shapley and Banzhaf value using Equation~\eqref{eq:alpha_ck}, where $k=0$ refers to no replication, and $k>0$ represents replicating $k$ times:

(Shapley)~ $\varphi_i^\textnormal{tot}(0) = \sum_{c=0}^{2}\alpha_c^0 z_i(c) = 3\cdot\tfrac{1}{3}+2\cdot\tfrac{1}{3}+1\cdot\tfrac{1}{3} = 2;$ 

\phantom{(Shapley)} $\varphi_i^\textnormal{tot}(1) = \sum_{c=0}^{2}\alpha_c^1 z_i(c) = 3\cdot\tfrac{1}{2}+2\cdot\tfrac{1}{3}+1\cdot\tfrac{1}{6} = \tfrac{7}{3}$.

(Banzhaf)  $\varphi_i^\textnormal{tot}(0) = \sum_{c=0}^{2}\alpha_c^0 z_i(c) = 3\cdot\tfrac{1}{4}+2\cdot\tfrac{1}{2}+1\cdot\tfrac{1}{4} = 2;$ 

\phantom{(Banzhaf)}$\varphi_i^\textnormal{tot}(1) = \sum_{c=0}^{2}\alpha_c^1 z_i(c) = 3\cdot\tfrac{1}{4}+2\cdot\tfrac{1}{2}+1\cdot\tfrac{1}{4} = 2$.
\end{example}

The above example demonstrates our observations from Fig.~\ref{fig:weights}: the importance weights $\alpha_c^k$ of Shapley value shifts from uniform in the coalition sizes $(\frac{1}{3}, \frac{1}{3}, \frac{1}{3})$, to having larger weights towards smaller coalition sizes $(\frac{1}{2}, \frac{1}{3}, \frac{1}{6})$ after replication. Due to submodularity, the average marginal contributions are larger for smaller coalition sizes, hence the total payoff increases as a result of replication. Whereas for the Banzhaf value, the importance weights $\alpha_c^k = (\frac{1}{4}, \frac{1}{2}, \frac{1}{4})$ are invariant under the first replication, hence the total payoff is unchanged. In the following section we provide a formal characterisation of these observations.

\subsection{Replication Robustness Condition}\label{subsec:robustness_conditions}
In this section, we will present the condition which characterises the replication robustness for semivalues. Specifically, this condition provides a sufficient and necessary condition on the importance weights $\alpha_c^ k$ for guaranteeing replication robustness against any arbitrary number of replications $k$. By using a replication robust solution concept, a player should have no incentive to perform any number of replications in order to increase its payoff.


\begin{theorem}[Replication Robustness Condition]
\label{thm:robustness_condition} 
Given a submodular game with replication redundant characteristic function, a solution concept of the form $\varphi_i = \sum_{c=0}^{|N|-1}\alpha_c z_i(c)$ 
is replication robust \emph{if and only if} for any number of replications $k$,
\begin{equation}\label{eq:robustness_condition}
\forall 0\leq p\leq |N|-1, \sum_{c=0}^p\alpha_c^0 \geq \sum_{c=0}^p\alpha_c^{k},
\end{equation}
where $\alpha_c^k$ are the importance weights as defined in Equation~\eqref{eq:alpha_ck}.
\end{theorem}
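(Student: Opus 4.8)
The plan is to collapse replication robustness into a single scalar inequality on the average marginal contributions and then attack that inequality with summation by parts. By Lemma~\ref{lemma:ack} the replicating player's total payoff is $\varphi_i^{\textnormal{tot}}(k)=\sum_{c=0}^{|N|-1}\alpha_c^{k}z_i(c)$, while its original payoff is the $k=0$ instance $\varphi_i(N,v)=\sum_{c=0}^{|N|-1}\alpha_c^{0}z_i(c)$. Hence, by Definition~\ref{def:replication_robustness}, $\varphi$ is replication robust \emph{iff} for every number of replications $k$ and every submodular replication-redundant game we have $\sum_{c=0}^{|N|-1}(\alpha_c^{0}-\alpha_c^{k})\,z_i(c)\geq 0$. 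Fixing $k$, writing $\beta_c=\alpha_c^{0}-\alpha_c^{k}$ and $B_p=\sum_{c=0}^{p}\beta_c$, the stated condition is precisely $B_p\geq0$ for all $0\leq p\leq|N|-1$. So the goal is to show, for each fixed $k$, that non-negativity of all the partial sums $B_p$ is equivalent to non-negativity of the weighted sum $\sum_c\beta_c z_i(c)$ across all admissible profiles $z_i(\cdot)$; the full theorem then follows by quantifying over $k$.

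The pivotal step is the Abel summation (summation-by-parts) identity
\[
\sum_{c=0}^{|N|-1}\beta_c\,z_i(c)\;=\;\sum_{p=0}^{|N|-2}B_p\big(z_i(p)-z_i(p+1)\big)\;+\;B_{|N|-1}\,z_i(|N|-1),
\]
which re-expresses the quantity of interest through the partial sums $B_p$ and the consecutive differences $z_i(p)-z_i(p+1)$. This is exactly the right rearrangement because Lemma~\ref{lemma:zc_monotone} guarantees $z_i(p)-z_i(p+1)\geq0$ for every $p$, and the boundary value $z_i(|N|-1)=\mc_i(N\setminus\{i\})\geq0$ under the non-negativity of marginal contributions in the (monotone) submodular games of interest.

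For the \emph{if} direction I would simply read off the identity: if $B_p\geq0$ for all $p\leq|N|-1$, then every summand is a product of non-negative factors, so the whole expression is non-negative, i.e. $\varphi_i(N,v)\geq\varphi_i^{\textnormal{tot}}(k)$, giving robustness. For the \emph{only if} direction I would argue contrapositively: suppose $B_{p^{*}}<0$ for some $p^{*}$ and some $k$. I would then exhibit one submodular replication-redundant game whose profile is the threshold profile $z_i(c)=1$ for $c\leq p^{*}$ and $z_i(c)=0$ for $c>p^{*}$; substituting into the identity makes all terms vanish except $B_{p^{*}}$, so $\sum_c\beta_c z_i(c)=B_{p^{*}}<0$, meaning the player strictly gains by replicating and $\varphi$ fails to be robust.

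The main obstacle is the realizability step in the \emph{only if} direction: I must verify that the threshold profile arises from a bona fide submodular game, not merely from a non-increasing sequence. I would build it explicitly as $v(\S)=\phi(|\S\setminus\{i\}|)+\mathbbm{1}[i\in\S]\,\psi(|\S\setminus\{i\}|)$ with $\psi(c)=\mathbbm{1}[c\leq p^{*}]$, which yields $\mc_i(\S)=\psi(|\S|)$ and hence the desired $z_i(c)$. Checking submodularity reduces, via the pairwise condition of Definition~\ref{def:submodularity}, to three cases: pairs avoiding $i$ require $\phi$ concave, pairs containing $i$ require $\phi+\psi$ concave, and mixed pairs require $\psi$ non-increasing. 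The first and third hold immediately, and the second is secured by choosing $\phi$ with a sufficiently large concave drop in its increments at $p^{*}$; the canonical replica-collapsing extension of Definition~\ref{def:manipulation} then satisfies Assumption~\ref{assumption:replication} automatically. A final bookkeeping point is that the $p=|N|-1$ case of the condition is exactly $B_{|N|-1}\geq0$ (equivalently $\sum_c\alpha_c^{k}\leq1$), which together with $z_i(|N|-1)\geq0$ disposes of the boundary term in the identity and makes the equivalence tight.
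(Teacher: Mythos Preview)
Your proof is correct and follows essentially the same approach as the paper. Your Abel summation identity is precisely the closed-form version of the paper's recursive inequality chain in the sufficiency direction, and your threshold profile $z_i(c)=\mathbbm{1}[c\le p^{*}]$ is exactly what the paper itself flags as the ``trivial choice'' before opting for a slightly perturbed variant to accommodate strictly submodular cases. The one place you go beyond the paper is in explicitly constructing a submodular game realising the threshold $z_i$ profile and verifying the three submodularity cases; the paper simply asserts that such $z_i(c)$'s can be chosen, so your treatment of realizability is in fact more careful than the original.
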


\begin{proof}
{\bfseries Sufficiency.}
We will first show the sufficient condition, that is, Equation~\eqref{eq:robustness_condition} implies replication robustness, i.e., $\varphi_i^\textnormal{tot}(0) - \varphi_i^\textnormal{tot}(k)\geq 0$.
Due to submodularity, the average marginal contributions of a player decrease as growing coalition sizes, according to Lemma~\ref{lemma:zc_monotone}. Together with replication redundancy, we have the average marginal contributions satisfy the following condition $z_i(0)\geq\ldots\geq z_i(|N|-1) \geq 0$. 

Let $\delta^k_c = \alpha_c^0  -  \alpha_c^k$ denote the difference in importance weight over coalition size $c$ before and after replication, then by Equation~\eqref{eq:robustness_condition}, $\forall p\in \{0, 1, \ldots , |N|-1\}, \sum_{c=0}^p \delta^k_c \geq 0$. Therefore, we proceed to show the inequality recursively:\\
    $\varphi_i^\textnormal{tot}(0) - \varphi_i^\textnormal{tot}(k)
        = \sum_{c=0}^{|N|-1} \delta^k_c z_i(c)\\
        \phantom{===}= z_i(0)\sum_{c=0}^{0} \delta^k_c + \sum_{c=1}^{|N|-1} \delta^k_c z_i(c) \\
        \phantom{===}\stackrel{(1)}{\geq} z_i(1)\sum_{c=0}^{0} \delta^k_c + \sum_{c=1}^{|N|-1} \delta^k_c z_i(c)  \\
        \phantom{===}=  z_i(1) \sum_{c=0}^{1} \delta^k_c + \sum_{c=2}^{|N|-1} \delta^k_c z_i(c) \\
        \phantom{===}\stackrel{(2)}{\geq} z_i(2)\sum_{c=0}^{1} \delta^k_c + \sum_{c=2}^{|N|-1} \delta^k_c z_i(c) \geq \ldots \\
        \phantom{===}= z_i(|N|-2)\sum_{c=0}^{|N|-2} \delta^k_c + \sum_{c=|N|-1}^{|N|-1} \delta^k_c z_i(|N|-1)\\
        \phantom{===}\geq z_i(|N|-1) \sum_{c=0}^{|N|-2} \delta^k_c + \sum_{c=|N|-1}^{|N|-1} \delta^k_c z_i(|N|-1) \\
        \phantom{===}=  z_i(|N|-1) \sum_{c=0}^{|N|-1} \delta^k_c 
        \geq 0.$

where $(1)$ is because $z_i(0) \geq z_i(1)$ and $\sum_{c=0}^0 \delta_c \geq 0$, and $(2)$ is because $z_i(1) \geq z_i(2)$ and $ \sum_{c=0}^1 \delta_c \geq 0$.
With this, we have shown the sufficient condition, and we will next show the necessary condition.

\noindent
{\bfseries Necessity.}
We now show that Equation~\eqref{eq:robustness_condition} is also a necessary condition. 
To do this, we will prove by contradiction:

Let $\Tilde{\delta}_c^k \coloneqq \Tilde{\alpha_c^{0}} -\Tilde{\alpha_c^{k}}$.
Recall in Equation~\eqref{eq:robustness_condition} for any coalition size $0\leq p\leq |N|-1, \sum_{c=0}^{q} \Tilde{\delta}_c^k \geq 0$. Assume the contrary that there exists a set of coalition sizes (index) $q_m$ where the condition does not hold: $$\exists Q_m=\{q_0, q_1,\ldots, q_m\}, \textnormal{ such that } \forall q\in Q_m, \sum_{c=0}^{q} \Tilde{\delta}_c^k < 0, $$ 
Without loss of generality, we assume the coalition sizes are ordered and that  $q_0<q_1<\ldots<q_m\leq |N|-1$.
We now show that there exist average marginal contributions $z_i(c)$'s which violates replication robustness, and we construct them as follows: Looking at the smallest index that causes the contrary assumption $q_0 = \min Q_m$, all indices below $q_0$ satisfy the original condition, i.e.,
\begin{align*}
    &\sum_{c=0}^p \Tilde{\delta}_c^k 
        \begin{cases}   < 0 & \textit{ if } p= q_0\\
                        \geq 0 & \textit{ if } p< q_0
        \end{cases}
\end{align*}

Therefore, the sum of $\Tilde{\delta_c^k}$ over all indices under $q_0$ is less than the absolute value of that at $q_0$, i.e., $0 \leq \sum_{c=0}^{q_0-1}\Tilde{\delta}^k_c < - \Tilde{\delta}^k_{q_0} = |\Tilde{\delta}^k_{q_0}|$.
Therefore, we denote $\gamma <1$ as the ratio, such that
$\sum_{c=0}^{q_0-1} \Tilde{\delta}_c^k = \gamma |\Tilde{\delta}_{q_0}^k|$.
To construct $z_i(c)$, we let $\forall c > q_0, z_i(c) = 0$ for all indices above $q_0$,  and let  $z_i(q_0) = \gamma z_i(0) + \epsilon$ where $0<\epsilon \leq (1-\gamma)z_i(0)$. Note that the $\epsilon>0$ is for $z_i(q_0)$ to be strictly greater than $\gamma z_i(0)$, and $\epsilon \leq (1-\gamma)z_i(0)$ guarantees submodularity where $z_i(q_0) \leq z_i(0)$. In fact, a trivial choice would be a constant function for all indices no greater than $q_0$, i.e., $\forall q\in\{0, \ldots , q_0\}, z_i(q) = Const.$, but we will adopt the former option which also accounts for strictly submodular cases. Then, we have\\
$ \Tilde{\varphi}_i^\textnormal{tot}(0) - \Tilde{\varphi}_i^\textnormal{tot}(k) 
    = \sum_{c=0}^{|N|-1}\Tilde{\delta}_c^k z_i(c)\\
    \phantom{===}\stackrel{(1)}{=} \sum_{c=0}^{q_0}\Tilde{\delta}_c^k z_i(c)
    = (\sum_{c=0}^{q_0-1}\Tilde{\delta}_c^k z_i(c)) + \Tilde{\delta}_{q_0}^k z_i(q_0)\\
    \phantom{===}\stackrel{(2)}{\leq} (\sum_{c=0}^{q_0-1}\Tilde{\delta}_c^k) z_i(0) + \Tilde{\delta}_{q_0}^k z_i(q_0)
    = |\Tilde{\delta}_{q_0}^k| (\gamma z_i(0) - z_i(q_0))\\
    \phantom{===}= |\Tilde{\delta}_{q_0}^k| (\gamma z_i(0) - \gamma z_i(0) - \epsilon)
    = -\epsilon|\Tilde{\delta}_{q_0}^k|
 < 0
$,

\noindent
where $(1)$ is due to $\forall q_0< q \leq |N|-1,  z_i(q)=0$, and $(2)$ is due to submodularity.

This forms a contradiction. Thus we have shown that Theorem~\ref{thm:robustness_condition} is both a necessary and sufficient condition for replication robustness, and this concludes our proof.
\end{proof}

Intuitively, the condition ensures that after replication, there is not significant increase in importance weight on the small coalition sizes, towards which a player has larger average marginal contributions $z_i(c)$ due to submodularity. This effect was illustrated in Fig.~\ref{fig:weights} and the theorem is a formal characterisation.
The significance of this theorem is that it provides a necessary and sufficient condition for guaranteeing replication robustness for all semivalues and for any number of replications. Therefore, a solution concept that satisfies the condition is replication robust without the need of knowing the number of replications $k$ or the replicated false identities.
Extending from the necessary and sufficient condition in Theorem~\ref{thm:robustness_condition}, the following two corollaries provide sufficient conditions for monotonic decreasing (Corollary~\ref{thm:monotonicity_condition}) and monotonic increasing (Corollary~\ref{thm:monotonicity_condition_shapley}) total payoff of the replicating player with respect to the number of replications. These conditions will help us characterise the robustness of the common semivalues.

\begin{corollary}[Monotonic Decreasing Total Payoff]
\label{thm:monotonicity_condition}
Given a submodular game with a replication redundant characteristic function, a semivalue is replication robust and the total payoff of the malicious player decreases monotonically i.e.,$\varphi_i^\textnormal{tot}(k) \! \geq \! \varphi_i^\textnormal{tot}(k+1)$, \emph{if} for any number of replications $k$,
\begin{equation}\label{eq:robustness_sufficient_condition}
\forall 0\leq p\leq N-1, \sum_{c=0}^p \alpha_c^{k}  \geq  \sum_{c=0}^p  \alpha_c^{k+1} 
\end{equation}
\end{corollary}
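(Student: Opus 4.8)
The plan is to prove Corollary~\ref{thm:monotonicity_condition} by reducing it to the already-established sufficiency direction of Theorem~\ref{thm:robustness_condition}. The key observation is that the monotonic-decreasing condition in Equation~\eqref{eq:robustness_sufficient_condition} is a ``telescoping'' or per-step version of the condition in Equation~\eqref{eq:robustness_condition}: it compares consecutive replication counts $k$ and $k+1$, whereas the robustness condition compares the baseline $k=0$ against an arbitrary $k$. So there are two things to establish: first, that Equation~\eqref{eq:robustness_sufficient_condition} yields the single-step monotonicity $\varphi_i^\textnormal{tot}(k) \geq \varphi_i^\textnormal{tot}(k+1)$, and second, that it implies full replication robustness $\varphi_i^\textnormal{tot}(0) \geq \varphi_i^\textnormal{tot}(k)$ for every $k$.

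\textbf{Step one (monotonicity).} First I would show the single-step inequality directly. Fix an arbitrary $k$. By Lemma~\ref{lemma:ack} we have $\varphi_i^\textnormal{tot}(k) = \sum_{c=0}^{|N|-1}\alpha_c^k z_i(c)$ and likewise for $k+1$, so $\varphi_i^\textnormal{tot}(k) - \varphi_i^\textnormal{tot}(k+1) = \sum_{c=0}^{|N|-1}(\alpha_c^k - \alpha_c^{k+1}) z_i(c)$. Setting $\delta_c = \alpha_c^k - \alpha_c^{k+1}$, the hypothesis Equation~\eqref{eq:robustness_sufficient_condition} gives exactly the prefix-sum condition $\sum_{c=0}^p \delta_c \geq 0$ for all $p$, while Lemma~\ref{lemma:zc_monotone} together with replication redundancy gives $z_i(0) \geq \cdots \geq z_i(|N|-1) \geq 0$. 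This is precisely the setting of the recursive/Abel-summation argument used in the sufficiency proof of Theorem~\ref{thm:robustness_condition}, so the identical chain of inequalities yields $\varphi_i^\textnormal{tot}(k) - \varphi_i^\textnormal{tot}(k+1) \geq 0$. I would either replay that telescoping computation verbatim with $\delta_c$ in place of $\delta_c^k$, or simply cite it, since the algebraic content is word-for-word the same once one substitutes the per-step weight differences.

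\textbf{Step two (robustness).} To obtain full replication robustness, I would chain the per-step inequalities: summing (or rather iterating) the single-step result gives $\varphi_i^\textnormal{tot}(0) \geq \varphi_i^\textnormal{tot}(1) \geq \cdots \geq \varphi_i^\textnormal{tot}(k)$, hence $\varphi_i^\textnormal{tot}(0) \geq \varphi_i^\textnormal{tot}(k)$, which is the definition of replication robustness (Definition~\ref{def:replication_robustness}). Equivalently, one can note that the prefix-sum condition Equation~\eqref{eq:robustness_sufficient_condition} telescopes: $\sum_{c=0}^p(\alpha_c^0 - \alpha_c^k) = \sum_{j=0}^{k-1}\sum_{c=0}^p(\alpha_c^j - \alpha_c^{j+1}) \geq 0$, which is exactly Equation~\eqref{eq:robustness_condition}, so robustness follows from the already-proven sufficiency direction of Theorem~\ref{thm:robustness_condition} as a black box.

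\textbf{The main obstacle} is not any single hard estimate but rather recognizing that essentially no new work is needed: the corollary is a strengthening whose proof is a direct specialization of Theorem~\ref{thm:robustness_condition}'s sufficiency argument, and the only genuinely new content is the telescoping step that derives the global condition from the per-step condition. I expect the write-up to be short, with the bulk of the justification delegated to Lemma~\ref{lemma:zc_monotone}, Lemma~\ref{lemma:ack}, and the sufficiency half of Theorem~\ref{thm:robustness_condition}; care is only required to state clearly that the prefix sums over $\delta_c = \alpha_c^k - \alpha_c^{k+1}$ inherit nonnegativity from the hypothesis so that the recursive inequality chain applies unchanged.
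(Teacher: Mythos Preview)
Your proposal is correct and follows essentially the same route as the paper: define $\delta_c = \alpha_c^k - \alpha_c^{k+1}$, invoke the prefix-sum hypothesis, and reuse the recursive (Abel-summation) chain from the sufficiency proof of Theorem~\ref{thm:robustness_condition} verbatim to obtain $\varphi_i^\textnormal{tot}(k) \geq \varphi_i^\textnormal{tot}(k+1)$. Your Step two, which explicitly telescopes to recover replication robustness, is a small elaboration that the paper leaves implicit but is clearly what is intended.
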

Note that the condition stated in Corollary~\ref{thm:monotonicity_condition} is stricter than that in Theorem~\ref{thm:robustness_condition} which implied $\varphi_i^\textnormal{tot}(0)\geq \varphi_i^\textnormal{tot}(k)$, but additionally ensures that the total payoff of a replicating player monotonic decreases with the number of replications, i.e., $\forall k\geq 0, \varphi_i^\textnormal{tot}(k) \! \geq \! \varphi_i^\textnormal{tot}(k+1)$.
\begin{proof}[Proof Sketch]
We need to show that Equation~\eqref{eq:robustness_sufficient_condition} implies the total payoff decreases with $k$:
$$
\forall k, \varphi_i^\textnormal{tot}(k) - \varphi_i^\textnormal{tot}(k+1)
= \sum_{c=0}^{|N|-1} (\alpha_c^{k}- \alpha_c^{k+1})z_i(c) \geq 0,
$$
\noindent
Denote $\delta_c^k \coloneqq \alpha_c^{k} - \alpha_c^{k+1}$, then we can substitute $\delta_c^k$ in the recursive proof for the sufficient condition of Theorem~\ref{thm:robustness_condition}, by doing so we will reach the above conclusion.
\end{proof}

\begin{corollary}[Monotonic Increasing Total Payoff]
\label{thm:monotonicity_condition_shapley}
Given a submodular game with a replication redundant characteristic function, a semivalue is not replication robust, and the total payoff of the malicious player increases monotonically i.e.,$\varphi_i^\textnormal{tot}(k)  \leq \varphi_i^\textnormal{tot}(k+1)$, \emph{if} for any number of replications $k$,
\begin{equation} \forall 0 \leq p \leq N-1, \sum_{c=0}^{p} \alpha_c^{k+1} \geq \sum_{c=0}^{p} \alpha_c^{k}\label{eq:shapley_alpha_ck}
\end{equation}
\end{corollary}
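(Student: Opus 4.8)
The plan is to mirror, step for step, the telescoping argument used in the sufficiency direction of Theorem~\ref{thm:robustness_condition} and in Corollary~\ref{thm:monotonicity_condition}, with only the roles of consecutive replication counts swapped. First I would use Lemma~\ref{lemma:ack} to write the one-step change in the malicious player's total payoff as a single weighted sum over the \emph{invariant} average marginal contributions of the original game,
\begin{equation*}
\varphi_i^\textnormal{tot}(k+1) - \varphi_i^\textnormal{tot}(k) = \sum_{c=0}^{|N|-1} \big(\alpha_c^{k+1} - \alpha_c^{k}\big)\, z_i(c),
\end{equation*}
and define $\delta_c^k \coloneqq \alpha_c^{k+1} - \alpha_c^{k}$. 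The hypothesis \eqref{eq:shapley_alpha_ck} states precisely that every prefix sum of these differences is nonnegative, i.e.\ $\sum_{c=0}^{p}\delta_c^k \geq 0$ for all $0\leq p\leq |N|-1$.

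Second, I would invoke the two structural facts already established earlier in the excerpt: by Lemma~\ref{lemma:zc_monotone} together with replication redundancy, the average marginal contributions form a nonincreasing, nonnegative sequence $z_i(0)\geq z_i(1)\geq\cdots\geq z_i(|N|-1)\geq 0$. With the nonnegative prefix sums of $\delta_c^k$ on one side and the nonincreasing nonnegative $z_i(c)$ on the other, the exact recursive (Abel-summation) chain from the sufficiency proof of Theorem~\ref{thm:robustness_condition} applies verbatim: at each step I absorb the current boundary weight $\delta_c^k$ into the running prefix sum and lower the attached value from $z_i(c)$ to $z_i(c+1)$, each such replacement being valid because the accumulated prefix sum is nonnegative and $z_i(c)\geq z_i(c+1)$. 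After $|N|-1$ such steps the expression collapses to $z_i(|N|-1)\sum_{c=0}^{|N|-1}\delta_c^k \geq 0$, the final factor being nonnegative by the $p=|N|-1$ instance of the hypothesis and $z_i(|N|-1)\geq 0$. This yields $\varphi_i^\textnormal{tot}(k+1)\geq \varphi_i^\textnormal{tot}(k)$ for every $k$, i.e.\ the claimed monotonic increase.

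Finally, for the non-robustness claim I would chain the monotonicity across $k$ to obtain $\varphi_i^\textnormal{tot}(k)\geq \varphi_i^\textnormal{tot}(0)$ for all $k\geq 1$; equivalently, summing \eqref{eq:shapley_alpha_ck} telescopically over replication counts gives $\sum_{c=0}^{p}\alpha_c^{k}\geq \sum_{c=0}^{p}\alpha_c^{0}$, the exact reversal of the robustness condition \eqref{eq:robustness_condition}. Hence whenever these prefix inequalities are strict for some $p$ (as happens, for instance, for the Shapley value, cf.\ Theorem~\ref{thm:shapleysinglerobustness}), the total payoff strictly exceeds the original payoff and the inequality in Definition~\ref{def:replication_robustness} is violated, so the semivalue is not replication robust. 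I expect no serious obstacle, since the whole argument reuses machinery proved above; the one point requiring care is bookkeeping the direction of the two inequalities (nonnegative $\delta_c^k$-prefixes versus nonincreasing $z_i$) so that the telescoping preserves the $\geq$ sign throughout, making this the clean mirror image of the decreasing case rather than a genuinely new computation.
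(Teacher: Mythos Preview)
Your proposal is correct and follows essentially the same approach as the paper: define $\delta_c^k \coloneqq \alpha_c^{k+1} - \alpha_c^{k}$ and reuse verbatim the recursive (Abel-summation) argument from the sufficiency proof of Theorem~\ref{thm:robustness_condition}. Your treatment is in fact more complete than the paper's brief sketch, since you explicitly address the ``not replication robust'' clause and correctly note that a strict violation of Definition~\ref{def:replication_robustness} requires at least one strict prefix inequality---a subtlety the paper's proof sketch passes over.
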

\begin{proof}[Proof Sketch]
We need to show that Equation~\eqref{eq:shapley_alpha_ck} implies the total payoff increases with $k$:
$$
\forall k, \varphi_i^\textnormal{tot}(k+1) - \varphi_i^\textnormal{tot}(k) 
= \sum_{c=0}^{|N|-1} (\alpha_c^{k+1} - \alpha_c^{k})z_i(c) \geq 0,
$$
The proof is similar to the monotonic increasing case above. Denote $\delta_c^k \coloneqq \alpha_c^{k+1} - \alpha_c^k$, and we can reuse the proof for Theorem~\ref{thm:robustness_condition} to reach the above conclusion.\end{proof}


\subsection{Robustness of Common Solution Concepts}\label{subsec:robustness_common}
Now using the robustness conditions presented in the previous section, we can revisit the Shapley value and the Banzhaf value, as well as Leave-one-out with $k\geq 1$ replications. The following theorem is a consequence of our robustness condition for the three common semivalues.

\begin{restatable}[]{theorem}{shapleysubmodular}
\label{thm:gain_increase}
Let $G=(N,v)$ be a submodular game where $v$ is replication redundant, the Shapley value is not replication robust, whereas the Banzhaf value and Leave-one-out are replication robust.
For the Shapley value, the total payoff of the replicating player $i$ monotonic increases over the number of replicas, and converges to $i$'s characteristic value, i.e.,
    $\lim_{k\rightarrow{\infty}}\varphi^\textnormal{tot}_{i}(k) = v(\{i\}).$ For the Banzhaf and Leave-one-out values, $\lim_{k\rightarrow{\infty}}\varphi^\textnormal{tot}_{i}(k) = 0.$
\end{restatable}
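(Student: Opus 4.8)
The plan is to prove Theorem~\ref{thm:gain_increase} by invoking the characterisation machinery already established, applying it to the explicit new importance weights $\alpha_c^k$ computed in Corollary~\ref{lemma:example_alpha_ck}, and then handling the limits separately. The key observation is that all three solution concepts have their $\alpha_c^k$ in closed form, so each claim reduces to verifying a partial-sum inequality of the type appearing in Theorem~\ref{thm:robustness_condition} and Corollaries~\ref{thm:monotonicity_condition}--\ref{thm:monotonicity_condition_shapley}.

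First I would dispatch the Banzhaf and Leave-one-out cases, which are the easy direction. For Leave-one-out, $\alpha_c^k = \mathbbm{1}_{c=|N|-1,\,k=0}$, so for any $k\geq 1$ all weights are zero; the partial sums $\sum_{c=0}^p \alpha_c^k$ are trivially $\leq \sum_{c=0}^p \alpha_c^0$, giving robustness via Theorem~\ref{thm:robustness_condition}, and $\varphi_i^\textnormal{tot}(k)=0$ for $k\geq 1$ yields the limit. For the Banzhaf value, $\alpha_c^k = \tfrac{(k+1)}{2^{|N|+k-1}}\binom{|N|-1}{c}$, so consecutive ratios $\alpha_c^{k+1}/\alpha_c^k = \tfrac{k+2}{2(k+1)}\leq 1$ are independent of $c$; hence $\alpha_c^{k+1}\leq\alpha_c^k$ pointwise, so every partial sum decreases and Corollary~\ref{thm:monotonicity_condition} applies, giving monotone-decreasing robustness. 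The limit follows since the scalar prefactor $(k+1)/2^{k}\to 0$ drives the whole weighted sum to zero.

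The Shapley case is the substantive part. Here $\alpha_c^k = \tfrac{(k+1)\binom{|N|-1}{c}}{(|N|+k)\binom{|N|+k-1}{c}}$, and I would verify the hypothesis of Corollary~\ref{thm:monotonicity_condition_shapley}, namely that the partial sums $\sum_{c=0}^p\alpha_c^k$ are nondecreasing in $k$ for every $p$. The natural strategy is to show that as $k$ increases the weight mass shifts toward smaller $c$ (consistent with Fig.~\ref{fig:weights}): one shows the ratio $\alpha_c^{k+1}/\alpha_c^k$ is decreasing in $c$, which is a single-crossing/majorization property implying that the cumulative sums $\sum_{c=0}^p\alpha_c^k$ increase with $k$. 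Concretely, $\alpha_c^{k+1}/\alpha_c^k$ simplifies to a ratio of the form $\tfrac{(k+2)(|N|+k)}{(k+1)(|N|+k+1)}\cdot\tfrac{\binom{|N|+k-1}{c}}{\binom{|N|+k}{c}}$, and the last factor equals $\tfrac{|N|+k-c}{|N|+k}$, which is strictly decreasing in $c$; since both distributions sum to one, a decreasing likelihood ratio forces the partial-sum dominance $\sum_{c=0}^p\alpha_c^{k+1}\geq\sum_{c=0}^p\alpha_c^k$. Invoking Corollary~\ref{thm:monotonicity_condition_shapley} then gives monotone increase and failure of robustness.

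Finally I would compute the limit for Shapley. Pointwise, $\alpha_c^k = \tfrac{(k+1)\binom{|N|-1}{c}}{(|N|+k)\binom{|N|+k-1}{c}}\to\mathbbm{1}_{c=0}$ as $k\to\infty$, because for $c=0$ the ratio tends to $1$ while for each fixed $c\geq 1$ the factor $\binom{|N|-1}{c}/\binom{|N|+k-1}{c}\sim k^{-c}$ dominates and sends the term to $0$. Hence $\varphi_i^\textnormal{tot}(k)\to z_i(0)=\mc_i(\emptyset)=v(\{i\})-v(\emptyset)=v(\{i\})$ (using $v(\emptyset)=0$, the standard normalisation). The main obstacle is the Shapley single-crossing step: establishing the decreasing-likelihood-ratio property cleanly and arguing rigorously that it yields cumulative dominance rather than merely asserting it from the figure. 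I would isolate that as a short monotonicity lemma about ratios of binomial coefficients to keep the argument transparent, and everything else then follows mechanically from the earlier corollaries.
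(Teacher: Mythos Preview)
Your treatment of the Banzhaf and Leave-one-out cases is essentially identical to the paper's: both show the pointwise inequality $\alpha_c^{k+1}\leq\alpha_c^k$ (trivially for LOO, via the ratio $\tfrac{k+2}{2(k+1)}\leq 1$ for Banzhaf) and invoke Corollary~\ref{thm:monotonicity_condition}; the limits are handled the same way.

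For the Shapley value you take a genuinely different route. The paper proves the partial-sum inequality $\sum_{c=0}^p\alpha_c^{k}\leq\sum_{c=0}^p\alpha_c^{k+1}$ by computing the partial sums in closed form via the Hockey-Stick identity (this is packaged as Lemma~\ref{lemma:sum_alpha_properties}, Equation~\eqref{subeq:monotonicity}), obtaining $\sum_{c=0}^p\alpha_c^k = 1 - \tfrac{(|N|-1)!}{(|N|-p-2)!}\prod_{j=0}^{p}(|N|+k-j)^{-1}$ and reading off monotonicity in $k$. Your approach instead establishes a monotone likelihood ratio---that $\alpha_c^{k+1}/\alpha_c^k = \textnormal{const}\cdot\tfrac{|N|+k-c}{|N|+k}$ is strictly decreasing in $c$---and concludes cumulative dominance from a single-crossing argument. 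This is cleaner conceptually and sidesteps the explicit partial-sum computation.

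There is, however, a gap you should not gloss over. Your single-crossing step requires that $\sum_{c=0}^{|N|-1}\alpha_c^k = \sum_{c=0}^{|N|-1}\alpha_c^{k+1}$; you write ``since both distributions sum to one'' as if this were automatic. It is not. The identity $\sum_{c=0}^{|N|-1}\alpha_c^k = 1$ for all $k\geq 0$ is a special feature of the Shapley weights (the paper isolates it as Equation~\eqref{subeq:efficiency} and proves it via the Hockey-Stick identity), not a consequence of the semivalue normalisation, which only gives $\sum_c\alpha_c^0=1$ in the original game. The new weights $\alpha_c^k$ are sums over $c\leq |N|-1$ of quantities defined in the enlarged game with $|N|+k$ players, and for general semivalues they need not sum to one (indeed for Banzhaf they do not). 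Without equal totals the decreasing likelihood ratio tells you nothing about cumulative sums. So you still owe a computation of comparable weight to the paper's; the difference is that yours goes into proving the normalisation rather than the partial sums directly.
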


\begin{proof}[Proof] \textbf{For the Shapley Value.} We prove that the Shapley value is not replication robust, and the total payoff of the replicating player monotonic increases with growing $k$ in the following three steps: 

1. Express the Shapley value after replication according to Lemma~\ref{lemma:ack} in terms of average marginal contributions and importance weights, i.e.,
 $\alpha_c^k = \frac{(k+1)}{(|N|+k)}\binom{|N|-1}{c}\binom{|N|+k-1}{c}^{-1}.$

2. In Lemma~\ref{lemma:sum_alpha_properties} (presented following this theorem), we show that the Shapley value satisfies Equation~\eqref{subeq:monotonicity}:
$$\forall 0\leq p \leq N-1, \quad \sum_{c=0}^p\alpha_c^k \leq \sum_{c=0}^p\alpha_c^{k+1}.$$

3. By Theorem~\ref{thm:robustness_condition}, the Shapley value is not replication robust. In addition, Corollary~\ref{thm:monotonicity_condition} shows that for the Shapley value, the total payoff of the replicating player monotonic increases with respect to increasing number of replications $k$.

Finally, the limit is computed as follows:

\begin{align*}
\lim_{k\rightarrow{\infty}}\varphi_i^\textnormal{tot}(k) 
    &= \lim_{k\rightarrow{\infty}}\sum_{c=0}^{|N|-1}\tfrac{k+1}{|N|+k}\tbinom{|N|-1}{c}{\tbinom{|N|+k-1}{c}}^{-1} z_i(c)\\
    &= \sum_{c=0}^{|N|-1}\tbinom{|N|-1}{c}z_i(c)\underbrace{\lim_{k\rightarrow{\infty}}\!\tfrac{k+1}{|N|+k}}_{=1} \underbrace{\lim_{k\rightarrow{\infty}}\!{\tbinom{|N|+k-1}{c}}^{-1}}_{=\mathbbm{1}_{c=0}}\\
    &= z_i(0) = \mc_i(\emptyset) = v(\{i\})
\end{align*}
Proofs of the Banzhaf value and Leave-one-out value are in Appendix~\ref{appendix:gain_increase}.
\end{proof}

We observe that due to replication redundancy and submodularity, the solution concepts which emphasize the complementary value tend to be more replication robust. 

The robustness property of the Shapley value generalises our findings in Section~\ref{subsec:k=1_shapley} for the $k=1$ case. With an increasing number of replications, the player's total payoff monotonic increases and converges to its own characteristic value. This is due to the following properties shown in the next lemma.

\begin{restatable}{lemma}{shapleyproperties}
\label{lemma:sum_alpha_properties} 
For the Shapley value, the importance weights $\alpha_c^k$ of the total payoff of a replicating player satisfy the following properties: $ \forall k\geq 0, \forall 0\leq p\leq |N|-1, $
\begin{subequations}
\begin{align}
 &\sum_{c=0}^{|N|-1}\alpha_c^k = 1\label{subeq:efficiency}\\
 &\sum_{c=0}^p\alpha_c^k \leq \sum_{c=0}^p\alpha_c^{k+1}\label{subeq:monotonicity}\\
&\sum_{c=0}^p\alpha_c^{k+1} - \alpha_c^k \geq \sum_{c=0}^p\alpha_c^{k+2}- \alpha_c^{k+1}\label{subeq:submodularity}
\end{align}
\end{subequations}
\end{restatable}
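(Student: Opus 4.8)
The plan is to reduce all three properties to a single clean closed form for the partial sums of the Shapley importance weights, and then read off efficiency, monotonicity, and concavity from an explicit product representation. Write $n=|N|$. Starting from the Shapley weights of Corollary~\ref{lemma:example_alpha_ck}, $\alpha_c^k = \frac{(k+1)\binom{n-1}{c}}{(n+k)\binom{n+k-1}{c}}$, I would first rewrite the binomial ratio via the identity $\binom{m}{c}/\binom{m+k}{c} = \binom{m+k-c}{k}/\binom{m+k}{k}$ (with $m=n-1$) and absorb the prefactor, obtaining the compact form
\[
\alpha_c^k = \frac{\binom{n+k-1-c}{k}}{\binom{n+k}{k+1}} \geq 0 .
\]
This makes nonnegativity transparent and sets up the identities below. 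A useful guide is that $\alpha_c^k$ equals the probability that the minimum of a uniformly random $(k+1)$-subset of $\{1,\dots,n+k\}$ equals $c+1$, which already suggests why all three properties should hold.

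For the efficiency property \eqref{subeq:efficiency} I would sum the numerators and apply the hockey-stick identity $\sum_{c=0}^{n-1}\binom{n+k-1-c}{k} = \sum_{j=0}^{n-1}\binom{k+j}{k} = \binom{n+k}{k+1}$, which cancels the denominator to give $\sum_{c=0}^{n-1}\alpha_c^k = 1$. More importantly, the same complement-counting argument yields a closed form for \emph{every} partial sum: counting the $(k+1)$-subsets whose minimum exceeds $p+1$ gives
\[
S_p^k := \sum_{c=0}^{p}\alpha_c^k = 1 - \frac{\binom{n+k-1-p}{k+1}}{\binom{n+k}{k+1}} = 1 - \prod_{j=n-1-p}^{n-1}\frac{j}{j+k+1},
\]
where the final product (over $p+1$ strictly positive factors when $p\le n-2$) follows by writing the binomial ratio as a telescoping quotient of falling factorials. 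The corner case $p=n-1$ gives $S_p^k=1$ for all $k$, so \eqref{subeq:monotonicity} and \eqref{subeq:submodularity} hold there as equalities; for $p\le n-2$ all factors are positive.

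Both remaining properties then follow from the single product $T_p^k := \prod_{j=n-1-p}^{n-1}\frac{j}{j+k+1}$. For monotonicity \eqref{subeq:monotonicity}, each factor $\frac{j}{j+k+1}$ is decreasing in $k$, hence $T_p^k$ is decreasing and $S_p^k = 1-T_p^k$ is increasing in $k$, i.e.\ $\sum_{c=0}^p\alpha_c^k \le \sum_{c=0}^p\alpha_c^{k+1}$. For the submodularity property \eqref{subeq:submodularity}, which is precisely the concavity of $S_p^k$ in $k$ (equivalently $2S_p^{k+1}\ge S_p^k+S_p^{k+2}$), I would show instead that $T_p^k$ is \emph{convex} in $k$. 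Each factor is log-convex in $k$, since $-\ln(j+k+1)$ is convex; checking this factorwise is elementary, as $\big(\tfrac{j}{j+k+2}\big)^2 \le \tfrac{j}{j+k+1}\cdot\tfrac{j}{j+k+3}$ reduces to $(j+k+2)^2-1 \le (j+k+2)^2$. Multiplying these inequalities over $j$ gives $(T_p^{k+1})^2 \le T_p^k\,T_p^{k+2}$, and combining with AM-GM yields $2T_p^{k+1} \le T_p^k + T_p^{k+2}$, i.e.\ the desired concavity of $S_p^k$.

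I expect the submodularity property \eqref{subeq:submodularity} to be the main obstacle: unlike the first two, it is a second-order (concavity-in-$k$) statement that does not drop out of a monotone-likelihood-ratio or stochastic-dominance argument. The crux is realising that, once the partial sum is written as $1$ minus the product of the elementary factors $\frac{j}{j+k+1}$, convexity reduces to the factorwise log-convexity above; deriving that product representation — equivalently, the probabilistic "minimum of a random subset" picture — is the key step that makes both \eqref{subeq:monotonicity} and \eqref{subeq:submodularity} short. The only care needed is to treat the degenerate factor $j=0$ (the case $p=n-1$) separately, where the partial sum is constantly $1$ and every inequality is an equality.
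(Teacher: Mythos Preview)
Your proof is correct and largely parallels the paper's: both derive the same closed form $S_p^k = 1 - \binom{n+k-1-p}{k+1}\big/\binom{n+k}{k+1}$ via the hockey-stick identity, and \eqref{subeq:efficiency}--\eqref{subeq:monotonicity} then fall out immediately (the paper subtracts the two fractions directly, you note each factor $j/(j+k+1)$ decreases in $k$; these are the same computation). The genuine difference is in \eqref{subeq:submodularity}: the paper simply computes $\delta^{k+1}-\delta^k$ algebraically and reads off the sign, whereas you argue that $T_p^k$ is log-convex factorwise and pass to convexity via AM--GM. Your route is a little more conceptual and explains \emph{why} concavity holds (it is inherited from the elementary inequality $(m-1)(m+1)\le m^2$), and the probabilistic reading of $\alpha_c^k$ as a minimum-of-subset distribution is a nice organising device the paper does not mention; the paper's direct computation, on the other hand, avoids the detour through AM--GM and gives an explicit formula for the second difference.
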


\begin{proof}
The complete proof is included in Appendix~\ref{appendix:shapley_replication}.
\end{proof}

Equation~\eqref{subeq:efficiency} describes an interesting phenomenon that the new importance weights of the Shapley value after replication (i.e., $\alpha_c^k$) always sum to 1. This is a special property of the Shapley value which is not shared by all semivalues. In contrast to the property of semivalues, i.e., $\sum_{c=0}^{|N|-1}\alpha_c = 1$, Equation~\eqref{subeq:efficiency} states that the sum of the new importance weights after replication $\alpha_c^k$ over the coalitions of honest players in the original game always sum to $1$. Moreover,
Equation~\eqref{subeq:monotonicity} shows that these importance weights gradually \emph{shifts} towards the smaller coalitions with each added replication, which results in the monotonic increasing total payoff. Collectively, Equation~\eqref{subeq:efficiency} and Equation~\eqref{subeq:monotonicity} results in the convergence of the malicious player's total payoff to its characteristic value.
Additionally, Equation~\eqref{subeq:submodularity} implies that the gain of adding one replica decreases with replication, hence the first replication yields the highest unit gain.

To summarise, we have analysed and compared the replication robustness of the common semivalues, namely, the Shapley value, the Banzhaf value and the Leave-one-out value. In the following section, we discuss the design of other replication robust solution concepts.

\subsection{Other Replication Robust Payoff Allocations}\label{sec:robustness_solutions}
In this section, we describe how to apply the replication robustness condition to find other robust solution concepts and illustrate this with an example robust solution concept derived from the Shapley value.

\emph{Observation 1:} To satisfy the robustness conditions in Theorem~\ref{thm:robustness_condition}, it suffices to satisfy one of the following conditions for each summand of coalition size $c$:
\begin{align}\label{eq:robustness_sufficient_condition_summand}
&\alpha_c^{0} \geq \alpha_c^{k},
\textit{ or monotonicity: }\alpha_c^{k} \geq \alpha_c^{k+1}
\end{align}

\emph{Observation 2:} The identity of the replicating player and the number of replicas $k$ are private information that is often not accessible. Therefore, we should make sure that $k$ does not appear in the solution concept.

We now derive a robust solution by down-weighing the Shapley value using these two observations. Our solution will take the following form, where the factor $\gamma_{|N|}^{|\S|}$ is a function of the total number of players $|N|$ and coalition size $|\S|$:
\begin{align}
  &\Tilde{\varphi}_i(N, v) \coloneqq \sum_{\S\subseteq N \setminus \{i\}} \gamma_{|N|}^{|\S|} \weight{|\S|}{N} \mc_i(\S) \label{eq:downweighed-shapley}
\end{align}
where $\weight{|\S|}{N} = \frac{|\S|!(|N|-|\S| -1)!}{|N|!}$ are the Shapley coefficients.

\begin{restatable}{definition}{thmGamma}{(Robust Shapley value)} Equation~(\ref{eq:downweighed-shapley}) with 
\label{thm:gamma}
$$
    \gamma_{|N|}^{|\S|} = \begin{cases}
            \frac{\lceil \tfrac{|N|-1}{2}\rceil!\lfloor\tfrac{|N|-1}{2}\rfloor!}{|\S|!(|N|-|\S|-1)!} &\textit{if } |\S| < \lfloor{\tfrac{|N|-1}{2}}\rfloor{},\\
             1 &\textit{otherwise}.
    \end{cases}
$$
defines the Robust Shapley value.
\end{restatable}
\begin{restatable}{corollary}{corollaryloss}{}\label{corollary:robust_shapley_loss}
    The Robust Shapley value is replication robust. Moreover, in a submodular game $G=(N, v)$, the loss for a replicating player $i$ by replicating $k$ times 
		$
		\varphi^\textnormal{tot}_i(0) - \varphi^\textnormal{tot}_i(k)\geq \tfrac{1}{|N|}\sum_{c=0}^{|N|-1}(1-\tfrac{k+1}{2^k})\gamma_{|N|}^c z_i(c)
		$.
\end{restatable}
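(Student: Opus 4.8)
The plan is to prove the Robust Shapley value is replication robust by verifying the necessary-and-sufficient condition of Theorem~\ref{thm:robustness_condition}, and then to establish the quantitative loss bound separately. First I would compute the importance weights $\tilde\alpha_c^k$ of the Robust Shapley value both before and after replication, using Lemma~\ref{lemma:ack}: plug the modified coefficients $\gamma_{|N|}^{|\S|}\weight{|\S|}{N}$ into the formula $\alpha_c^k = (k+1)\tbinom{|N|-1}{c}\weight{c}{N^R}$, being careful that the $\gamma$ factor is a function of $|N|$ and $c$ evaluated on the \emph{induced} game $N^R$ with $|N^R| = |N|+k$. The design of $\gamma_{|N|}^{|\S|}$ is precisely tailored so that for small coalition sizes $c < \lfloor (|N|-1)/2\rfloor$ it cancels the Shapley coefficient's dependence on $c$, flattening the weights; I expect this to yield a clean closed form for $\tilde\alpha_c^k$ that can be directly compared across $k$.

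Next I would verify the robustness condition~\eqref{eq:robustness_condition}, namely $\sum_{c=0}^p \tilde\alpha_c^0 \geq \sum_{c=0}^p \tilde\alpha_c^k$ for all $p$ and all $k$. Here I would invoke \emph{Observation~1}: it suffices to show that for each coalition size $c$, either $\tilde\alpha_c^0 \geq \tilde\alpha_c^k$ or the monotonicity $\tilde\alpha_c^k \geq \tilde\alpha_c^{k+1}$ holds. The construction should ensure that the down-weighting on small coalitions counteracts the weight-shift toward small coalitions that made the ordinary Shapley value non-robust (as quantified in Lemma~\ref{lemma:sum_alpha_properties}, Equation~\eqref{subeq:monotonicity}). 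I would treat the two regimes, $c < \lfloor (|N|-1)/2\rfloor$ and $c \geq \lfloor (|N|-1)/2\rfloor$, separately, since the piecewise definition of $\gamma$ behaves differently in each; the flattening in the small-$c$ regime is what forces the partial sums back down after replication.

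For the loss bound, I would write the loss explicitly as
\begin{equation*}
\varphi^\textnormal{tot}_i(0) - \varphi^\textnormal{tot}_i(k) = \sum_{c=0}^{|N|-1} (\tilde\alpha_c^0 - \tilde\alpha_c^k)\, z_i(c),
\end{equation*}
and bound each coefficient difference from below. The target bound $\tfrac{1}{|N|}\sum_{c}(1 - \tfrac{k+1}{2^k})\gamma_{|N|}^c z_i(c)$ suggests that the analysis hinges on the Banzhaf-like factor $\tfrac{k+1}{2^k}$ arising from the ratio $\weight{c}{N^R}/\weight{c}{N}$; since $\tfrac{k+1}{2^k} \leq 1$ for all $k \geq 1$, each summand is nonnegative once we use $z_i(c) \geq 0$ (which follows from Lemma~\ref{lemma:zc_monotone} together with replication redundancy). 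I would combine the per-size lower bounds, using $z_i(c) \geq 0$ and the monotone decrease of $z_i(c)$, to assemble the stated inequality.

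The hard part will be managing the piecewise $\gamma$ definition cleanly across both the robustness verification and the loss estimate — in particular, tracking how $\gamma_{|N|}^c$ interacts with the binomial ratios in $\tilde\alpha_c^k$ in the flattened small-$c$ regime, and ensuring the per-coalition-size sufficient condition of Observation~1 genuinely holds at the boundary $c = \lfloor (|N|-1)/2\rfloor$ where the two cases meet. A secondary subtlety is confirming that the bound is tight enough that the factor $(1 - \tfrac{k+1}{2^k})$ emerges as a genuine lower bound rather than merely an upper estimate, which requires lower-bounding the weight differences $\tilde\alpha_c^0 - \tilde\alpha_c^k$ rather than upper-bounding them.
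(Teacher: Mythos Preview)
Your plan is essentially the paper's own route: verify the per-coalition-size monotonicity $\tilde\alpha_c^k \geq \tilde\alpha_c^{k+1}$ (Observation~1) by a case split on whether the $\gamma$ down-weighting is active, then read off the loss bound from a coefficient inequality. Two details need correcting, though.

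First, the case-split threshold. You correctly note that $\gamma$ is evaluated on the induced game $N^R$, but then you split at $\lfloor(|N|-1)/2\rfloor$. The relevant thresholds are $\lfloor(|N|+k-1)/2\rfloor$ for $\tilde\alpha_c^k$ and $\lfloor(|N|+k)/2\rfloor$ for $\tilde\alpha_c^{k+1}$, so comparing consecutive $k$ yields \emph{three} cases (both down-weighted, neither down-weighted, and the boundary case where only $\tilde\alpha_c^{k+1}$ is down-weighted). The paper handles all three and in each obtains the sharper inequality $\dfrac{\tilde\alpha_c^k}{\tilde\alpha_c^{k+1}} \geq \dfrac{2(k+1)}{k+2}$, not merely $\geq 1$.

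Second, the source of the factor $\tfrac{k+1}{2^k}$. It does not come from a single weight ratio $\weight{c}{N^R}/\weight{c}{N}$; it comes from iterating the sharper inequality above. Writing it as $\dfrac{\tilde\alpha_c^k/(k+1)}{\tilde\alpha_c^{k+1}/(k+2)} \geq 2$ and telescoping from $k$ down to $0$ gives $\tilde\alpha_c^k \leq \tfrac{k+1}{2^k}\,\tilde\alpha_c^0$, and since $\tilde\alpha_c^0 = \tfrac{1}{|N|}\gamma_{|N|}^c$ the stated loss bound follows immediately. Your plan to lower-bound $\tilde\alpha_c^0 - \tilde\alpha_c^k$ per $c$ is right, but you will need this sharper ratio from the case analysis, not just monotonicity, to land on the exact constant.
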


\begin{proof}
The complete proof is included in Appendix~\ref{appendix:robust_shapley}
\end{proof}

The Robust Shapley value satisfies axioms symmetry~\ref{ax:symmetry}, null-player~\ref{ax:nullplayer}, linearity~\ref{ax:linearity}. Additionally, the total allocated payoff does not exceed the value of the grand coalition.

Like the Banzhaf value and Robust Shapley value, there are many other possible solution concepts which are replication robust. These solution concepts can be crafted by designing the importance weights. 
Recall that semivalues balance between a player's individual value and complementary value through the importance weights.
As a rule of thumb, the solution concepts which emphasize the complementary value and put larger importance weights on the mid-sized and larger coalitions tend to be more replication robust.

\subsection{Perturbed Replication}\label{app_sub:attack}
Sometimes, the manipulations may not be an  exact replication. We now consider a related scenario where the malicious player replicates its resources and splits into multiple identities, then perform a small perturbation on its replicated resources to avoid replica detection, such as adding noise. 
A perturbed replication can be formulated as follows: In the submodular game $G=(N, v)$, a malicious player $i$ replicates its resources $D_i$ $k$ times and acts as $k+1$ players $\S^R = \{i_0, \ldots, i_k\}$ where $D_{i_k} = D_i$. The player further perturb its replicas as $\S^P = \{p_0, \ldots, p_k\}$, where $D_{p_k} = f_k(D_i)$ for some perturbation function $f_k$. The malicious player receives a total payoff as a sum of all its perturbed replicas, i.e., $\varphi_i^\textnormal{replicate} = \sum_{i_k\in \S^R}\varphi_{i_k}$ and  $\varphi_i^\textnormal{perturb} = \sum_{p_k\in \S^P}\varphi_{p_k}$.
Assume the effect of perturbations are small such that (1) the marginal contribution of the perturbed replicas towards the other players remain unchanged, that is: \begin{equation*}
\forall{\S\subseteq N\setminus\{i\}}, \mc_{p_k}(\S) = \mc_{i_k}(\S),
\end{equation*}
and (2) the marginal contributions of each perturbed replica towards coalitions containing other perturbed replicas are small, that is, there exists a small quantity $\exists \epsilon >  0$ s.t., 
$$\forall{p_k \in \S^P,\emptyset \neq \S^p\subseteq \S^P\setminus\{p_k\}}, \S\subseteq N\setminus\{i\}, \mc_{p_k}(\S^p\cup\S)\leq \epsilon $$

\begin{restatable}{lemma}{perturbation}\label{lemma:perturbation}
 Compared with replication, the additional gain in total payoff of the malicious player due to the perturbation when replicating $k$ times is given by: \begin{equation*}\varphi_i^\textnormal{perturb}- \varphi_i^\textnormal{replicate} \leq (k+1)\epsilon.
 \end{equation*}
\end{restatable}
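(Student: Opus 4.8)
The plan is to compare the two games replica-by-replica. The key structural fact is that the pure-replication game $G^R=(N^R,v^R)$ and the perturbed game $G^P=(N^P,v^P)$ have the same number of players, $|N^R|=|N^P|=|N|+k$, so any fixed semivalue uses identical size-indexed weights $\weight{|\S|}{N^R}=\weight{|\S|}{N^P}$ in both games. Since both total payoffs are sums over the $k+1$ replicas, I would pair $p_\kappa$ with $i_\kappa$ and write $\varphi_i^\textnormal{perturb}-\varphi_i^\textnormal{replicate}=\sum_{\kappa=0}^{k}(\varphi_{p_\kappa}-\varphi_{i_\kappa})$, reducing the claim to the per-replica bound $\varphi_{p_\kappa}-\varphi_{i_\kappa}\le\epsilon$.

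First I would expand each per-replica payoff as a semivalue, $\varphi_{p_\kappa}=\sum_{\S\subseteq N^P\setminus\{p_\kappa\}}\weight{|\S|}{N^P}\mc_{p_\kappa}(\S)$ and analogously for $\varphi_{i_\kappa}$, and partition the coalitions into two blocks: those $\S$ containing no other replica, i.e. $\S\subseteq N\setminus\{i\}$, and those containing at least one. On the first block the first perturbation assumption gives $\mc_{p_\kappa}(\S)=\mc_{i_\kappa}(\S)$ directly, so these terms are identical across the two games and cancel exactly. On the second block, replication redundancy (Assumption~\ref{assumption:replication}) forces $\mc_{i_\kappa}(\S)=0$, whereas the second perturbation assumption bounds $\mc_{p_\kappa}(\S)\le\epsilon$; since semivalue weights are non-negative, the difference restricted to this block is at most $\epsilon\sum_{\S}\weight{|\S|}{N^P}$, the sum running over the second block.

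The final step is to bound that residual weight sum by the full sum $\sum_{\S\subseteq N^P\setminus\{p_\kappa\}}\weight{|\S|}{N^P}=\sum_{c=0}^{|N^P|-1}\binom{|N^P|-1}{c}\weight{c}{N^P}=1$, the normalisation of the importance weights established in Section~\ref{subsec:zc}. This yields $\varphi_{p_\kappa}-\varphi_{i_\kappa}\le\epsilon$, and summing over the $k+1$ replicas gives $\varphi_i^\textnormal{perturb}-\varphi_i^\textnormal{replicate}\le(k+1)\epsilon$. The only real care needed — the main potential pitfall — is the bookkeeping that matches coalitions across the two differently-named player sets $N^R$ and $N^P$ and verifies that the no-other-replica block cancels exactly rather than merely approximately; once the two blocks are identified, the bound follows from non-negativity and normalisation of the weights alone, so the argument holds for every semivalue.
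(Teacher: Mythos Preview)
Your proposal is correct and follows essentially the same approach as the paper's proof: partition the coalitions faced by each replica into those containing no other replica (where assumption (1) makes the marginal contributions cancel) and those containing at least one (where replication redundancy zeros the $i_\kappa$ term and assumption (2) bounds the $p_\kappa$ term by $\epsilon$), then invoke the semivalue weight normalisation $\sum_{\S}\weight{|\S|}{N^P}=1$ and sum over the $k+1$ replicas. The paper organises the computation as one big double sum rather than per-replica, but the decomposition and the two key ingredients are identical.
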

\begin{proof}
The proof is included in Appendix~\ref{appendix:attack}.
\end{proof}

In this way, perturbations which yield negligible marginal values towards other players and the other perturbed replicas will yield negligible benefit compared with the non-perturbed replicas. Therefore, the replication robust solution concepts are also $\epsilon-$robust against the perturbed replication manipulation. 

\subsection{Robustness Results on the Facility Location Game}\label{subsec:facility_location_robustness}

\begin{table}[t]
\caption{Computation Time of the Shapley and Banzhaf value in the Facility Location Game (in seconds) for $n$ players using the naive approach and our algorithm (Algorithm~\ref{algo:fast_shapley}).}
\centering
\begin{tabular}{@{}ccccccc@{}}
\toprule
 &  & \textbf{n=10} & \textbf{n=15} & \textbf{n=20} & \textbf{n=50} & \textbf{n=100} \\ \midrule
\multirow{2}{*}{\textbf{Shapley value}} & \textit{\textit{naive}} & 0.283 & 14.182 & 558.525 & - & - \\
 & \textit{\textit{ours}} & 0.004 & 0.007 & 0.011 & 0.099 & 0.225 \\ 
\multirow{2}{*}{\textbf{Banzhaf value}} & \textit{\textit{naive}} & 0.245 & 12.623 & 482.803 & - & - \\
 & \textit{\textit{ours}} & 0.002 & 0.003 & 0.006 & 0.056 & 0.170 \\ \bottomrule
\end{tabular}
\label{table:facility_location}
\end{table}
Having presented the theoretical results on replication robustness, we now demonstrate these findings on the facility location game as defined in Example~\ref{example:facility_location_game}. Before that, we present the following theorem, which efficiently computes the Shapley and Banzhaf value in the facility location game, allowing us efficiently visualise their convergence properties.
\begin{restatable}{theorem}{facilityshapley}\label{thm:facility_location_shapley}
    The Shapley and Banzhaf value of a facility location $i$ in a facility location game can be computed as
    \begin{align*}
      \vshapley_i &= \sum_{d\in D} \big[ u_{id} \frac{1}{|\loc|-|\loc_{id}|} -\sum_{t=1}^{|\loc_{id}|} \frac{1}{\lambda(t)+{\lambda(t)}^2} u_{e_{it}^d} \big],\\
      \vbanzhaf_i &= \frac{1}{2^{|\loc|-1}}\sum_{d\in D} \big[ 2^{|\loc_{id}|} 
      u_{id} - \sum_{t=1}^{|\loc_{id}|} 2^{|\loc_{id}| - t} u_{e_{it}^d} \big],
  \end{align*}
    where $\lambda(t)\coloneqq (|\loc|-|\loc_{id}|+t-1)$, $\loc_{id}\coloneqq\{j \in \loc \mid u_{jd} \leq u_{id}\}$ and $u_{e_{it}^d}$ is the utility value of the $t$-th largest element after $i$ along the dimension (customer) $d$, $D$ is the set of customers, $u_{id}$ is the utility of a customer $d$ from facility location $i$.
\end{restatable}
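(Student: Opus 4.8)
The plan is to exploit the additive-over-customers structure of the facility location function together with the linearity of both the Shapley and the Banzhaf value, reducing the claim to a one-dimensional ``max'' game, and then to decompose that game into elementary threshold games whose values are immediate.

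First I would write $Fac=\sum_{d\in D}Fac_d$ with $Fac_d(\S)=\max_{j\in\S}u_{jd}$ and the convention $\max_{j\in\emptyset}u_{jd}=0$. Both values are linear (axiom~\ref{ax:linearity}), so $\vshapley_i=\sum_{d\in D}\vshapley_i(\loc,Fac_d)$ and likewise for $\vbanzhaf_i$; it therefore suffices to evaluate location $i$ in a single-customer game $(\loc,Fac_d)$ and sum over $d$. This reduces the problem to a one-dimensional ``max'' game.

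Second, fixing a customer $d$ and writing $m=|\loc|$, I would sort the locations by decreasing utility $u_{(1)d}\ge\cdots\ge u_{(m)d}$, set $u_{(m+1)d}=0$, and use the telescoping identity $u_{(r)d}=\sum_{t=r}^{m}(u_{(t)d}-u_{(t+1)d})$ to obtain the function-level decomposition $Fac_d=\sum_{t=1}^{m}(u_{(t)d}-u_{(t+1)d})\,g_{A_t}$, where $A_t=\{(1),\dots,(t)\}$ and $g_A(\S)=\mathbbm{1}[\S\cap A\neq\emptyset]$ is the simple ``at-least-one-of-$A$'' game. In $g_{A_t}$ the members of $A_t$ are mutually symmetric and every other location is a null player, so by symmetry, efficiency and the null-player property the Shapley value of any $i\in A_t$ equals $1/t$; its (normalised) Banzhaf value equals $2^{m-t}/2^{m-1}$, because the only coalitions of $\loc\setminus\{i\}$ towards which $i$ has nonzero marginal contribution are the $2^{m-t}$ subsets of $\loc\setminus A_t$. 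Applying linearity across $t$, and noting that $i\in A_t$ exactly when $t$ is at least the rank of $i$ (one plus the number of locations of strictly larger utility), gives for customer $d$ the expressions $\vshapley_i(\loc,Fac_d)=\sum_{t\ge\mathrm{rank}(i)}(u_{(t)d}-u_{(t+1)d})/t$ and $\vbanzhaf_i(\loc,Fac_d)=2^{-(m-1)}\sum_{t\ge\mathrm{rank}(i)}(u_{(t)d}-u_{(t+1)d})\,2^{m-t}$.

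Finally I would collect the coefficient of each utility value by a single Abel summation and re-index through $|\loc_{id}|$ and the ordered list $e^d_{i1},e^d_{i2},\dots$ of locations ranked below $i$. For the Shapley value the telescope leaves $u_{id}$ with the reciprocal of the rank of $i$ and gives $u_{e^d_{it}}$ a coefficient equal to a difference of two consecutive reciprocals, $\frac{1}{\lambda(t)}-\frac{1}{\lambda(t)+1}=\frac{1}{\lambda(t)+\lambda(t)^2}$; identifying these ranks with $|\loc|-|\loc_{id}|$ and $\lambda(t)=|\loc|-|\loc_{id}|+t-1$ reproduces the claimed formula, while the Banzhaf geometric sum collapses to $2^{|\loc_{id}|}u_{id}-\sum_{t}2^{|\loc_{id}|-t}u_{e^d_{it}}$ after the normalising factor $2^{-(|\loc|-1)}$. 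Summing over $d\in D$ then yields both stated closed forms. The main obstacle is entirely the bookkeeping: pinning down the rank of $i$ relative to $|\loc_{id}|$, handling ties in the utilities consistently so that the reciprocal ranks line up with $\lambda(t)$, and absorbing the boundary term $u_{(m+1)d}=0$ into the final summand so that the index ranges become exactly $t=1,\dots,|\loc_{id}|$; the game-theoretic content, namely the linear decomposition and the $1/t$ and $2^{m-t}/2^{m-1}$ values of the threshold games, is the easy part.
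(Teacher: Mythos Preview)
Your argument is correct and takes a genuinely different route from the paper. The paper works directly with the marginal contributions: for each customer $d$ it splits $\sum_{\S}w_\S\,\mc_i(\S)$ into a term where $i$ is the maximiser and a term capturing the current max of $\S$, then evaluates each by the combinatorial identity $\sum_{k=0}^{m}\binom{m}{k}/\binom{n}{k}=(n+1)/(n+1-m)$ (proved as a separate lemma) and Pascal's rule. You instead decompose the game itself, writing $Fac_d$ as a telescoping sum of elementary ``at-least-one'' games $g_{A_t}$ whose Shapley and Banzhaf values are immediate from symmetry/efficiency and a simple count, and then recover the closed forms by Abel summation. Your approach is more conceptual and sidesteps the binomial identity entirely; the paper's approach is more self-contained in that it never leaves the weighted-marginal-contribution formula and needs no auxiliary game decomposition. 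Both land on the same reindexing $\lambda(t)=|\loc|-|\loc_{id}|+t-1$ (equivalently the rank of $i$ plus $t-1$), and your identification of the tie/rank bookkeeping as the only real hazard is accurate --- the paper implicitly assumes $\loc_{id}$ excludes $i$ (or that utilities are distinct) so that $|\loc|-|\loc_{id}|$ is exactly the rank of $i$.
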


\begin{proof}[Proof Sketch]
    Denote $w_{\S}$ as the weight assigned by the Shapley (Banzhaf) value to coalition $\S$. We observe that
    \resizebox{1.\linewidth}{!}{$
      \varphi_i = \sum_{\S \subseteq \loc \setminus\{i\}} w_{\S} \mc_i(\S)
         \stackrel{(*)}{=} \sum_{d\in D} \big[ \underbrace{\sum_{\S \subseteq \loc_{id}} w_{\S} u_{id}}_{(\#1)} - \underbrace{\sum_{\S \subseteq \loc_{id}} w_{\S} \max_{j \in \S} u_{jd}}_{(\#2)} \big],\nonumber$
    }
    where $(*)$ is because the marginal contribution of $i$ to coalition $\coalition$ in dimension (customer) $d$ is zero \emph{unless} $i$ is the largest element in $\coalition$ in the $d$-th dimension, i.e., subsets of $\loc_{id}=\{j \in \loc \mid u_{jd} \leq u_{id}\}$. Along each dimension $d$, $(\#1)$ is a weighted sum of $i$'s marginal contributions towards coalitions $\S$ where $i$ is the largest element; $(\#2)$ sums up for each $j\in \loc_{id}$ over 
    coalitions $\S\subseteq \loc_{id}$ where $j$ is the largest element. The proof is included in Appendix~\ref{appendix:facility_location}.
\end{proof}


The Shapley and Banzhaf value of the facility location game can be computed using Lemma~\ref{thm:facility_location_shapley}, which can be implemented efficiently by sorting the facility location utility matrix along each dimension (customer), as summarised in Algorithm~\ref{algo:fast_shapley} in Appendix~\ref{appendix:fast_shapley}. 
  Table~\ref{table:facility_location} demonstrates that our algorithm (ours) significantly improves the computation efficiency compared with the naive algorithm (naive) which enumerates all possible coalitions. The output values computed by both algorithms are verified to be equal. We observed that the naive algorithm struggle in games with large number of players (e.g., $n\geq50$) while ours scales up easily. With the help of Algorithm~\ref{algo:fast_shapley}, we can efficiently visualise the Shapley and Banzhaf value in Fig.~\ref{fig:facility_location_results} and validate their robustness and convergence properties under replication in Fig.~\ref{fig:replicate_facility}.

\begin{figure}[t]
    \centering
    \includegraphics[width=1.0\linewidth]{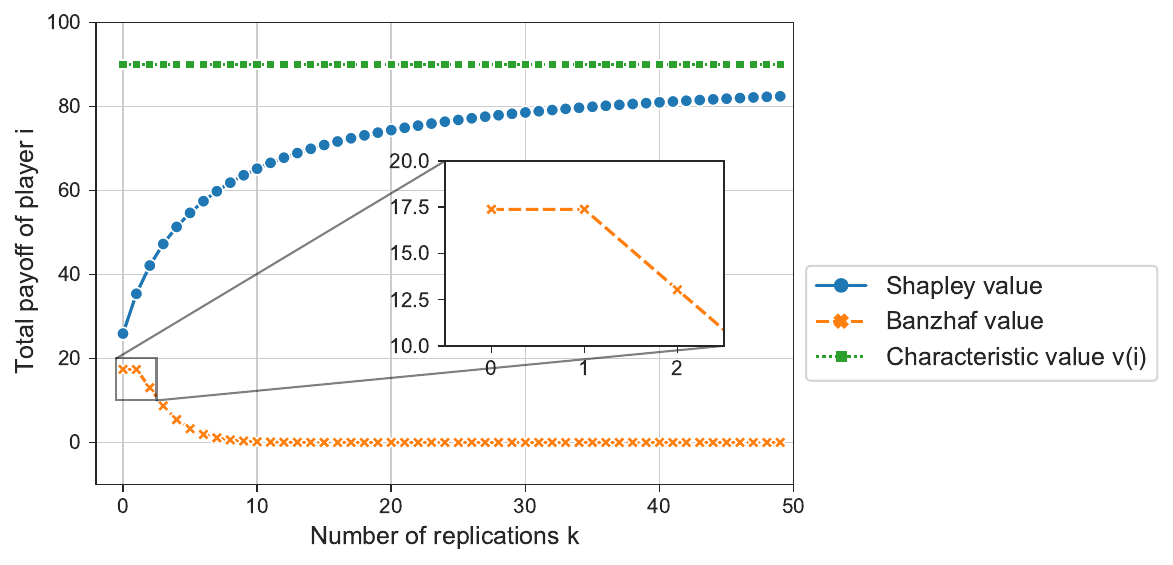}
    \caption{Replication robustness of the Shapley and Banzhaf value for the Facility Location Game}
    \label{fig:replicate_facility}
\end{figure}

Fig.~\ref{fig:replicate_facility} shows the replication robustness of the Shapley value and the Banzhaf value in the facility location game. Specifically, the original facility location game includes $|\loc|=10$ players (facility locations) and $|D|=10$ customers, each utility value $u_{ij}$ is an integer uniformly sampled from $[0, 20]$. Among the players, a malicious player $i$ replicates itself $k$ times and acts under $k+1$ identities. The curves show the total payoff of the player $i$ with respect to growing number of replications $k$ from $0$ to $50$. 
The graph validates some of our findings. Specifically, the Shapley value is not replication robust: the total payoff of the player monotonically increases, and converges to the characteristic value of the player, i.e., $\cv(\{i\})$. Moreover, the unit gain of the player for adding each player monotonically decreases. This can be seen from the decreasing height between pairs of adjacent points. In contrast, the Banzhaf value is replication robust: the total payoff of the replicating player monotonic decreases, and converges to $0$. Moreover, by comparing $k=0$ and $k=1$ (zoomed), we can see that the Banzhaf value is neutral to the first replication.

\section{Case Study: ML Data Markets}\label{market:application}
Our theoretical results can be applied to study redundancy and payoff allocation in many submodular real-world ML applications such as multiagent sensing, feature importance evaluation and multi-party ML. In this section, we investigate payoff allocation for ML data markets~\cite{agarwal2019marketplace, ohrimenko2019collaborative} -- an emerging application which readily connects data buyers (i.e., ML practitioners) with data sellers, providing a nice alternative to addressing the challenge of data acquisition in real-world ML applications. 
A naive implementation of such a market in the form of a direct data exchange is likely to fail in practice as data can be freely replicated, and hence may be easily be resold by a buyer. Moreover, acquiring ownership of a large dataset may exceed the budget of the buyer. 
These issues can be alleviated by modelling the market as an integral part of a cloud ML platform: At each round of interaction, the buyer provides a classification task, specified by a validation dataset $D_\textnormal{val}$. The data from multiple data sellers will be pooled securely to jointly train a model $\model(\cup_{i\in N} D_i)$ towards the classification task. The buyer will then pay a fee according to the performance of the model, and the sellers are allocated a payoff according to their data's contributions.

In the following, we model the market as a submodular game, and apply our theoretical insights to study robust payoff allocation against replication, i.e., data replication attacks.

\subsection{Data Market as a Submodular Game}

We model each round of interaction as a cooperative game $G = (N,v)$, where the players $i$ are the data sellers $N = \{1, \ldots, n\}$, each holding a dataset $D_i$. A natural characteristic function is given by the accuracy $\mathcal{G}(\model, D_{\validation})$ achieved by the model $\model$ trained on the data held by players in the coalition:
\begin{equation*}
  v(\S) \coloneqq \mathcal{G}(\model(\cup_{i\in \S} D_i), D_{\validation})
\end{equation*}

Submodularity is often a good model for approximating properties of this accuracy---the value of additional training datasets typically diminishes with growing data size~\cite{kirchhoff2014submodularity}. 

\subsection{Data Replication Attack and Replication Robustness} 
In the context of a data market game, a replication manipulation can be implemented by a malicious player through replicating its data and acting under multiple false identities. 
For many ML models, redundant data do not significantly change the model's performance and hence satisfies the replication redundancy assumption. As the market game is an instance of submodular game with replication redundant characteristic function, we can directly apply our replication robustness results shown for submodular games to evaluate the common solution concepts. That is, in a market game $G$, the Shapley value is not robust against the data replication attack, and the malicious player is incentivised to replicate its data and act under multiple identities in order to increase its total payoff. Whereas the Banzhaf value, Leave-one-out value, and Robust Shapley value are robust against the data replication attack.
Similarly, the conditions for replication robustness presented in Section~\ref{subsec:robustness_conditions} also hold for the ML data market game.

\subsection{Experiments}\label{market:experiments}
In this section, we provide empirical validations to our theoretical results on the ML data market. 
Specifically, we will present experiments which justify our assumptions on submodularity and replication redundancy. We then compare the replication robustness of the discussed solution concepts. 

\subsubsection{Experiment Setup}
We test our results on three standard ML tasks (datasets) of varied sizes. On each task, we assign to each player a subset of the data, and a malicious player replicates its data and we gradually increase the number of replications $k$. The datasets and assignments are as follows:
\begin{enumerate}[label=(\alph*)=, leftmargin=*]
    \itemsep 0mm
     \item \emph{Covertype}~\cite{Dua:2019}: Each input consists of 10 continuous features (e.g., elevation, slope, hillshade 9am, etc.), and the output is a prediction of the forest cover type out of 7 classes. We use the dataset provided by Kaggle which consists of $\sim$15000 training datapoints uniformly distributed in the $7$ output classes. 5 honest players each holds 1000 datapoints, 5 replicas share 1000 datapoints.
    \item \emph{CIFAR-100}~\cite{krizhevsky2009learning}: 32x32x3 images of 20 superclasses and 100 subclasses $C_\textnormal{sub}$. 
    We carried out $4$ sets of experiments with varied data assignments as follows:
    \begin{itemize}
        \item \emph{Uniform: } Players $0-4$ assigned data from 100 $C_\textnormal{sub}$ uniformly, players $5-7$ (replicas) same as Player $0$. 
        \item \emph{Disjoint: } Players $0-4$ each assigned 20 $C_\textnormal{sub}$, players $5-7$ (replicas) assigned the same data as Player $0$.
        \item \emph{Mixed: }Players $0-4$ assigned varied portions of each $C_\textnormal{sub}$,  players $5-7$ (replicas) same as Player $0$.
    \end{itemize}
    \item \emph{Tiny ImageNet}~\cite{tiny}: 64x64x3 images of 20 random classes. 3 honest players each holds 2000 datapoints and 3 replicas hold the same 2000 datapoints.
\end{enumerate}


To construct the ML models, we used a 4-layer (512 units per layer) fully-connected neural network for Covertype prediction. For CIFAR-100, we used the VGG-16 architecture \cite{simonyan2014vgg} with 10 convolutional layers (kernel size 3), max-pooling, and 2 fully-connected layers (1024 units per layer). For Tiny-ImageNet, we used the VGG-16 with 2 fully-connected layers (4096 units per layer).
Adam optimizer~\cite{kingma2014adam} is used to train the models. For the Covertype classification, we use learning rate of 0.0001, minibatch size 128. For CIFAR-100, we use learning rate of 0.001, minibatch size 64. For Tiny ImageNet we use learning rate of 0.001, minibatch size 64.



\subsubsection{Validations on properties of the ML Data Market}
We empirically validate Assumption~\ref{assumption:submodularity} (submodularity) and Assumption~\ref{assumption:replication} (replication redundancy).
Fig.~\ref{fig:zc} shows the average marginal contributions $z_i(c)$ for each player over coalition sizes $c$. Observe that $z_i(c)$ is monotonic decreasing, which according to Lemma~\ref{lemma:zc_monotone}, is a result of the submodularity of the characteristic function.
The curves further validate replication redundancy with $z_i(c) \approx 0$ for the replica players when $c$ exceeds the number of honest players. 

\begin{figure*}[h]
	\centering
	\subfloat[Covertype]{\includegraphics[width=0.24\linewidth]{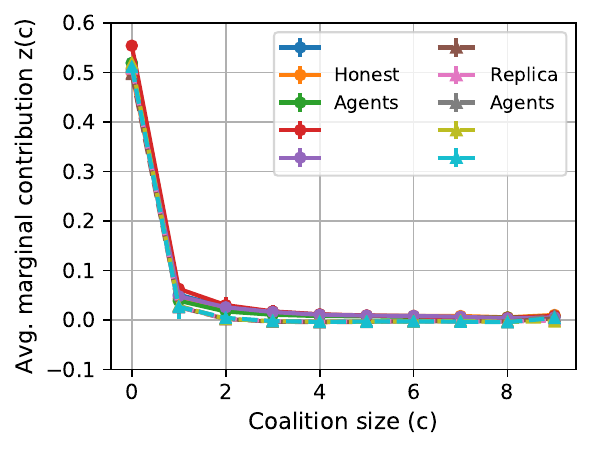}}%
	\hfill
	\subfloat[CIFAR-100 (uniform)]{\includegraphics[width=0.24\linewidth]{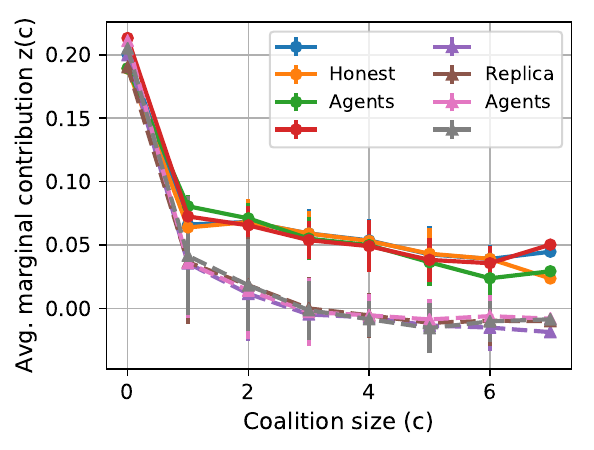}}%
	\hfill
	\subfloat[CIFAR-100 (disjoint)]{\includegraphics[width=0.25\linewidth]{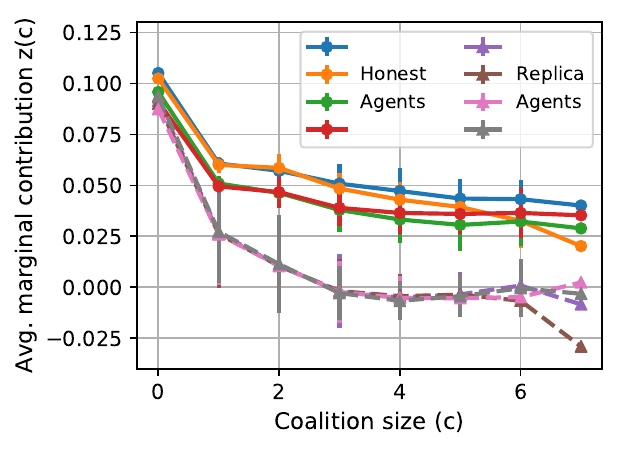}}%
	\hfill
	\subfloat[Tiny ImageNet]{\includegraphics[width=0.24\linewidth]{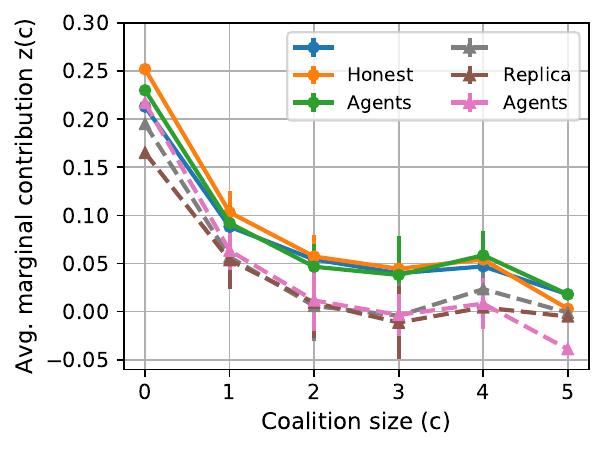}}%
	\hfill
	\caption{Average marginal contributions $z_i(c)$ across various datasets. Solid lines are non-replicating players while dashed lines are replicas which belong to the malicious player. Error bars show standard deviations of the marginal contributions of each coalition size. Observe that $z_i(c)$ monotonic decreases with coalition size as a result of submodularity. }
	\label{fig:zc}
\end{figure*}

\begin{figure*}
	\centering
	\begin{minipage}{1\textwidth}\centering
	\includegraphics[height=0.03\textwidth]{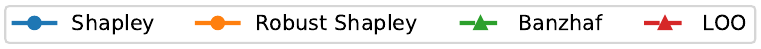}
	\end{minipage}
	\subfloat[Covertype]{\includegraphics[width=0.2\linewidth]{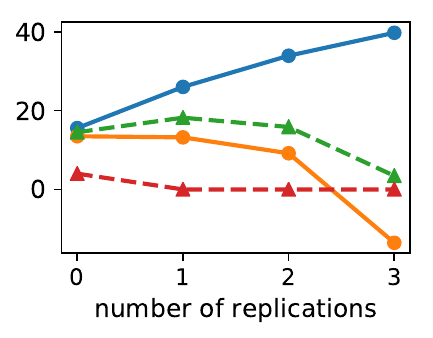}\label{fig:robustness_a}}%
	\hfill
	\subfloat[CIFAR(uniform)]{\includegraphics[width=0.2\linewidth]{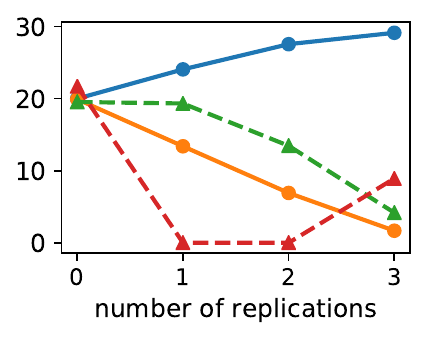}\label{fig:robustness_d}}%
	\hfill
	\subfloat[CIFAR (disjoint)]{\includegraphics[width=0.2\linewidth]{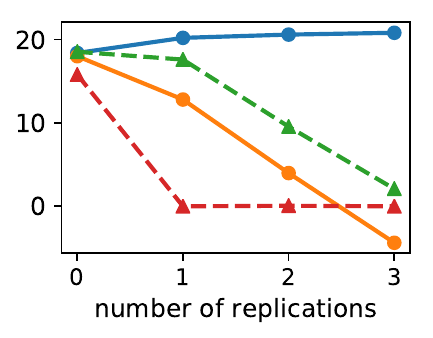}\label{fig:robustness_b}}%
	\hfill
	\subfloat[CIFAR (mixed)]{\includegraphics[width=0.2\linewidth]{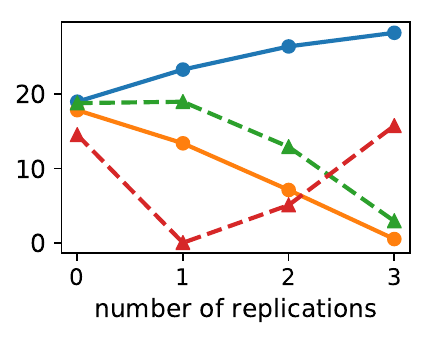}\label{fig:robustness_c}}%
	\subfloat[Tiny ImageNet]{\includegraphics[width=0.2\linewidth]{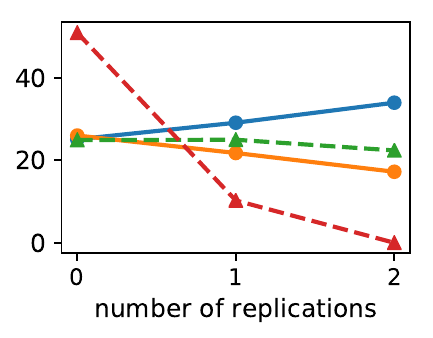}\label{fig:robustness_e}}%
	\hfill
	\caption{Percentage of total replica values in the total allocated payoffs w.r.t number of replications. x-axis represents increasing number of replications by the malicious player, e.g. x=3 refers to an induced game where the malicious player holds 4 replicas.}
	\label{fig:replication_robustness}
	\vspace{-5mm}
\end{figure*}

\subsubsection{Replication Robustness}
Fig.~\ref{fig:replication_robustness} compares the replication robustness of various solution concepts. The curves show the changes in total payoffs of the malicious player as a percentage of the total allocated payoffs, over growing number of replicas. On the Covertype, CIFAR-100, and Tiny ImageNet tasks, we start with 5,4,3 honest players respectively and 1 malicious player, and along the x-axis, we gradually increase the number of replicas. In all settings, the Shapley value is vulnerable to replication, and the total share of value gained by the replica player increases. Both the Banzhaf value and Robust Shapley value are replication robust. The Leave-one-out value is sensitive to the randomness during training, because it only includes a player's marginal contribution towards all other players. We plot the percentage for easy comparison, which also preserves the trend of the actual value.

\section{Related Work}\label{market:related_work}
Our theoretical results relate closely to the seminal game-theoretic literature on merging/splitting proofness, collusion, and false name manipulations.
\citet{lehrer1988axiomatization} is the first to present an axiomatization of the Banzhaf value with the 2-efficiency axiom, which characterized the neutrality of the Banzhaf value on merging two players as one. Similarly,~\citet{haller1994collusion} studied the collusion of two players where they both keep their identities thus the total number of players are unchanged: under a proxy agreement, one player acts as a proxy while the other a null player, whereas under an association agreement, two players act on each other's behalf. \citet{van2012efficiency} studied the interplay between efficiency and collusion neutrality of two players. 
\citet{knudsen2012merging} studied the merging and splitting-proofness on convex games, and introduced some possibility/impossibility results. 
\citet{ohta2008anonymity} studied false name manipulation in an open environment and proposed anonymity-proof Shapley value against malicious players who split their \emph{skills} and act as multiple identities, where skills are assumed to be unique.
Related to the splitting manipulations, our present work looks at the replication manipulation arising in submodular games. Such cases have not been adequately addressed previously. Moreover, our results extends from bilateral amalgamation to an arbitrary number of replica players. 
Related work on emerging ML applications include (1) ML data markets, e.g., \citet{agarwal2019marketplace} first introduced an algorithmic framework for data marketplaces.~\citet{ohrimenko2019collaborative} studied collaborative ML data markets where each player must participate both as seller and buyer. (2) ML model interpretation~\cite{vstrumbelj2014explaining, janzing2020feature}, which explains ML models through the feature importance. Many have adopted game-theoretic solution concepts such as the Shapley Value~\cite{lundberg2017unified, sundararajan2020many, cohen2005feature,chen2018shapley}, and (3) Submodular data and feature selection~\cite{wei2015submodularity, kirchhoff2014submodularity,das2012selecting,lin2011class}.

\section{Conclusions}\label{market:conclusions}
In this work, we studied the robustness of solution concepts against redundancy as a result of replication in submodular games. In summary, we showed a necessary and sufficient condition which characterises the robustness of semivalues in general. Using this condition, we showed that the Shapley value is not replication robust, i.e., the total payoff of the malicious player monotonic increases with growing number of replications. Whereas the Banzhaf value, Robust Shapley and Leave-one-out value are replication robust. We demonstrate the distinct robustness and convergence properties of the Shapley and Banzhaf value on a submodular facility location game. Moreover, we applied our theoretical results to an emerging application of ML data markets, and empirically validated our theoretical results across three standard ML datasets. Interesting future directions include extending our theoretical framework for submodular games with partial redundancy; and applying our theoretical findings to submodular ML applications such as feature evaluation and multiagent learning.



\bibliographystyle{abbrvnat}
\bibliography{sample}

\clearpage

\onecolumn
\appendices


\section{Proof for Theorem~\ref{thm:shapleysinglerobustness}}\label{appendix:shapleysinglepayoff}
\shapleysinglerobustness*

\begin{proof}
The second half of the theorem is provided in the main text, here we provide the derivation for $\delta^\textnormal{Shapley}_i$. In the induced game $G^R = (N^R, v^R)$ where player $i$ replicates into two players $\{i_1,i_2\}$, the total number of players increases by one, i.e., $|N^R| = |N| + 1$ and $v^R(\S\cup \{i_1,i_2\}) = v^R(\S\cup\{i\})$ as $i_1$ and $i_2$ are replicas of $i$ and as a result of replication redundancy. We next write out the sum of the Shapley values $\val^R_{i_2}$ and $\val^R_{i_2}$ of $i_1,i_2$ in $G^R$.
\begin{align*}
\val^R_{i_1} (N^R, \cv^R)
	& \coloneqq \sum_{\S\subseteq N^R \setminus \{i\}} \frac{|\S|!(|N^R|-|\S| -1)!}{|N^R|!} \mc_{i_1}(\S)\\
	& = \sum_{\S\subseteq N^R \setminus \{i\}} \frac{|\S|!(|N|+1-|\S| -1)!}{(|N|+1)!} \mc_{i_1}(\S)\\
	& \stackrel{(1)}{=} \sum_{\S\subseteq N^R \setminus \{i_1,i_2\}} \frac{|\S|!(|N|-|\S|)!}{(|N|+1)!} \mc_{i_1}(\S)+  \sum_{\S\subseteq N^R \setminus \{i_1,i_2\}} \frac{(|\S|+1)!(|N|-|\S|-1)!}{(|N|+1)!} \!\!\!\!\underbrace{\mc_{i_1}(\S\cup\{i_2\})}_{\textnormal{$= 0$, replication redundancy}}\\
	& = \underbrace{\sum_{\S\subseteq N^R \setminus \{i_1,i_2\}}}_{ = N \setminus \{i\}} \frac{|\S|!(|N|-|\S|)!}{(|N|+1)!} \underbrace{\mc_{i_1}(\S)}_{=\mc_i(\S)}
	= \sum_{\S\subseteq N \setminus \{i\}} \frac{|\S|!(|N|-|\S|)!}{(|N|+1)!} \mc_{i}(\S),
\end{align*} 
where (1) is by grouping the coalitions of players (excluding $i_1$) into two groups, one group containing all coalitions without $i_2$ and the other with $i_2$ in.

By symmetry, $\val^R_{i_2} (N^R, \cv^R) = \val^R_{i_1} (N^R, \cv^R) $, and the total payoff of $i$ in the induced game is:
\begin{equation*}
	\val^R_{i} = 2 \val^R_{i_1} (N^R, \cv^R) =\sum_{\S\subseteq N \setminus \{i\}} \frac{2|\S|!(|N|-|\S|)!}{(|N|+1)!} \mc_{i}(\S)
\end{equation*}
On the other hand, the total payoff of player $i$ in the original game is:
\begin{equation*}
\val_i = \sum_{\S\subseteq N \setminus \{i\}} \frac{|\S|!(|N|-|\S| -1)!}{|N|!} \mc_{i}(\S)
\end{equation*}
Therefore, the change in total payoff of player $i$ is :
\begin{align*}
\delta\vshapley_i 
&=\val^R_i - \val_i\\
&= \sum_{\S\subseteq N \setminus \{i\}} \frac{|\S|!(|N|-|\S| -1)!}{|N|+1!}(2(|N|-|\S|)-(|N|+1))\mc_i(\S)\\
&=  \sum_{\S\subseteq N \setminus \{i\}} \frac{|\S|!(|N|-|\S| -1)!}{|N|+1!}(|N|-2|\S|-1) \mc_i(\S)
\end{align*}
\end{proof}


\section{Proof for Theorem~\ref{thm:banzhafsinglepayoff}}\label{appendix:banzhafsinglepayoff}

\banzhafsinglepayoff*

\begin{proof}
In the induced game $G^R = (N^R, v^R)$ where player $i$ replicates into two players $\{i_1,i_2\}$, the total number of players increases by one, i.e., $|N^R| = |N| + 1$ and $v^R(\S\cup \{i_1,i_2\}) = v^R(\S\cup\{i\})$ as $i_1$ and $i_2$ are replicas of $i$ and as a result of replication redundancy. We next write out the sum of the Banzhaf values $\val^R_{i_2}$ and $\val^R_{i_2}$ of $i_1,i_2$ in $G^R$.
\begin{align*}
\val^R_{i_1} (N^R, \cv^R)
	& \coloneqq \sum_{\S\subseteq N^R \setminus \{i\}} \frac{1}{2^{|N^R|}} \mc_{i_1}(\S)\\
	& = \sum_{\S\subseteq N^R \setminus \{i\}} \frac{1}{2^{|N|+1}} \mc_{i_1}(\S)\\
	& \stackrel{(1)}{=} \sum_{\S\subseteq N^R \setminus \{i_1,i_2\}} \frac{1}{2^{|N|+1}} \mc_{i_1}(\S)+  \sum_{\S\subseteq N^R \setminus \{i_1,i_2\}} \frac{1}{2^{|N|+1}} \underbrace{\mc_{i_1}(\S\cup\{i_2\})}_{\textnormal{$= 0$, replication redundancy}}\\
	& = \underbrace{\sum_{\S\subseteq N^R \setminus \{i_1,i_2\}}}_{ = N \setminus \{i\}} \frac{1}{2^{|N|+1}} \underbrace{\mc_{i_1}(\S)}_{=\mc_i(\S)}
	= \sum_{\S\subseteq N \setminus \{i\}} \frac{1}{2^{|N|+1}} \mc_{i}(\S),
\end{align*} 
where (1) is by grouping the coalitions of players (excluding $i_1$) into two groups, one group containing all coalitions without $i_2$ and the other with $i_2$ in.

By symmetry, $\val^R_{i_2} (N^R, \cv^R) = \val^R_{i_1} (N^R, \cv^R) $, and the total payoff of $i$ in the induced game is:
\begin{equation*}
	\val^R_{i} = 2 \val^R_{i_1} (N^R, \cv^R) =\sum_{\S\subseteq N \setminus \{i\}} \frac{1}{2^{|N|}} \mc_{i}(\S) = \val_i(N, \cv)
\end{equation*}
Therefore, the change in total payoff of player $i$ is zero, i.e.,:
\begin{align*}
\delta\vbanzhaf_i =\val^R_i - \val_i = 0
\end{align*}
\end{proof}


\section{Proof for Equation~\eqref{eq:zc}}\label{appendix:zc}

\begin{proof}
The derivation from Equation~\eqref{eq:ungrouped_semivalue} to \eqref{eq:zc} can be obtained by grouping the marginal contributions of player $i$ towards equal-sized coalitions. The payoff of the player $\val_i$ can be computed as a weighted sum of its average marginal contributions $z_i(c)$ to each coalition size $c$,  where $\tbinom{|N|-1}{c}$ is the number of size-$c$ coalitions of players excluding $i$. The detailed steps are as follows:
\begin{align*}
        \varphi_i(N, v)
            &= \sum_{\S\subseteq N\setminus \{i\}} \weight{|\S|}{N} \mc_i(\S)\\
            &= \sum_{c=0}^{|N|-1} \sum_{|\S| = c, \S\subseteq N\setminus \{i\}} \weight{|\S|}{N}\mc_i(\S)
            = \sum_{c=0}^{|N|-1}\weight{|\S|}{N} \sum_{|\S| = c, \S\subseteq N\setminus \{i\}}\mc_i(\S)\\
            &= \sum_{c=0}^{|N|-1}\underbrace{\tbinom{|N|-1}{c}\weight{|\S|}{N}}_{\alpha_c} \underbrace{{\tbinom{|N|-1}{c}}^{-1}\sum\nolimits_{|\S| = c, \S\subseteq N\setminus \{i\}}\mc_i(\S)}_{z_i(c)}
            = \sum_{c=0}^{|N|-1}\alpha_{c} z_i(c).\qedhere
    \end{align*}
\end{proof}


\section{Proof for Lemma~\ref{lemma:ack}}\label{appendix:payoff_changes}

\lemmaack*
\begin{proof}
In the induced game $G^R$, let $\weight{|\S|}{N+k}$ be the weights of the solution concept by definition, i.e., $\varphi_i(N^R, \cv^R) \coloneqq \sum_{\S\subseteq N^R\setminus\{i\}}\weight{|\S|}{N+k}\mc_i(\S)$. Then the total payoff of the malicious player after $k$ replications is:

\begin{align*}
    \varphi_i^\textnormal{tot}(k) 
    &\stackrel{(1)}{=} (k+1)\varphi_{i_k}(N^R, v^R)\\
    &= (k+1)\sum_{\S\subseteq N^R\setminus\{i_k\}}\weight{|\S|}{|N|+k}  \mc_{i_k}(\S)\\
    & = (k+1) \sum_{\S\subseteq N^R \setminus \S_R} \weight{|\S|}{|N|+k}\mc_{i_k}(\S) + (k+1)\sum_{\S\subseteq N^R \setminus \{i_k\}, \S\cap \S_R\neq \phi } \weight{|\S|}{|N|+k}\underbrace{\mc_{i_k}(\S)}_{\stackrel{(2)}{=}0} \\
    &= (k+1)\sum_{S\subseteq{N\setminus\{i\}}} \weight{|\S|}{|N|+k} \mc_i(\S)\\
    &=\sum_{c=0}^{N-1}\underbrace{(k+1)\tbinom{N-1}{c}\weight{|\S|}{|N|+k}}_{\alpha_c^k} z_i(c)
\end{align*}
where (1) is due to symmetry and (2) is due to replication-redundancy.
\end{proof}


\section{Proof for Theorem~\ref{thm:gain_increase} for the Banzhaf value and LOO}\label{appendix:gain_increase}

\shapleysubmodular*
\begin{proof}
We have proven the robustness properties for the Shapley value in the main text. Here we will provide the proofs for the Banzhaf and LOO values.

\textbf{(1) For the Banzhaf value. }
    We prove that the importance weights $\alpha_c^k$ satisfy $\forall k\geq 0, \alpha_c^k \geq \alpha_c^{k+1}$ for both the Banzhaf value and Leave-one-out, which is a sufficient condition for $\forall p, \sum_{c=0}^p \alpha_c^0 \geq \sum_{c=0}^p \alpha_c^k$. Therefore, according to our robustness condition in Theorem~\ref{thm:robustness_condition}, both the Banzhaf value and Leave-one-out value are replication robust. In addition, both values monotonic decrease as the number of replications $k$ according to Corollary \ref{thm:monotonicity_condition}. In particular, for the Banzhaf value, \begin{equation*}
    \begin{cases}\alpha_c^k = \frac{(k+1)}{2^{|N|+k-1}}\tbinom{|N|-1}{c}\\
        \alpha_c^{k+1} = \frac{(k+2)}{2^{|N|+k}}\tbinom{|N|-1}{c}
        \end{cases} \implies \frac{\alpha_c^k}{\alpha_c^{k+1}} = 2 \tfrac{(k+1)}{(k+2)} \geq 1
\end{equation*}
   
As we can see, $\alpha_c^0 = \alpha_c^1$ for the Banzhaf value. This implies that the total payoff of the malicious player is unchanged when it replicates for the first time. Therefore, the Banzhaf value is neutral for $k=1$, which conforms with our finding in Theorem~\ref{thm:banzhafsinglepayoff}.
The limit of the total payoff is:

\begin{equation*}
\lim_{k\rightarrow{\infty}}\varphi_i^\textnormal{tot}(k) 
    = \lim_{k\rightarrow{\infty}}\sum_{c=0}^{|N|-1}\alpha_c^k z_i(c)
    = \sum_{c=0}^{|N|-1}\tbinom{|N|-1}{c}z_i(c)\lim_{k\rightarrow{\infty}}\frac{k+1}{2^{|N|+k-1}} = 0
\end{equation*}

\noindent
\textbf{(2) For the Leave-one-out value.}
$\forall k\geq 0, \frac{\alpha_c^{k+1}}{\alpha_c^{k}} = 0$, hence $\frac{\alpha_c^k}{\alpha_c^{k+1}} \geq 1$, and
the total payoff is zero with any positive number of replications, i.e., $\forall {k>0}, \varphi_i^\textnormal{tot}(k) = 0$.
\end{proof}

\section{Proofs for Lemma~\ref{lemma:sum_alpha_properties}}\label{appendix:shapley_replication}
\shapleyproperties*

\begin{proof} 
\textbf{Proof for Equation~(\ref{subeq:efficiency}):}
Equation~\eqref{subeq:efficiency} shows that $\alpha_c^k$ always sums to 1 under changing $k$ for the Shapley value.
\begin{align*}
    \sum_{c=0}^{|N|-1} \alpha_c^k
        &= \sum_{c=0}^{|N|-1} \frac{k+1}{|N|+k}\binom{|N|-1}{c}{\binom{|N|+k-1}{c}}^{-1} \textnormal{ , due to Corollary~\ref{lemma:example_alpha_ck}}\\
        &=(k+1)\sum_{c=0}^{|N|-1} \frac{(|N|-1)!(|N|+k-1-c)!}{(|N|-1-c)!(|N|+k)!}\\
        &= \frac{(k+1)!(|N|-1)!}{(|N|+k)!}\sum_{c=0}^{|N|-1} \frac{(|N|+k-1-c)!}{(|N|-1-c)!k!}\\
        &= \frac{1}{\binom{|N|+k}{k+1}}\sum_{c=0}^{|N|-1} \binom{|N|+k-1-c}{k}\\
        \overset{(1)}&{=} \frac{1}{\binom{|N|+k}{k+1}}\sum_{i=k}^{|N|-k-1} \binom{i}{k} \\
        \overset{(2)}&{=} \frac{1}{\binom{|N|+k}{k+1}}\binom{|N|+k}{k+1}\\
        &= 1, 
\end{align*}
where (1) is by substituting $i=|N|+k-1-c$ and (2) by the Hockey-Stick identity.\\

\noindent
\textbf{Proof for Equation~(\ref{subeq:monotonicity}):} This shows that the importance weights $\alpha_c^k$ \emph{shift} to the smaller coalitions under growing $k$. 

\begin{align*}
    \sum_{c=0}^p \alpha_c^k
        &=\sum_{c=0}^p \frac{(k+1)(|N|-1)!(|N|+k-1-c)!}{(|N|-1-c)!(|N|+k)!}\\
        &= \frac{(k+1)!(|N|-1)!}{(|N|+k)!}\sum_{c=0}^p \frac{(|N|+k-1-c)!}{(|N|-1-c)!k!}\\
        &= \frac{1}{\binom{|N|+k}{k+1}}\sum_{c=0}^p \binom{|N|+k-1-c}{k}\\
        &= \frac{1}{\binom{|N|+k}{k+1}}(\sum_{c=0}^{|N|-1} -\sum_{c=p+1}^{|N|-1} \binom{|N|+k-1-c}{k})\\
        \overset{(1)}&{=} 1 - \frac{1}{\binom{|N|+k}{k+1}}\sum_{c=p+1}^{|N|-1} \binom{|N|+k-1-c}{k}\\
       \overset{(2)}&{=} 1 - \frac{1}{\binom{|N|+k}{k+1}}\binom{|N|+k-p-1}{k+1}\\
        &= 1- \frac{(|N|-1)!}{(|N|-p-2)!}\frac{1}{(|N|+k)...(|N|+k-p)},
\end{align*}

where (1) is by Equation~\eqref{subeq:efficiency} and (2) is by the Hockey-Stick identity.
Similarly,

\begin{equation*}
   \sum_{c=0}^p \alpha_c^{k+1}
        = 1- \frac{(|N|-1)!}{(|N|-p-2)!}\frac{1}{(|N|+k+1)...(|N|+k+1-p)}
\end{equation*}

Therefore,

\begin{align*}
     \sum_{c=0}^p \alpha_c^{k+1} - \sum_{c=0}^p \alpha_c^{k}
        &=\frac{(|N|-1)!}{(|N|-p-2)!}\frac{(|N|+k+1) - (|N|+k-p)}{(|N|+k+1)...(|N|+k-p)}\\
        &= \frac{(|N|-1)!}{(|N|-p-2)!}\frac{p+1}{(|N|+k+1)...(|N|+k-p)}
       \geq 0
\end{align*}

\textbf{Proof for Equation~(\ref{subeq:submodularity}):} With this additional condition, the \emph{unit gain} of the total payoff for adding a replica decreases monotonically for each replication. This means that the player obtains the most unit gain for the first replication.
To show this property, we denote for any $k$, denote $\delta^k \coloneqq \sum_{c=0}^p \alpha_c^{k+1} - \sum_{c=0}^p \alpha_c^{k}$. From the proof of Equation~\eqref{subeq:monotonicity}: $\delta^k = \tfrac{(|N|-1)!}{(|N|-p-2)!}\tfrac{p+1}{(|N|+k+1)...(|N|+k-p)}$, therefore,

\begin{align*}
RHS - LHS \textnormal{ of~\eqref{subeq:submodularity}} 
&= \delta^{k+1} - \delta^{k}\\
&=\frac{(|N|-1)!(p+1)}{(|N|-p-2)!}(\frac{1}{(|N|+k+2)...(|N|+k+1-p)} - \frac{1}{(|N|+k+1)...(|N|+k-p)})\\
&= \frac{(|N|-1)!(p+1)}{(|N|-p-2)!}(\frac{(|N|+k-p) - (|N|+k+2)}{(|N|+k+2)...(|N|+k-p)})\\
&=\frac{(|N|-1)!(p+1)}{(|N|-p-2)!}\frac{-(p+2)}{(|N|+k+2)...(|N|+k-p)} \leq 0 \qedhere
\end{align*}
\end{proof}


\section{Proof for Corollary~\ref{corollary:robust_shapley_loss}}\label{appendix:robust_shapley}
\corollaryloss*
\begin{proof}
\textbf{Replication-robustness}
We prove that similar to the Banzhaf value, the Robust Shapley value satisfies Equation~\eqref{eq:robustness_sufficient_condition_summand} in Observation 1: $\forall k\geq0, \frac{\alpha_c^k}{\alpha_c^{k+1}} \geq 1$. Hence it satisfies Theorem \ref{thm:monotonicity_condition}, and therefore sufficient for replication robustness. There are 3 possible cases:

\noindent
\textit{Case 1: $c < \lfloor \frac{|N|+k-1}{2}\rfloor \leq \lfloor\frac{|N|+k}{2}\rfloor$}

In this case, both $\Tilde{\alpha}_c^k$ and $\Tilde{\alpha}_c^{k+1}$ will be down-weighed from the Shapley coefficients where $\gamma_{|N|+k}^c = \frac{\lceil \frac{|N|+k-1}{2}\rceil!\lfloor \frac{|N|+k-1}{2}\rfloor!}{c! (|N|+k-c-1)!}$:
{\begin{align*}
    \Tilde{\alpha}_c^k &= \gamma_{|N|+k}^{c}\alpha_c^k =(k+1)\tbinom{|N|-1}{c}\frac{ \lfloor \frac{|N|+k-1}{2}\rfloor! \lceil \frac{|N|+k-1}{2}\rceil!}{(|N|+k)!}\phantom{space}\\
    \Tilde{\alpha}_c^{k+1} &= \gamma_{|N|+k+1}^{c}\alpha_c^{k+1} =(k+2)\tbinom{|N|-1}{c}\frac{ \lfloor \frac{|N|+k}{2}\rfloor! \lceil \frac{|N|+k}{2}\rceil!}{(|N|+k+1)!}\\
    \textnormal{Hence }&\frac{\Tilde{\alpha}_c^{k}}{\Tilde{\alpha}_c^{k+1}} = \frac{k+1}{k+2}\frac{|N|+k+1}{\lceil \frac{|N|+k}{2} \rceil} \geq \frac{1}{2}*2 = 1.
\end{align*}}
\noindent
\textit{Case 2: $c \geq \lfloor\frac{|N|+k}{2}\rfloor \geq \lfloor \frac{|N|+k-1}{2}\rfloor$}

Both $\Tilde{\alpha}_c^k$, $\Tilde{\alpha}_c^{k+1}$ take the original form of Shapley coefficients after replication, i.e, $\gamma_{|N|}^c = 1$:
\begin{align*}
    \Tilde{\alpha}_c^k &= \alpha_c^k =(k+1)\tbinom{|N|-1}{c}\frac{c! (|N|+k-1-c)!}{(|N|+k)!}\phantom{whitespace}\\
    \Tilde{\alpha}_c^{k+1} &=\alpha_c^{k+1} =(k+2)\tbinom{|N|-1}{c}\frac{c! (|N|+k-c)!}{(|N|+k+1)!}\\
    \frac{\Tilde{\alpha}_c^{k}}{\Tilde{\alpha}_c^{k+1}} &= \frac{k+1}{k+2}\frac{|N|+k+1}{|N|+k-c} \stackrel{(1)}{\geq} 2\frac{k+1}{k+2} \geq 1, \quad\textnormal{ where (1) is due to } c \geq \lfloor\tfrac{|N|+k}{2}\rfloor.
\end{align*}
\noindent
\textit{Case 3: $\lfloor\frac{|N|+k-1}{2}\rfloor \leq c < \lfloor\frac{|N|+k}{2}\rfloor$}

In this case, $\Tilde{\alpha}_c^k$ will take the original form, while $\Tilde{\alpha}_c^{k+1}$ will take the down-weighed form. Moreover, $|N|+k$ must be even, hence $c = \lfloor\frac{|N|+k-1}{2}\rfloor$.
{
\begin{align*}
    \Tilde{\alpha}_c^k &= \alpha_c^k =(k+1)\tbinom{|N|-1}{c}\frac{c! (|N|+k-1-c)!}{(|N|+k)!}
    =(k+1)\tbinom{|N|-1}{c}\frac{ \lfloor \frac{|N|+k-1}{2}\rfloor! \lceil \frac{|N|+k-1}{2}\rceil!}{(|N|+k)!} \\
    \Tilde{\alpha}_c^{k+1} &= \gamma_{|N|+k+1}^{c}\alpha_c^{k+1} =(k+2)\tbinom{|N|-1}{c}\frac{ \lfloor \frac{|N|+k}{2}\rfloor! \lceil \frac{|N|+k}{2}\rceil!}{(|N|+k+1)!}\\
    \textnormal{Hence }& \frac{\Tilde{\alpha}_c^{k}}{\Tilde{\alpha}_c^{k+1}} = \frac{k+1}{k+2}\frac{|N|+k+1}{\lceil \frac{|N|+k}{2} \rceil} \geq 2 \frac{k+1}{k+2} \geq 1
\end{align*}}
We have shown that $\forall k\geq0, \frac{\alpha_c^k}{\alpha_c^{k+1}} \geq 1$, and hence the Robust Shapley value is replication-robust.

\noindent
\textbf{Payoff loss}
Note that from the above derivations, in all 3 cases, $\forall k\geq 0, \frac{k+2}{k+1}\frac{\Tilde{\alpha}_c^k}{\Tilde{\alpha}_c^{k+1}} \geq 2$:

\begin{align*}
    \!\!\!\!\varphi_i^\textnormal{tot}(0) &= \sum_{c=0}^{|N|-1}\Tilde{\alpha}_c^0 z_i(c) \coloneqq \dfrac{1}{|N|}\sum_{c=0}^{|N|-1}\gamma_|N|^c z_i(c)\\
    \varphi_i^\textnormal{tot}(k) &=\sum_{c=0}^{|N|-1}\Tilde{\alpha}_c^k z_i(c)\\
    &= (k+1)\sum_{c=0}^{|N|-1}\frac{\Tilde{\alpha}_c^k}{k+1} z_i(c), \textnormal{ and as $\forall {k\geq0}, $} \tfrac{\Tilde{\alpha}_c^k/(k+1)}{\Tilde{\alpha}_c^{k+1}/(k+2)} \geq 2, \\
    &\leq (k+1)\sum_{c=0}^{|N|-1}\frac{1}{2}\frac{\Tilde{\alpha}_c^{k-1}}{k} z_i(c) \leq ... \\
    &\leq (k+1)\sum_{c=0}^{|N|-1}\frac{1}{2^k}\Tilde{\alpha}_c^{0} z_i(c)\\
    &= (\dfrac{k+1}{2^k})\dfrac{1}{|N|}\sum_{c=0}^{|N|-1}\gamma_{|N|}^c z_i(c)\\
    \textit{Hence } &\varphi^\textnormal{tot}_i(0) - \varphi^\textnormal{tot}_i(k)\geq \dfrac{1}{|N|}\sum_{c=0}^{|N|-1}(1-\tfrac{k+1}{2^k})\gamma_{|N|}^c z_i(c).
\end{align*}
This concludes our proof for Corollary~\ref{corollary:robust_shapley_loss}.
\end{proof}

\section{Proofs for Lemma~\ref{lemma:perturbation}}\label{appendix:attack}

\perturbation*
\begin{proof}
 Compared with replication, the additional gain in payoff due to the perturbation is 
\begin{align*}
    \varphi_i^\textnormal{perturb}- \varphi_i^\textnormal{replicate} = 
     &\sum_{p_k \in \S^P}\sum_{\S\subseteq N\setminus\{i\}, \S^p\subseteq \S^P\setminus\{p_k\}}\weight{|\S\cup \S^p|}{|N|+k} \mc_{p_k}(\S\cup \S^p) - \\ &\phantom{some more space}\sum_{i_k \in \S^R}\sum_{\S\subseteq N\setminus\{i\}, \S^r\subseteq \S^R\setminus\{i_k\}}\weight{|\S\cup \S^r|}{|N|+k} \mc_{i_k}(\S\cup \S^r)\\
     &= \sum_{k=0}^{|\S^R|-1}\sum_{\S\subseteq N\setminus\{i\}} \weight{|\S|}{|N|+k}(\mc_{p_k}(\S) - \mc_{i_k}(\S)) + \\ &\phantom{some more space} \sum_{p_k\in \S^P}\sum_{\S\subseteq N\setminus\{i\}, \S^p\subseteq_{\neq \emptyset} \S^P\setminus\{p_k\}} \weight{|\S\cup \S^p|}{|N|+k}\mc_{p_k}(\S \cup\S^p) \\
     &= \sum_{p_k\in \S^P}\sum_{\S\subseteq N\setminus\{i\}, \S^p\subseteq_{\neq \emptyset} \S^P\setminus\{p_k\}} \weight{|\S\cup \S^p|}{|N|+k}\mc_{p_k}(\S \cup \S^p)\\
     &\stackrel{(1)}{\leq} (k+1)\sum_{\S\subseteq N\setminus\{i\}, \S^p\subseteq_{\neq \emptyset} \S^P\setminus\{p_k\}}\weight{|\S\cup \S^p|}{|N|+k}\epsilon\\
     &\stackrel{(2)}{\leq}(k+1)\epsilon,
\end{align*}
where (1) is due to the assumption on $\epsilon$, where $\forall{\emptyset\neq \S^p\subseteq \S^p\setminus\{p_k\}}, \S\subseteq N\setminus\{i\}, \mc_{p_k}(\S\cup \S^p) \leq \epsilon$. (2) is due to the definition of semivalues where the weights of coalitions sum to $1$. 
\end{proof}


\section{Algorithm~\ref{algo:fast_shapley}}\label{appendix:fast_shapley}

\begin{algorithm}[H]
	\caption{Efficient Shapley and Banzhaf value Computation for the Facility Location Game}
	\label{algo:fast_shapley}
	\begin{algorithmic}[1]	\Statex{\textbf{Input:} Locations $\loc$, customers $D$, utility matrix $U$}
	\Statex{\textbf{Output:} Shapley and Banzhaf value of all locations}
		\State{Sort the facility locations by (ascending) utility for each customer $d$, where  $U^{d\uparrow}$ is the sorted utility vector of customer $d$ and $\loc^{d\uparrow}$ are the sorted facility locations.}
		\For{each location $i \in \loc$}
		    \State{$l_{i}^d \leftarrow$ index of $i$ in $\loc^{d\uparrow}$, i.e.,  $l_{i}^d = |\loc_{id}| - 1$}
		    \State{$\vshapley_i \leftarrow \sum_{d\in D} [\frac{U_{id}}{n - l_i^d + 1} - \sum_{t=0}^{l_{id}}  \frac{U^{d\uparrow}_{l_i^d - t - 1}}{(n-l_i^d+t) + (n-l_i^d+t)^2}]$}
		    \State{$\vbanzhaf_i \leftarrow \frac{1}{2^{|\loc|-1}} \sum_{d\in D} [2^{|{l_i^d}| + 1} U_{id} - \sum_{t=0}^{l_i^d}  U^{d\uparrow}({l_i^d} - t + 1)]$}
		\EndFor
	\end{algorithmic} 
\end{algorithm}

\section{Proofs for Theorem~\ref{thm:facility_location_shapley}}\label{appendix:facility_location}

Before deriving the Shapley and Banzhaf value for the facility location game, we first need to show the following mathematical identity which will be used for the derivation. 
\begin{lemma}
    \label{lem:binom}
    \begin{align}
      \sum_{k=0}^m \frac{\binom{m}{k}}{\binom{n}{k}} = \frac{n+1}{n+1-m}
    \end{align}
\end{lemma}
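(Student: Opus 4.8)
The plan is to collapse the ratio of binomial coefficients into a single binomial coefficient, reindex, and then invoke the Hockey-Stick identity, which is already employed elsewhere in this paper (e.g.\ in the proof of Lemma~\ref{lemma:sum_alpha_properties}).

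First I would establish the elementary factorial identity
\[
\frac{\binom{m}{k}}{\binom{n}{k}} = \frac{\binom{n-k}{m-k}}{\binom{n}{m}}, \qquad 0 \le k \le m \le n,
\]
which follows by writing each side in terms of factorials, since both equal $\tfrac{m!\,(n-k)!}{(m-k)!\,n!}$. The point of this reformulation is that the factor $1/\binom{n}{m}$ no longer depends on the summation index $k$, so it can be pulled outside the sum.

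Next I would substitute this into the left-hand side and reindex with $i = m-k$, giving
\[
\sum_{k=0}^m \frac{\binom{m}{k}}{\binom{n}{k}} = \frac{1}{\binom{n}{m}}\sum_{k=0}^m \binom{n-k}{m-k} = \frac{1}{\binom{n}{m}}\sum_{i=0}^m \binom{n-m+i}{i}.
\]
Applying binomial symmetry $\binom{n-m+i}{i} = \binom{n-m+i}{n-m}$ puts the inner sum into Hockey-Stick form, so that $\sum_{i=0}^m \binom{n-m+i}{n-m} = \binom{n+1}{n-m+1} = \binom{n+1}{m}$. A final factorial simplification then closes the argument:
\[
\frac{\binom{n+1}{m}}{\binom{n}{m}} = \frac{(n+1)(n-m)!}{(n+1-m)!} = \frac{n+1}{n+1-m}.
\]

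The main obstacle is purely one of recognition: seeing that the $k$-dependent ratio can be rewritten so that the non-constant part becomes $\binom{n-m+i}{n-m}$, which is exactly the summand a Hockey-Stick telescoping consumes. Once the factorial identity of the first step is in hand, the reindexing and the remaining simplifications are entirely routine. As an alternative self-contained route, one could instead use the Beta-function representation $1/\binom{n}{k} = (n+1)\int_0^1 x^k(1-x)^{n-k}\,dx$; then $\sum_{k=0}^m \binom{m}{k} x^k (1-x)^{n-k} = (1-x)^{n-m}$ by the binomial theorem, and integrating gives $(n+1)\int_0^1 (1-x)^{n-m}\,dx = \tfrac{n+1}{n+1-m}$ directly, but I would favour the Hockey-Stick derivation for consistency with the rest of the paper.
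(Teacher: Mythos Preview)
Your proposal is correct and follows essentially the same approach as the paper: rewrite $\tbinom{m}{k}/\tbinom{n}{k}=\tbinom{n-k}{m-k}/\tbinom{n}{m}$, pull out the constant $\tbinom{n}{m}^{-1}$, evaluate $\sum_{k=0}^m\tbinom{n-k}{m-k}=\tbinom{n+1}{m}$ via the Hockey-Stick identity, and simplify the resulting ratio. The only difference is that the paper just calls this sum ``a common mathematical identity'' without the explicit reindexing and Hockey-Stick invocation you spell out, so your write-up is in fact a bit more detailed than the paper's sketch.
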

\begin{proof}[Proof Sketch.]
The identity can be shown in two steps: First, we show the identity $\frac{\tbinom{m}{k}}{\tbinom{n}{k}} = \frac{\tbinom{n-k}{m-k}}{\tbinom{n}{m}}$ by expansion of the terms. Then, we can take the denominator $\tbinom{n}{m}$ out of the summation over $k$, and as a common mathematical identity, the sum reduces to $\sum_{k=0}^m\tbinom{n-k}{m-k} = \tbinom{n+1}{m}$. Finally, by expanding the terms we arrive at $\frac{\tbinom{n+1}{m}}{\tbinom{n}{m}} = \frac{n+1}{n+1-m}.$
\end{proof}

\facilityshapley*

\begin{proof}
\textbf{(1) Proof for the Shapley value.}

    Let $v(\S) \coloneqq Fac(\S)$ and $n\coloneqq|\loc|$ as the number of players. Denote $w_{\S}$ as the weights of the Shapley value, i.e., $\vshapley_i = \sum_{\S\subseteq \loc\setminus\{i\}} w_{\S}\mc_i(\S)$, where $w_{\S} \coloneqq \frac{1}{n}\binom{n-1}{|\S|}^{-1}$. Observe that
    \begin{align}
      \varphi_i &= \sum_{\S \subseteq \loc \setminus\{i\}} w_{\S} \mc_i(\S)
         \stackrel{(*)}{=} \sum_{d\in D} \big[ \underbrace{\sum_{\S \subseteq \loc_{id}} w_{\S} u_{id}}_{(\#1)} - \underbrace{\sum_{\S \subseteq \loc_{id}} w_{\S} \max_{j \in \S} u_{jd}}_{(\#2)} \big],\nonumber
    \end{align}
    where $(*)$ is because the marginal contribution of $i$ for dimension $d$ is zero \emph{unless} $i$ is the largest element for that dimension and $\loc_{id}=\{j \in \loc \mid u_{jd} \leq u_{id}\}$ is the coalition of all elements which have smaller values in the $d$-th dimension than element $i$.
    
    Along each dimension $d$, $(\#1)$ is a weighted sum over coalitions $\S$ where $i$ is the largest element; and $(\#2)$ sums up for each $j\in \loc_{id}$ over all coalitions $\S\subseteq \loc_{id}$ where $j$ is the largest element. 
    We next compute $(\#1)$ and $(\#2)$ separately for each dimension $d\in D$:
    \begin{align*}
      (\#1) &=  \sum_{\S \subseteq \loc_{id}} w_{\S} u_{id} =  u_{id} \sum_{\S \subseteq \loc_{id}} w_{\S} \\
        &=  u_{id} \sum_{c=0}^{|\loc_{id}|} \sum_{\S \subseteq \loc_{id},|\S|=c} w_{\S} &&\textnormal{, where } w_{\S}\coloneqq \frac{1}{n}{\binom{n-1}{c}}^{-1}\\
        &= u_{id} \frac{1}{n} \sum_{c=0}^{|\loc_{id}|} {\binom{n-1}{c}}^{-1} \binom{|\loc_{id}|}{c} \\
        &\stackrel{(1)}{=}  u_{id} \frac{1}{n} \frac{n}{n-|\loc_{id}|} && \textnormal{, (1) by Lemma~\ref{lem:binom}}       \sum_{k=0}^m \frac{\binom{m}{k}}{\binom{n}{k}} = \frac{n+1}{n+1-m} \\
        &= u_{id} \frac{1}{n-|\loc_{id}|}
        , 
    \end{align*}
    
   Next we compute $(\#2)$. For simplicity, let $+,-$ denote set operations $\S\cup \{e\},\S\setminus \{e\}$, and denote $e^d_{it}$ is $t$-th largest element (after element $i$) in the $d$-th dimension.
    \begin{align*}
      (\#2) 
        &= \sum_{\S \subseteq \loc_{id}} w_{\S} \max_{j \in \S} u_{jd} \\
        &=  \sum_{\S \subseteq \loc_{id}-(e^d_{i1})} w_{\S+e^d_{i1}} u_{e^d_{i1}d} + \sum_{\S \subseteq \loc_{id}-(e^d_{i1}+e^d_{i2})} w_{\S+e^d_{i2}} u_{e^d_{i2}d} + \cdots \\
        &=  u_{e^d_{i1}d} \sum_{\S \subseteq \loc_{id}-(e^d_{i1})} w_{\S+e^d_{i1}} + u_{e^d_{i2}d} \sum_{\S\subseteq \loc_{id}-(e^d_{i1}+e^d_{i2})} w_{\S+e^d_{i2}} + \cdots  \\
        &=  u_{e^d_{i1}d} \underbrace{\sum_{\S \subseteq \loc_{id}-(e^d_{i1})} w_{\S+e^d_{i1}}}_{=: \beta_1} + u_{e^d_{i2}d} \underbrace{\sum_{\S \subseteq \loc_{id}-(e^d_{i1}+e^d_{i2})} w_{\S+e^d_{i2}}}_{=: \beta_2}  +\cdots
    \end{align*}
    \begin{align*}
      \textnormal{In particular, } \beta_t &= \sum_{\S \subseteq \loc_{id}-(e^d_{i1}+\ldots+e^d_{it})} w_{\S+e^d_{it}} \\
        &= \sum_{c=0}^{|\loc_{id}| - t }\sum_{\S \subseteq \loc_{id}-(e^d_{i1}+\ldots+e^d_{it}), |\S|=c} w_{\S+e^d_{it}} \\
        &= \frac{1}{n} \sum_{c=0}^{|\loc_{id}|-t } \binom{n-1}{c+1}^{-1} \binom{|\loc_{id}|-t}{c} \\
        &\stackrel{(1)}= \frac{1}{n} \sum_{c=0}^{|\loc_{id}|-t } \binom{n-1}{c+1}^{-1} \big[ \binom{|\loc_{id}|-t+1}{c+1} - \binom{|\loc_{id}|-t}{c+1} \big] \\
        &\stackrel{(2)}{=} \frac{1}{n} \sum_{x=1}^{|\loc_{id}|-t+1 } \binom{n-1}{x}^{-1} \big[ \binom{|\loc_{id}|-t+1}{x} - \binom{|\loc_{id}|-t}{x} \big] \\
        &\stackrel{(3)}{=}
         \frac{1}{n} \big[ \frac{n}{n-|\loc_{id}|+t-1} - 1 - \frac{n}{n-|\loc_{id}|+t} + 1 \big] \\
        &= \frac{1}{\lambda(t)+{\lambda(t)}^2} 
        ,
    \end{align*} 
    where $(1)$ is by Pascal's identity, $(2)$ by substituting $x=c+1$, $(3)$ by Lemma~\ref{lem:binom} and observing that $\binom{n}{k}$ is zero for $k>n$, and where $\lambda(t) = n-|\loc_{id}|+t-1$.
     \begin{align*}
      &\textnormal{Hence, }\val_i = \sum_{d\in D} \big[ u_{id} \frac{1}{n-|\loc_{id}|} - \sum_{t=1}^{|\loc_{id}|} \frac{1}{\lambda(t)+\lambda(t)^2} u_{e_{it}^d} \big].\qedhere
     \end{align*}

\textbf{(2) Proof for the Banzhaf value.}

  Similar to the proof for the Shapley value, we expand the Banzhaf value as follows:
    \begin{align*}
      \varphi(i) &= \sum_{\S \subseteq \loc - i} w_{\S} \mc_i(\S)
        = \sum_{d\in D} \big[ \underbrace{\sum_{\S \subseteq \loc_{id}} w_{\S} u_{id}}_{(\#1)} - \underbrace{\sum_{\S \subseteq \loc_{id}} w_{\S} \max_{j \in \S} u_{jd}}_{(\#2)} \big].
    \end{align*}
 By definition of the Banzhaf value $w_{\S} \coloneqq \frac{1}{2^{n-1}}$, next we compute $\#1$ and $\#2$.
    \begin{align*}
      (\#1) &= \sum_{d\in D} \sum_{\S \subseteq \loc_{id}} w_{\S} u_{id} \\
        &= \sum_{d\in D} u_{id} \sum_{\S \subseteq \loc_{id}} w_{\S} \\
        &= \sum_{d\in D} u_{id} \sum_{c=0}^{|\loc_{id}|} \sum_{\S \subseteq \loc_{id},|\S|=c} w_{\S} \\
        &= \sum_{d\in D} u_{id} \frac{1}{2^{n-1}}\sum_{c=0}^{|\loc_{id}|} \binom{|\loc_{id}|}{c} \\
        &= \frac{1}{2^{n-1}}\sum_{d\in D} 2^{|\loc_{id}|}u_{id}
    \end{align*}
    We then expand $(\#2)$ in a similar approach to the Shapley value (for notations c.f. above theorem),
    \begin{align*}
      (\#2) 
        = \sum_{\S \subseteq \loc_{id}} w_{\S} \max_{j \in \S} u_{jd} 
        =  u_{e^d_{i1}d} \underbrace{\sum_{\S \subseteq \loc_{id}-(e^d_{i1})} w_{\S+e^d_{i1}}}_{=: \beta_1} + u_{e^d_{i2}d} \underbrace{\sum_{\S \subseteq \loc_{id}-(e^d_{i1}+e^d_{i2})} w_{\S+e^d_{i2}}}_{=: \beta_2}  +\cdots
    \end{align*}
    \begin{align*}
      \textnormal{ where }\beta_t &= \sum_{\S \subseteq \loc_{id}-(e^d_{i1}+\ldots+e^d_{it})} w_{\S+e^d_{it}} \\
        &= \sum_{c=0}^{|\loc_{id}| - t }\sum_{\S \subseteq \loc_{id}-(e^d_{i1}+\ldots+e^d_{it}), |\S|=c} w_{\S+e^d_{it}} \\
        &= \frac{1}{2^{n-1}} \sum_{c=0}^{|\loc_{id}|-t } \binom{|\loc_{id}|-t}{c} \\
        &= \frac{1}{2^{n-1}} 2^{|\loc_{id}|-t}
    \end{align*} 
     \begin{align*}
      \textnormal{  Hence, } \val_i &= \frac{1}{2^{n-1}}\sum_{d\in D} \big[ 2^{|\loc_{id}|}u_{id} - \sum_{t=1}^{|\loc_{id}|} 2^{|\loc_{id}| -t} u_{e_{it}^d} \big]. \qedhere
     \end{align*}
  \end{proof}

\end{document}